\newtheorem{assumption}{Assumption}[theorem]
\renewcommand{\P}{\mathbb{P}}
\newcommand{\Q}{\mathbb{Q}}
\newcommand{\R}{\mathbb{R}}
\newcommand{\bR}{\mathbb{R}}
\newcommand{\E}{\mathbb{E}}
\newcommand{\N}{\mathbb{N}}
\newcommand{\F}{\mathcal{F}}
\newcommand{\B}{\mathcal{B}}
\newcommand{\cW}{\mathcal{W}}
\newcommand{\cX}{\mathcal{X}}
\newcommand{\cC}{\mathcal{C}}
\newcommand{\cP}{\mathcal{P}}
\newcommand{\cL}{\mathcal{L}}
\newcommand{\eps}{\varepsilon}
\newcommand{\JSD}{\operatorname{JSD}}
\newcommand{\KL}{D_{\operatorname{KL}}}
\newcommand{\rhod}{\rho_{\operatorname{d}}}
\newcommand{\mud}{\mu_{\operatorname{d}}}
\newcommand{\Div}{\operatorname{div}}
\newcommand{\TV}{\operatorname{TV}}
\newcommand{\Leb}{\operatorname{Leb}}
\newcommand{\norm}[1]{\| #1\|}
\newcommand{\ab}[1]{\langle #1\rangle}
\newcommand{\lbar}[1]{\underline{#1}}
\newcommand{\ubar}[1]{\overline{#1}}
\begin{document}

\title{GANs as Gradient Flows that Converge}

\author{\name Yu-Jui Huang \email yujui.huang@colorado.edu \\
       \addr Department of Applied Mathematics\\
       University of Colorado\\
       Boulder, CO 80309-0526, USA
       \AND
       \name Yuchong Zhang \email yuchong.zhang@utoronto.ca \\
       \addr Department of Statistical Sciences\\
       University of Toronto\\
       Toronto, ON M5G 1Z5, Canada}


\maketitle

\begin{abstract}
This paper approaches the unsupervised learning problem by gradient descent in the space of probability density functions. A main result shows that along the {\it gradient flow} induced by a distribution-dependent ordinary differential equation (ODE), the unknown data distribution emerges as the long-time limit. That is, one can uncover the data distribution by simulating the distribution-dependent ODE. Intriguingly, the simulation of the ODE is shown equivalent to the training of generative adversarial networks (GANs). This equivalence provides a new ``cooperative'' view of GANs and, more importantly, sheds new light on the divergence of GANs. In particular, it reveals that the GAN algorithm implicitly minimizes the mean squared error (MSE) between two sets of samples, and this MSE fitting alone can cause GANs to diverge.
To construct a solution to the distribution-dependent ODE, we first show that the associated nonlinear Fokker-Planck equation has a unique weak solution, by the Crandall-Liggett theorem for differential equations in Banach spaces. Based on this solution to the Fokker-Planck equation, we construct a unique solution to the ODE, using Trevisan's superposition principle. The convergence of the induced gradient flow to the data distribution is obtained by analyzing the Fokker-Planck equation.  
\end{abstract}

\begin{keywords}
Unsupervised learning, generative adversarial networks, gradient flows, distribution-dependent ODEs, nonlinear Fokker-Planck equations.
\end{keywords}

\section{Introduction}\label{sec:intro}
A central theme in machine learning is to uncover the underlying distribution of observed data points. Mathematically, this can be stated as 
\begin{equation}\label{unsupervised}
\min_{\rho\in\cP(\R^d)} d(\rho,\rhod),
\end{equation}
where $\cP(\R^d)$ is the set of probability density functions on $\R^d$, $\rhod\in\cP(\R^d)$ is the {\it unknown} data distribution, and $d(\cdot,\cdot)$ is a metric (or semi-metric) on $\cP(\R^d)$.
Unlike traditional parameter fitting by maximum likelihood methods, {\it generative adversarial networks} (GANs) in \cite{Goodfellow14} view \eqref{unsupervised} as a two-player non-cooperative game:  a {\it generator} actively produces samples similar to the real data and a {\it discriminator} strives to distinguish between the real and fake samples. 

In theory, GANs take the form of a min-max game where the discriminator chooses $D:\R^d\to [0,1]$ to maximize its chance of differentiating real samples from fake ones, while the generator, who observes its opponent's choice $D$, selects $G:\R^n\to\R^d$ (for a fixed $n\le d$) to neutralize the effect of $D$. In \cite{Goodfellow14}, this minimax problem is shown equivalent to \eqref{unsupervised} with $d(\cdot,\cdot)=\JSD(\cdot,\cdot)$, the Jensen-Shannon divergence; further, the generator's optimal strategy $G^*:\R^n\to\R^d$ exists and it reveals the data distribution $\rhod$. 

What is {\it not} answered by the theory of GANs is whether (and how) an initial guess $\rho_0\in\cP(\R^d)$ (or, an initial $G:\R^n\to\R^d$) can evolve gradually towards $\rhod\in\cP(\R^d)$ (or, $G^*:\R^n\to\R^d$). The GAN algorithm \cite[Algorithm 1]{Goodfellow14} serves to find this ``path to $\rhod$'' numerically: by using two artificial neural networks to represent the functions $G$ and $D$, it updates their parameters repetitively, through a recursive optimization performed sequentially by the two players. While the GAN algorithm has brought significant improvements to artificial intelligence applications (see e.g., \cite{Denton15}, 
\cite{Vondrick16}, \cite{Ledig17}, \cite{Wiese20}), 
it is {\it not} as stable 
as we wish---while a ``path'' can always be computed, it often diverges 
away from $\rhod$. 

Attempts to improve the stability of GANs are numerous, such as modifying training procedures (e.g., \cite{Salimans16}, \cite{Heusel17}, \cite{AB17}), adding regularizing terms (e.g., \cite{Gulrajani17}, \cite{Miyato18}), or using different metrics $d(\cdot,\cdot)$ (e.g., 
Wasserstein distance in \cite{WGAN}, maximum mean discrepancy in \cite{MMDGAN2} and \cite{Binkowski18}, characteristic function distance in \cite{Ansari20}). 
All the studies approached the stability issue at the {\it algorithmic} level, i.e., by inspecting the details of the recursive optimization procedures. 


In this paper, we give a complete characterization of the ``path to $\rhod$'' at the {\it theoretic} level.  In a nutshell, this ``path'' will be characterized as a {\it gradient flow} in the space $\cP(\R^d)$, whose evolution is governed by a {\it distribution-dependent} ordinary differential equation (ODE); see Theorem~\ref{thm:main} and Proposition~\ref{prop:equivalence}, the main results of this paper. This theoretic characterization has important practical implications. 

First, it sheds new light on the divergence of GANs. By the gradient-flow identification of GANs, we are able to decompose the GAN algorithm and discover an unapparent fact: the algorithm implicitly minimizes the mean squared error (MSE) between two sets of samples in $\R^d$. While the MSE fitting demands {\it point-wise} similarity (i.e., the $i^{th}$ sample in one set is similar to the $i^{th}$ sample in the other set), what we actually need is only {\it set-wise} similarity (i.e., the distribution on $\R^d$ of the samples in one set is similar to that of the samples in the other set). As it imposes too stringent a criterion, the MSE fitting may keep  producing inferior $G:\R^n\to\R^d$, such that $\rhod$ is never approached. 
This observation suggests a new potential route to alleviate instability: replacing the MSE fitting in the GAN algorithm by a measure of set-wise similarity; see Section~\ref{subsec:mode collapse} for detailed explanations.


Our main results, in addition, yield a new interpretation for GANs: the {\it non-cooperative} game between the generator and discriminator can be equivalently viewed as a {\it cooperative} game between a {\it navigator} and a {\it calibrator}. The {navigator} aims to navigate across the space $\cP(\R^d)$ following the aforementioned distribution-dependent ODE, and the {calibrator} serves to ``calibrate'' the ODE's dynamics. 
To the best of our knowledge, a possible ``cooperative'' view of GANs was only briefly touched on in \cite{Goodfellow16}. We materialize this idea with great specifics: the two players 
collaborate precisely to simulate an ODE that ensures smooth sailing to $\rhod$; 
see the discussion below Proposition~\ref{prop:equivalence} for details. \\
\vspace{-0.1in}

\noindent
{\bf Our Methodology.} As in \cite{Goodfellow14}, we take $d(\cdot,\cdot) = \JSD(\cdot,\cdot)$ in \eqref{unsupervised}. Our first observation is that $J(\rho):=\JSD(\rho,\rhod)$ is {\it strictly convex} on $\cP(\R^d)$, which suggests that \eqref{unsupervised} can potentially be solved by (deterministic) gradient descent in $\cP(\R^d)$. By taking the ``gradient of $J$ at $\rho\in\cP(\R^d)$'' to be $\nabla \frac{\delta J}{\delta \rho}(\rho,\cdot):\R^d\to\R^d$, where $\frac{\delta J}{\delta \rho}:\cP(\R^d)\times\R^d\to\R$ is the linear functional derivative of $J$ (Definition~\ref{def:delta G}) and $\nabla$ is the usual gradient operator in $\R^d$, we derive the gradient-descent ODE for \eqref{unsupervised}, which is given by
\begin{equation}\label{Y GAN}
dY_t 
=-\frac12 \left(\frac{\nabla\rho^{Y_t}(Y_t)}{\rho^{Y_t}(Y_t)} - \frac{\nabla\rhod(Y_t)+\nabla\rho^{Y_t}(Y_t)}{\rhod(Y_t)+\rho^{Y_t}(Y_t)}\right)dt,\quad  \rho^{Y_0}=\rho_0 \in\cP(\R^d).
\end{equation}
This ODE is distribution-dependent in a {distinctive} way. Unlike a typical McKean-Vlasov stochastic differential equation (SDE) that depends explicitly on $\mathcal L(Y_t)$, the law of the state process $Y$ at time $t$, \eqref{Y GAN} depends on the density $\rho^{Y_t}\in\cP(\R^d)$ of $Y_t$ and its gradient $\nabla \rho^{Y_t}$. The classical framework of interacting particle systems, crucial for the existence of solutions to McKean-Vlasov SDEs, does not easily accommodate this kind of dependence. 


In view of this, we take an unconventional approach motivated by \cite{BR20}:  by finding a weak solution $u(t,y)$ to the nonlinear Fokker-Planck equation associated with ODE \eqref{Y GAN}, one can in turn construct a solution $Y$ to \eqref{Y GAN}, made specifically from $u(t,y)$. 

To solve the nonlinear Fokker-Planck equation (i.e., \eqref{eq:FP} below), we transform it into a nonlinear Cauchy problem (i.e., \eqref{eq:v} below) in an appropriate Banach space. By showing that the involved operator is ``accretive'' (Lemma~\ref{lem:accretive}), we are able to find a weak solution on the strength of Crandall-Liggett's theorem for partial differential equations (PDEs) in Banach spaces; see Proposition~\ref{prop:CL thm}. The uniqueness of solutions is also established by a delicate generalization of \cite{BC79}, which leads to the characterization of a unique weak solution $u(t,y)$ in Theorem~\ref{thm:FPE}. 

By substituting $u(t,\cdot)$ for the density $\rho^{Y_t}(\cdot)$ in \eqref{Y GAN}, we obtain a standard ODE {\it without} distribution dependence (i.e., \eqref{Y GAN'} below). To construct a solution to \eqref{Y GAN}, we aim at finding a process $Y$ such that (i) $t\mapsto Y_t$ satisfies the above standard ODE, and (ii) the density of $Y_t$ exists and coincides with $u(t,\cdot)$, i.e., $\rho^{Y_t}(\cdot)=u(t,\cdot)\in\cP(\R^d)$, for all $t\ge 0$. Thanks to the superposition principle of \cite{Trevisan16}, we can choose a probability measure $\P$ on the canonical path space, so that the canonical process $Y_t(\omega):=\omega(t)$ fulfills (i) and (ii) above. This immediately gives a solution to \eqref{Y GAN} (Proposition~\ref{prop:solution Y}) and it is in fact unique up to time-marginal distributions under suitable regularity conditions (Proposition~\ref{prop:uniqueness Y}). 


Let us stress that while \eqref{Y GAN} takes the form of an ODE, it is nontrivial to find a suitable notion of solutions, due to two  kinds of randomness intertwined at time 0; see the second paragraph in Section~\ref{sec:ODE solution}. Ultimately, we define a solution to \eqref{Y GAN} using a random selection of deterministic paths, represented by a probability measure $\P$ on the canonical path space (Definition~\ref{def:solution Y}). This formulation perfectly fits Trevisan's superposition principle, and is reminiscent of L.C. Young's generalized curves in the deterministic theory and weak solutions to SDEs (despite our restriction to the path space with only $\P$ varying). 

With the unique solution $Y$ to ODE \eqref{Y GAN} characterized, it is time to check if our ``gradient descent'' idea for \eqref{unsupervised} works. Two key findings stem from a detailed analysis of the transformed Fokker-Planck equation (i.e., \eqref{eq:v} below). First, along the path of $Y$, the map $t\mapsto J(\rho^{Y_t}) = \JSD(\rho^{Y_t},\rhod)$ is nonincreasing (Proposition~\ref{prop:dJ/dt}), i.e., $\rho^{Y_t}$ moves closer to $\rhod$ continuously over time. Second, 
$\rho^{Y_{t_n}}\to \rhod$ in $L^1(\R^d)$, at least along a sequence $\{t_n\}$ in time with $t_n\uparrow\infty$ (Corollary~\ref{coro:v to 1}).  Putting these together gives the ultimate convergence result: $\rho^{Y_{t}}\to \rhod$ in $L^1(\R^d)$ as $t\to\infty$; see Theorem~\ref{thm:main}. That is to say, we can uncover $\rhod$ along the {\it gradient flow} $\{\rho^{Y_t}\}_{t\ge 0}$ in $\cP(\R^d)$, specified by the gradient-descent ODE \eqref{Y GAN}. 

Algorithm~\ref{alg:ODE} is designed to simulate ODE \eqref{Y GAN}. Intriguingly, it is shown equivalent to the GAN algorithm (Proposition~\ref{prop:equivalence}), which yields the ``cooperative'' view of GANs mentioned above: the generator and discriminator now take the roles of a {navigator} and {calibrator}, respectively, working closely to simulate ODE \eqref{Y GAN}; 
see the exposition in Section~\ref{subsec:simulation}. \\
\vspace{-0.1in}

\noindent
{\bf Relations to the Literature.}
In this paper, we work with the general space $\cP(\R^d)$ under the total variation distance. This notably differs from the classical setting---$\cP_2(\R^d)$ under the second-order Wasserstein distance $\cW_2$, where $\cP_2(\R^d)$ is the set of elements in $\cP(\R^d)$ with finite second moments. We take up the non-standard setup for two main reasons. 
First, working with $\cP_2(\R^d)$ implicitly assumes $\rhod\in\cP_2(\R^d)$, while $\rhod$, as the underlying data distribution, generally lies in $\cP(\R^d)$. Second, using the total variation distance ensures the ``right'' kind of convergence. 
As we chose $\JSD(\cdot,\cdot)$ in the first place to measure the distance between densities, such a distance function should be kept consistently throughout our analysis. 
As shown in Remark~\ref{rem:equivalence}, convergence in $\cP(\R^d)$ under $\JSD(\cdot,\cdot)$ is equivalent to that under total variation (or, under $L^1(\R^d)$). Hence, ``$\rho^{Y_t}\to \rhod$ in $L^1(\R^d)$'' in Theorem~\ref{thm:main} readily implies ``$\JSD(\rho^{Y_t},\rhod)\to 0$''. That is, convergence to $\rhod\in\cP(\R^d)$ is not only achieved, but achieved {\it specifically} under the desired  distance function originally chosen. 

Working with $\cP(\R^d)$ (under total variation) deprives us of the full-fledged theory of $\cP_2(\R^d)$ (under $\cW_2$), developed in e.g., \cite{Ambrosio-book-05}, \cite{CDLL15}, \cite{CD-book-18-I}, and \cite{DLR19}. First, it is unclear how to define the ``gradient'' of $J(\rho)= \JSD(\rho,\rhod)$ in $\cP(\R^d)$, as the standard Lions and Wasserstein derivatives are only defined in $\cP_2(\R^d)$ (or in $\cP_q(\R^d)$ for $q\in[1,\infty)$, which consists of elements in $\cP(\R^d)$ with finite $q^{th}$ moments). Since $\nabla \frac{\delta J}{\delta \rho}$ admits a gradient-type property (Proposition~\ref{prop:the gradient}) and it is well-defined in $\cP(\R^d)$ without relying on the $\cP_2(\R^d)$ structure (Lemma~\ref{lem:delta J}), we take it to be the proper ``gradient'' in $\cP(\R^d)$. While $\nabla \frac{\delta J}{\delta \rho}$ is more general, it coincides with the Lions and Wasserstein derivatives when restricted to $\cP_2(\R^d)$; see Section~\ref{subsec:Lions derivative}. 
On the other hand, to compute the evolution $t\mapsto J(\rho^{Y_t})$ and show its convergence to $J(\rhod)=0$, we do not employ any It\^{o}-type formula or compactness argument (which are unique to $\cP_2(\R^d)$), but rely on the aforementioned analysis of the transformed Fokker-Planck equation (Proposition~\ref{prop:dJ/dt} and Corollary~\ref{coro:v to 1}); see Remarks~\ref{rem:no Ito's} and \ref{rem:no compactness} for details.

Gradient flows have recently been used in many implicit generative models; see e.g., 
\cite{Gao19}, \cite{Gao20}, \cite{Ansari21}, and \cite{MN21}. Their algorithms are advocated as alternatives to GANs---in particular, the minimization part of GANs is replaced by a kind of gradient descent. Contrary to the common practice ``modifying GANs to form a gradient-flow algorithm'', Proposition~\ref{prop:equivalence} shows that the GAN algorithm itself, without any modification, already computes gradient flows. What sets us apart is a subtle difference in algorithm design. While our gradient-flow algorithm (i.e., Algorithm~\ref{alg:ODE}) resembles those in \cite{Gao19} and \cite{Gao20}, we particularly coordinate the estimation of $D$ with the gradient-descent update of $G$, so that the {\it zero-sum} game setup of GANs can be recovered. The algorithms in \cite{Gao19} and \cite{Gao20}, by contrast, represent {\it non-zero-sum} games. 

In the recent development of gradient-flow algorithms mentioned above, a common (yet unnoticed) issue is the ``{\it inconsistent use of distance functions}'': one proposes to solve \eqref{unsupervised} with a specific choice of $d(\cdot,\cdot)$ (e.g., an $f$-divergence),  
but carries out theoretic proofs under a different distance function (e.g., $\cW_2$). 
This can be problematic: even if $\cW_2(\rho^{Y_t},\rhod)\to 0$ is proved, since the metric $\cW_2$ is {\it not} equivalent to an $f$-divergence, $\rho^{Y_t}$ need not converge to $\rhod$ under an $f$-divergence, the {\it desired} distance function initially chosen. 
As noted above, Theorem~\ref{thm:main} settles this issue, as the target distance function (i.e., $\JSD(\cdot,\cdot)$) is equivalent to the actual distance function in use (i.e., the total variation distance). \\

\vspace{-0.1in}

\noindent
{\bf Organization of the paper.}
Section~\ref{sec:preliminaries} formulates a proper ``gradient'' in $\cP(\R^d)$ and derives the gradient-descent ODE \eqref{Y GAN}. Section~\ref{sec:main results} presents the main results (Theorem~\ref{thm:main} and Proposition~\ref{prop:equivalence}) and discusses their implications, including the identification of GANs with gradient flows and the cause of divergence of GANs. Section~\ref{sec:FP} derives the nonlinear Fokker-Planck equation associated with ODE \eqref{Y GAN} and establishes the existence of a unique weak solution $u(t,y)$. Based on $u(t,y)$, Section~\ref{sec:ODE solution} constructs a solution $t\mapsto Y_t$ to ODE \eqref{Y GAN} and shows that it is unique up to time-marginal distributions. Section~\ref{sec:to rhod} proves Theorem~\ref{thm:main}, i.e., the density of $Y_t$ converges to $\rhod$, as $t\to\infty$. 
Appendices collect some proofs.\\ 
\vspace{-0.1in}

\noindent
{\bf Notation.}
Let $\cP(\R^d)$ be the set of all probability density functions on $\R^d$. For any $q\in[1,\infty)$, let $\cP_q(\R^d)$ be the subset of $\cP(\R^d)$ that contains density functions with finite $q^{th}$ moments. We denote by $\mud$ the probability measure on $\R^d$ induced by the underlying data distribution $\rhod\in\cP(\R^d)$. For any $\R^d$-valued random variable $Y$, we denote by $\rho^Y\in \cP(\R^d)$ the density function of $Y$ (if it exists). Consider the second-order Wasserstein distance $\cW_2(\rho_1,\rho_2) := (\inf \{\E[|X_1-X_2|^2] : \rho^{X_1}=\rho_1,\rho^{X_2}=\rho_2\})^{1/2}$, for $\rho_1,\rho_2\in\cP_2(\R^d)$. 

Given $n\in\N$, $S\subseteq \R^n$, and a measure $\nu$ on $\R^n$, we let $L^p(S, \nu)$, $p\in[1,\infty)$, be the set of $f:S\to\R$ with $\norm{f}^p_{L^p(S, \nu)} := \int_{S} |f|^p d\nu <\infty$, and $L^\infty(S, \nu)$ be the set of $f:S\to\R$ that are bounded $\nu$-a.e. Also, we denote by $W^{1,p}(S, \nu)$ the set of $f:S\to\R$ whose weak derivatives of first order exist and lie in $L^p(S, \nu)$, and by $W^{1,p}_0(S, \nu)$ the closure of $C^\infty_c(S)$ (the set of infinitely differentiable $f:S\to\R$ with compact supports) under the $W^{1,p}(S, \nu)$-norm. For $p=2$, we write $H^1(S, \nu)=W^{1,2}(S, \nu)$ and $H^1_0(S, \nu)=W^{1,2}_0(S, \nu)$.
When $\nu$ is the Lebesgue measure (denoted by $\Leb$), we shorten the notation to $L^p(S)$, $W^{1,p}(S)$, and $H^1(S)$, etc. We denote by $\nabla$ the Euclidean gradient operator with respect to (w.r.t.) $y\in\R^n$. 

Let $\cX$ be an arbitrary collection of $f:S\to\R$. 
For any $u:[0,\infty)\to\cX$, we will often abuse the notation by writing $u(t,\cdot)= (u(t))(\cdot)\in\cX$ for all $t\ge 0$ and treat $u(t,y) = (u(t))(y)$ as a generic function on $[0,\infty)\times S$.


\section{Preliminaries}\label{sec:preliminaries}
We study \eqref{unsupervised} with $d(\cdot,\cdot)$ therein taken to be the {\it Jensen-Shannon divergence}, i.e., 
\begin{align}\label{JSD}
\JSD(\rho,\bar\rho) &:= \frac12 \KL\left(\bar\rho\ \middle\|\frac{\bar\rho+\rho}{2}\right)+ \frac12 \KL\left(\rho\ \middle\|\frac{\bar\rho+\rho}{2}\right),\quad \forall \rho,\bar\rho\in \cP(\R^d),
\end{align} 
where $\KL$ denotes the {\it Kullback-Leibler divergence}, defined by
\begin{equation}\label{D_KL}
\KL(\rho\|\bar\rho):= \int_{\R^d}\rho(x)\ln\left(\frac{\rho(x)}{\bar\rho(x)}\right) dx,\quad \forall \rho,\bar\rho\in \cP(\R^d). 
\end{equation}
As shown in \cite{ES03} and Theorem 1 in \cite{OV03}, $\sqrt{\JSD(\cdot,\cdot)}$ defines a metric on $\cP(\R^d)$, so that $\JSD(\cdot,\cdot)$ is a semi-metric. The problem \eqref{unsupervised} can now be stated as 
\begin{equation}\label{J}
\hbox{minimize}\quad J(\rho):= \JSD(\rho,\rhod)\quad \hbox{over}\ \cP(\R^d).
\end{equation}

\begin{remark}
\label{rem:boundedness}
For any $\rho,\bar\rho\in\cP(\R^d)$, $0\le \JSD(\rho,\bar\rho) \le \ln(2)$. This follows directly from 
\begin{align*}
\KL(\rho\|\bar\rho) &= - \int_{\R^d}\rho(x)\ln\left(\frac{\bar\rho(x)}{\rho(x)}\right) dx \ge -\int_{\R^d}\rho(x)\left(\frac{\bar\rho(x)}{\rho(x)}-1\right) dx=0,\\
\KL\left(\rho\ \middle\|\frac{\rho+\bar\rho}{2}\right) &= \int_{\R^d}\rho(x)\ln\left(\frac{2\rho(x)}{\rho(x)+\bar\rho(x)}\right) dx\le \int_{\R^d}\rho(x)\ln\left(\frac{2\rho(x)}{\rho(x)}\right) dx = \ln(2),
\end{align*}
where we used the fact $\ln(x)\le x-1$ $\forall x>0$ in the first inequality. 
\end{remark}

\begin{remark}
\label{rem:equivalence}
The metric $\sqrt{\JSD(\cdot,\cdot)}$ on $\cP(\R^d)$ is equivalent to the total variation distance
\begin{align}\label{TV}
\TV(\rho,\bar\rho) &:= \sup_{A\in\B(\R^d)} \left|\int_{A} \rho(x) dx - \int_{A} \bar\rho(x) dx \right|\quad \forall \rho,\bar\rho\in\cP(\R^d).
\end{align}
This follows from $\TV(\rho,\bar\rho)= \frac12 \|\rho-\bar\rho\|_{L^1(\R^d)}$ (Lemma~\ref{lem:equivalence}) and the estimate
\begin{equation}\label{equivalence}
\phi\left(\frac12 \|\rho-\bar\rho\|_{L^1(\R^d)}\right) \le2 \JSD(\rho,\bar\rho) \le {\ln(2)} \|\rho-\bar\rho\|_{L^1(\R^d)},\quad\forall \rho, \bar\rho\in\cP(\R^d), 
\end{equation}
where $\phi:[0,1]\to\R_+$ is strictly increasing with $\phi(0)=0$; see Theorem 2 (with $\beta=1$ therein) in \cite{OV03}. 
\end{remark}
 

\begin{remark}\label{lem:s convex}
For any fixed $\bar\rho\in\cP(\R^d)$, $\JSD(\cdot,\bar\rho):\cP(\R^d)\to\R$ is {\it strictly convex}. Indeed, by \cite{NN14} (see Table I therein), $\JSD(\cdot,\cdot)$ can be expressed as
\begin{equation}\label{JSD f}
\JSD(\rho_1,\rho_2) = \int_{\R^d} f\left(\frac{\rho_1}{\rho_2}\right)\rho_2 dy,\quad \forall \rho_1,\rho_2\in\cP(\R^d),
\end{equation}
with $f:[0,\infty)\to \R$ given by $f(x):= \frac12\big[ (x+1)\ln(\frac{2}{x+1})+x\ln x\big]$. For any $\rho_1,\rho_2, \bar\rho\in\cP(\R^d)$ and $\lambda\in (0,1)$, from \eqref{JSD f} and the strict convexity of $f$, we quickly obtain $\JSD(\lambda \rho_1 +(1-\lambda)\rho_2,\bar\rho) < \lambda \JSD(\rho_1,\bar\rho) + (1-\lambda) \JSD(\rho_2,\bar\rho)$, i.e., $\JSD(\cdot,\bar\rho)$ is {\it strictly convex}.
\end{remark}




\subsection{Gradient Descent for Functions on $\cP(\R^d)$}\label{subsec:GD}
For a strictly 
convex $f:\R^d\to \R$, it is well-known that the global minimizer $y^*\in\R^d$ of $f$, if it exists, can be found efficiently by (deterministic) gradient descent in the space $\R^d$. Specifically, for any initial point $y\in\R^d$, the ODE
\begin{equation}\label{GD}
d Y_t = -\nabla f(Y_t) dt,\quad Y_0=y \in\R^d,
\end{equation}
converges to $y^*$ as $t\to\infty$. Given that $\JSD(\cdot,\rhod):\cP(\R^d)\to\R$ is strictly convex (Remark~\ref{lem:s convex}), it is natural to ask if its minimizer, i.e., $\rhod\in\cP(\R^d)$, can also be found by gradient descent, now in the space $\cP(\R^d)$. The ultimate question is how we should modify ODE \eqref{GD} to accommodate a function defined on $\cP(\R^d)$---particularly, how its gradient should be defined. 

Given a strictly convex $G: \cP(\R^d)\to\R$, we expect that the corresponding gradient-descent ODE takes the form
\begin{equation}\label{GD'}
d Y_t = -\partial_\rho G(\rho^{Y_t},Y_t) dt,\quad \rho^{Y_0}=\rho_0 \in\cP(\R^d),
\end{equation}
where $\partial_\rho G(\rho^{Y_t},\cdot):\R^d\to \R^d$ denotes the ``gradient of $G$ at $\rho^{Y_t}\in\cP(\R^d)$'', which remains to be defined. As $Y_0$ is specified by a density $\rho_0 \in\cP(\R^d)$ (but {\it not} as a fixed state $y\in\R^d$ in \eqref{GD}), $Y_t$ is a random variable, with density $\rho^{Y_t}\in\cP(\R^d)$, for all $t\ge 0$. That is, \eqref{GD'} is now a {\it distribution-dependent} ODE. At any time $t\ge 0$, given the current density $\rho^{Y_t}\in\cP(\R^d)$,  $-\partial_\rho G(\rho^{Y_t},\cdot):\R^d\to\R^d$ dictates the direction along which each $y\in\R^d$ (i.e., a realization of $Y_t$) moves forward. Specifically, if we discretize ODE \eqref{GD'} with a fixed time step $\eps>0$, a realization $y\in\R^d$ of $Y_t$ is transported to 
\begin{equation}\label{y to bar y}
\bar y: = y-\eps\partial_\rho G(\rho^{Y_t},y)\in\R^d,
\end{equation}
which represents a realization of $Y_{t+\eps}$. It is highly nontrivial, however, to define the ``gradient'' $\partial_\rho G(\rho^{Y_t},\cdot)$. As the domain $\cP(\R^d)$ of $G$ is not even a vector space, differentiation cannot be easily defined in the usual Fr\'{e}chet or Gateaux sense. 

A natural idea is to rely on the convexity of $\cP(\R^d)$: for any $\rho,\bar\rho\in\cP(\R^d)$, we can move from $\rho$ to $\bar \rho$ along the line segment $\{(1-\lambda)\rho+\lambda \bar\rho :\lambda\in[0,1]\}$ in $\cP(\R^d)$ and study how $G$ changes along the way. This leads to the notion of a {\it linear functional derivative}.  

\begin{definition}\label{def:delta G} 
A linear functional derivative of $G:\cP(\R^d)\to \R$ is a function $\frac{\delta G}{\delta \rho}:\cP(\R^d)\times\R^d\to\R$ that satisfies
\begin{equation}\label{delta G}
G(\bar\rho)-G(\rho)= \int_0^1 \int_{\R^d} \frac{\delta G}{\delta \rho}\big((1-\lambda)\rho+\lambda \bar\rho,y\big) (\bar\rho-\rho)(y)dy d\lambda,\quad \forall \rho,\bar\rho\in\cP(\R^d). 
\end{equation}
\end{definition} 
Under appropriate growth and continuity conditions of $\frac{\delta G}{\delta \rho}$, \eqref{delta G} implies
\[
\lim_{\eps\to 0} \frac{G(\rho+\eps(\bar \rho-\rho)) - G(\rho)}{\eps} = \int_{\R^d} \frac{\delta G}{\delta \rho}\big(\rho,y\big) (\bar\rho-\rho)(y)dy.
\]
That is, $\frac{\delta G}{\delta \rho}(\rho,\cdot)$ can be viewed as the ``gradient of $G$'' if one moves along straight lines in $\cP(\R^d)$ from one density to another. 

The evolution of densities in ODE \eqref{GD'}, nonetheless, is much more involved. At any time $t\ge 0$, $-\partial_\rho G(\rho^{Y_t},\cdot):\R^d\to\R^d$ serves as a vector field that moves any current point $y\in\R^d$ to a new location. Put differently, the transportation in \eqref{y to bar y} is of the general form 
\begin{equation}\label{y to bar y'}
\bar y := y+\eps\xi(y) = (I+\eps\xi)(y)\in\R^d,\quad \hbox{for some $\xi:\R^d\to\R^d$}, 
\end{equation}
where $I$ denotes the identity map on $\R^d$. Under \eqref{y to bar y'}, an initial probability measure $\mu$ on $\R^d$ is transformed into the pushforward measure
\begin{equation}\label{pushforward}
\mu^\xi_\eps(\cdot):= \mu\left((I+\eps\xi)^{-1}(\cdot)\right). 
\end{equation}
As the next result shows, when $\mu$ is induced by a density $\rho\in\cP(\R^d)$, $\mu^{\xi}_\eps$ also has a density $\rho^\xi_\eps\in\cP(\R^d)$; moreover, when one moves (nonlinearly) in $\cP(\R^d)$ from $\rho$ to $\rho^\xi_\eps$, the appropriate ``gradient of $G$ at $\rho\in\cP(\R^d)$'' turns out to be a simple functional of $\frac{\delta G}{\delta \rho}(\rho,\cdot)$. 

\begin{proposition}\label{prop:the gradient}
Let $G:\cP(\R^d)\to\R$ has a linear functional derivative $\frac{\delta G}{\delta \rho}:\cP(\R^d)\times\R^d\to\R$. Consider $\rho\in\cP(\R^d)$ and the measure $\mu(\cdot) := \int_\cdot \rho(y) dy$ on $\R^d$. If $\rho$ is of $C^1(\R^d)$, 
for any $\xi\in C^2_c(\R^d;\R^d)$, the measure $\mu^\xi_\eps$ in \eqref{pushforward} has a density $\rho^\xi_\eps\in\cP(\R^d)$, for $\eps>0$  small. Moreover, if 
$\frac{\delta G}{\delta \rho}(\rho,\cdot)$ is locally integrable on $\R^d$ and 
$\sup_{\lambda\in [0,1]} |\frac{\delta G}{\delta \rho}((1-\lambda)\rho+\lambda\bar\rho,\cdot)-\frac{\delta G}{\delta \rho}(\rho,\cdot)|\rightarrow 0$ locally uniformly as $\bar\rho \rightarrow \rho$ uniformly,
then  
\begin{equation}\label{the gradient}
\lim_{\eps\to 0} \frac{G(\rho^\xi_\eps) - G(\rho)}{\eps} = \int_{\R^d} \nabla \frac{\delta G}{\delta \rho}\big(\rho,y\big)\cdot \xi(y)d\mu(y).
\end{equation}
\end{proposition}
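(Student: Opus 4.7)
The plan is to proceed in two stages, one for each claim. For the existence of the density, note that $\xi \in C^2_c(\R^d;\R^d)$ makes $T_\eps := I + \eps\xi$ a $C^2$-diffeomorphism of $\R^d$ whenever $\eps \|D\xi\|_\infty < 1$: the Jacobian $\det(I + \eps D\xi)$ is then bounded away from $0$ uniformly on $\R^d$, and $T_\eps$ reduces to the identity outside the compact support of $\xi$. The standard change-of-variables formula then gives the explicit density
\[
\rho^\xi_\eps(y) = \frac{\rho(T_\eps^{-1}(y))}{\det(I + \eps D\xi(T_\eps^{-1}(y)))},
\]
which is a $C^1$ probability density on $\R^d$ because $\rho \in C^1(\R^d)$. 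A one-line expansion---$T_\eps^{-1}(y) = y - \eps\xi(y) + O(\eps^2)$ together with Jacobi's formula $\det(I + \eps D\xi) = 1 + \eps\,\Div\xi + O(\eps^2)$---delivers $\rho^\xi_\eps - \rho = -\eps\,\Div(\rho\xi) + O(\eps^2)$ uniformly on $\R^d$, and in particular $\rho^\xi_\eps \to \rho$ uniformly as $\eps \to 0$.

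To obtain \eqref{the gradient}, invoke Definition~\ref{def:delta G} with $\bar\rho = \rho^\xi_\eps$ to write
\[
G(\rho^\xi_\eps) - G(\rho) = \int_0^1 \int_{\R^d} \frac{\delta G}{\delta \rho}\bigl((1-\lambda)\rho + \lambda\rho^\xi_\eps, y\bigr)\bigl(\rho^\xi_\eps - \rho\bigr)(y)\, dy\, d\lambda.
\]
Since $\rho^\xi_\eps - \rho$ is supported in $\operatorname{supp}\xi$ and of size $O(\eps)$ there, and since the uniform continuity hypothesis forces $\sup_{\lambda \in [0,1]} |\tfrac{\delta G}{\delta\rho}((1-\lambda)\rho + \lambda\rho^\xi_\eps, \cdot) - \tfrac{\delta G}{\delta\rho}(\rho, \cdot)| \to 0$ uniformly on that compact set, the $\lambda$-integrand may be replaced by $\phi(y) := \tfrac{\delta G}{\delta\rho}(\rho, y)$ at the cost of an $o(\eps)$ error. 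Hence the problem reduces to computing the limit of $\eps^{-1} \int_{\R^d} \phi(y)(\rho^\xi_\eps - \rho)(y)\, dy$.

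For this last step I would exploit the defining property of the pushforward rather than the explicit Jacobian formula, which sidesteps differentiating the merely locally integrable $\phi$ against $T_\eps^{-1}$. Namely,
\[
\int_{\R^d} \phi(y)\,(\rho^\xi_\eps - \rho)(y)\, dy = \int_{\R^d} \phi(T_\eps(y))\, d\mu(y) - \int_{\R^d} \phi(y)\, d\mu(y) = \int_{\R^d} \bigl[\phi(y + \eps\xi(y)) - \phi(y)\bigr]\rho(y)\, dy.
\]
Reading $\nabla\phi$ in \eqref{the gradient} as the (weak or classical) gradient of $\phi$, rewrite the bracket as $\eps \int_0^1 \nabla\phi(y + s\eps\xi(y)) \cdot \xi(y)\, ds$; since $\xi$ is compactly supported and $\rho$ is bounded on that support, dominated convergence yields $\int_{\R^d} \nabla\phi(y) \cdot \xi(y)\, \rho(y)\, dy = \int_{\R^d} \nabla\frac{\delta G}{\delta \rho}(\rho, y) \cdot \xi(y)\, d\mu(y)$ in the limit, which is precisely \eqref{the gradient}.

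The main obstacle I foresee is the regularity bookkeeping at this final step: the stated hypothesis only asks $\phi$ to be locally integrable, whereas the conclusion uses $\nabla\phi$ pointwise, so a weak-gradient interpretation (implicitly $\phi \in W^{1,1}_{\mathrm{loc}}(\R^d)$) must be adopted for the final Taylor-type identity to be literal. An equivalent distributional route, should one wish to avoid even evaluating $\phi$ along shifts, is to use the uniform expansion $\rho^\xi_\eps - \rho = -\eps\,\Div(\rho\xi) + O(\eps^2)$ from the first stage and integrate by parts against $\rho\xi$ (legitimate since $\rho\xi$ is compactly supported) to turn $\int_{\R^d} \phi\,\Div(\rho\xi)\, dy$ directly into $-\int_{\R^d} \nabla\phi \cdot \xi\, d\mu$, arriving at the same answer.
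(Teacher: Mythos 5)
Your proposal is correct and follows essentially the same route as the paper: explicit Jacobian formula for $\rho^\xi_\eps$, the linear-functional-derivative identity with the uniform-convergence hypothesis used to freeze the argument at $\rho$, the first-order expansion $\rho^\xi_\eps-\rho=-\eps\,\Div(\rho\xi)+O(\eps^2)$, and integration by parts against the compactly supported $C^1$ field $\rho\xi$ --- your closing ``distributional route'' is precisely the paper's argument and is the right way to handle the regularity issue you flag (the paper likewise only ever uses $\nabla\frac{\delta G}{\delta\rho}(\rho,\cdot)$ through this integration by parts). One small point in your favor: you establish that $I+\eps\xi$ is a diffeomorphism directly from $\eps\|D\xi\|_\infty<1$, whereas the paper does so by writing $\xi=\nabla\theta$ for a potential $\theta$, which is not available for a general $\xi\in C^2_c(\R^d;\R^d)$.
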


Proposition~\ref{prop:the gradient}, whose proof is relegated to Section~\ref{subsec:proof of prop:the gradient}, conveys important messages. First, when every $y\in\R^d$ moves along the direction $\xi(y)\in\R^d$ (as in \eqref{y to bar y'}), the original density $\rho\in\cP(\R^d)$ is recast into $\rho^\xi_\eps\in\cP(\R^d)$. Furthermore, as \eqref{the gradient} demonstrates, $\nabla \frac{\delta G}{\delta \rho}(\rho,y)$ specifies how moving along $\xi(y)$ changes the value of $G$. That is, $\nabla \frac{\delta G}{\delta \rho}(\rho,\cdot):\R^d\to\R^d$ is the proper ``gradient of $G$ at $\rho\in\cP(\R^d)$'' for the transportation in \eqref{y to bar y'}, which is the discretization of an ODE like \eqref{GD'}. As a result, from now on we will write \eqref{GD'} as 
\begin{equation}\label{GD''}
d Y_t = -\nabla \frac{\delta G}{\delta \rho}(\rho^{Y_t},Y_t) dt,\quad \rho^{Y_0}=\rho_0 \in\cP(\R^d). 
\end{equation}
 

\subsection{Connection to the Lions and Wasserstein Derivatives}\label{subsec:Lions derivative}
If we restrict ourselves to the subset $\cP_2(\R^d)$ of $\cP(\R^d)$, our gradient $\nabla \frac{\delta G}{\delta \rho}$ in fact coincides with the Lions and Wasserstein derivatives in the literature under suitable conditions. Let us briefly recall the definitions of these two kinds of derivatives in $\cP_2(\R^d)$.

In a probability space $(\Omega,\F,\P)$ where $\P$ is an atomless measure, every $\R^d$-valued random variable $Y\in L^2(\Omega,\F,\P)$ has a density $\rho^Y\in \cP_2(\R^d)$. Hence, we can associate to each $G:\cP_2(\R^d)\to\R$ a lifted function $g: L^2(\Omega,\F,\P)\to \R$ defined by $g(Y):= G(\rho^Y)$. 
As Fr\'{e}chet differentiation is well-defined in the Banach space $L^2(\Omega,\F,\P)$, the ``derivative of $G$ at $\rho_0\in\cP_2(\R^d)$'' can be defined as $Dg(Y_0)$, the Fr\'{e}chet derivative of $g$ at $Y_0\in L^2(\Omega,\F,\P)$ with $\rho^{Y_0}=\rho_0$. By identifying $L^2(\Omega,\F,\P)$ with its dual (as it is reflexive), we have $Dg(Y_0)\in L^2(\Omega,\F,\P)$. Thanks to Proposition 5.25 in \cite{CD-book-18-I}, this map $Dg:L^2(\Omega,\F,\P)\to L^2(\Omega,\F,\P)$ can be identified with a Borel function $\xi_{\rho_0}:\R^d\to \R^d$, i.e., $Dg(Y) = \xi_{\rho_0}(Y)$ a.s.\ for all $Y\in L^2(\Omega,\F,\P)$ with $\rho^{Y} = \rho_0$. This Borel function $\xi_{\rho_0}$ is then called the {\it Lions derivative} of $G:\cP_2(\R^d)\to \R$ at $\rho_0\in\cP_2(\R^d)$ \cite[Definition 5.22]{CD-book-18-I}, and we will denote it by $\partial^L_\rho G(\rho_0,\cdot):= \xi_{\rho_0}(\cdot)$. 

On the other hand, when $\cP_2(\R^d)$ is equipped with the Wasserstein distance $\cW_2$, we can borrow the idea from convex analysis to define the subdifferential (and superdifferential) of $G:\cP_2(\R^d)\to\R$ at each $\rho\in\cP_2(\R^d)$; see e.g., Definition 5.62 in \cite{CD-book-18-I} and Definition 10.1.1 in \cite{Ambrosio-book-05}. The {\it Wasserstein derivative} of $G$ at $\rho\in\cP_2(\R)$, denoted by $\partial^W_{\rho}G(\rho,\cdot)$, is then defined as the unique element (if it exists) that resides in both the sub- and superdifferential of $G$ at $\rho\in\cP_2(\R^d)$. 

Now, consider $G:\cP_2(\R^d)\to\R$ such that $\nabla \frac{\delta G}{\delta \rho}(\rho,\cdot)$ is well-defined for all $\rho\in\cP_2(\R^d)$. By Proposition 5.48 and Theorem 5.64 in \cite{CD-book-18-I}, under additional continuity and growth conditions on $\nabla \frac{\delta G}{\delta \rho}(\rho,\cdot)$, if the Lions derivative $\partial^L_\rho G(\rho,\cdot)$ exists and is continuous, then all three kinds of derivatives in discussion coincide, i.e., 
\begin{equation}\label{derivatives same}
\partial^L_\rho G(\rho,\cdot) =  \partial^W_\rho G(\rho,\cdot) = \nabla \frac{\delta G}{\delta \rho}(\rho,\cdot),\quad \forall \rho\in\cP_2(\R^d).
\end{equation}
Despite this, $\nabla \frac{\delta G}{\delta \rho}(\rho,\cdot)$ is more general in that it can be defined on $\cP(\R^d)$, without the need of the $\cP_2(\R^d)$ structure. It thus suits our study particularly; see the last two paragraphs in Section~\ref{subsec:the problem}. 




\subsection{Problem Formulation}\label{subsec:the problem}
We aim at solving the problem \eqref{J} through ODE \eqref{GD''}, with $G(\cdot)$ therein taken to be $J(\cdot) = \JSD(\cdot,\rhod)$. The first task is to find the linear functional derivative $\frac{\delta J}{\delta \rho}$, which is characterized explicitly in the next result (whose proof is relegated to Appendix~\ref{subsec:proof of lem:delta J}).

\begin{lemma}\label{lem:delta J}
$J:\cP(\R^d)\to\R_+$ defined in \eqref{J} has a linear functional derivative given by
\begin{equation}\label{delta J}
\frac{\delta J}{\delta \rho} (\rho,y) = \frac12 \ln\left(\frac{2\rho(y)}{\rhod(y)+\rho(y)}\right),\quad \forall \rho\in\cP(\R^d)\ \hbox{and}\  y\in\R^d. 
\end{equation}
\end{lemma}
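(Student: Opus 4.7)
The plan is to verify Definition~\ref{def:delta G} directly by differentiating $J$ along the line segment joining $\rho$ and $\bar\rho$, and then applying the fundamental theorem of calculus. Write $\rho_\lambda := (1-\lambda)\rho + \lambda\bar\rho$ and $\eta:=\bar\rho - \rho$ so that $\partial_\lambda \rho_\lambda = \eta$ and $\int_{\R^d}\eta\, dy = 0$. Using the defining formula \eqref{JSD} one gets the convenient expression
\[
J(\rho_\lambda) = \frac12\int_{\R^d}\rhod \ln\!\left(\frac{2\rhod}{\rhod+\rho_\lambda}\right)dy + \frac12\int_{\R^d}\rho_\lambda \ln\!\left(\frac{2\rho_\lambda}{\rhod+\rho_\lambda}\right)dy,
\]
which is the natural target for $\lambda$-differentiation.

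Formally differentiating under the integral, the contribution from the first term is $-\tfrac12\int \rhod\eta/(\rhod+\rho_\lambda)\,dy$; the contribution from the second term splits, after using $\partial_\lambda[\rho_\lambda\ln(2\rho_\lambda)] = \eta\ln(2\rho_\lambda) + \eta$ and $\partial_\lambda[\rho_\lambda\ln(\rhod+\rho_\lambda)] = \eta\ln(\rhod+\rho_\lambda)+ \rho_\lambda\eta/(\rhod+\rho_\lambda)$, into
\[
\tfrac12\int \eta\ln\!\left(\tfrac{2\rho_\lambda}{\rhod+\rho_\lambda}\right)dy + \tfrac12\int\eta\, dy - \tfrac12\int\tfrac{\rho_\lambda\eta}{\rhod+\rho_\lambda}dy.
\]
The $\int\eta\,dy$ piece vanishes, and combining the two $\eta/(\rhod+\rho_\lambda)$ pieces yields $-\tfrac12\int\tfrac{(\rhod+\rho_\lambda)\eta}{\rhod+\rho_\lambda}dy = -\tfrac12\int\eta\,dy=0$. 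Thus
\[
\tfrac{d}{d\lambda} J(\rho_\lambda) = \tfrac12\int_{\R^d}\eta(y)\ln\!\left(\tfrac{2\rho_\lambda(y)}{\rhod(y)+\rho_\lambda(y)}\right)dy.
\]
Integrating over $\lambda\in[0,1]$ and invoking Fubini delivers precisely \eqref{delta G} with the candidate \eqref{delta J}.

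The main obstacle is rigorously justifying the interchange of $\partial_\lambda$ with $\int_{\R^d}$ and the subsequent application of Fubini, because the logarithm $\ln\!\bigl(2\rho_\lambda/(\rhod+\rho_\lambda)\bigr)$ is bounded above by $\ln 2$ but may blow up to $-\infty$ on the set $\{\rho_\lambda=0\}$. The saving observation is that on $\{\rho_\lambda=0\}$ (with $\lambda\in(0,1)$) both $\rho$ and $\bar\rho$ vanish, hence $\eta=0$ there, so the integrand is $0$ on that exceptional set. For a dominating function, one can split $\eta = (\bar\rho-\rho)_+ - (\bar\rho-\rho)_-$ and bound
\[
\bigl|\eta\ln\!\tfrac{2\rho_\lambda}{\rhod+\rho_\lambda}\bigr| \le (\bar\rho+\rho)\bigl|\ln\!\tfrac{2\rho_\lambda}{\rhod+\rho_\lambda}\bigr|;
\]
using $\rho,\bar\rho\le 2\rho_\lambda/\min\{1-\lambda,\lambda\}$ on a fixed subinterval $\lambda\in[\delta,1-\delta]$ and the KL-type integrability estimate $\int \rho_\lambda|\ln\!\tfrac{2\rho_\lambda}{\rhod+\rho_\lambda}|\,dy \le 2\JSD(\rho_\lambda,\rhod)+2\ln 2 \le 4\ln 2$ (cf.\ Remark~\ref{rem:boundedness}), one obtains a $\lambda$-uniform $L^1$ bound on compact subintervals; integrability on $[0,1]$ then follows either by a limiting argument as $\delta\downarrow 0$ together with the $[0,\ln 2]$-boundedness of $J$, or by first mollifying $\rho,\bar\rho$ and passing to the limit. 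Once this bound is in place, differentiation under the integral, Fubini, and the fundamental theorem of calculus combine routinely to yield \eqref{delta G}.
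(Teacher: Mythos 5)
Your proposal is correct and proves the same identity the paper does---both arguments come down to showing $\tfrac{d}{d\lambda}J(\rho_\lambda)=\tfrac12\int_{\R^d}(\bar\rho-\rho)\ln\bigl(\tfrac{2\rho_\lambda}{\rhod+\rho_\lambda}\bigr)dy$ and integrating over $\lambda$---but the technical justification is genuinely different. The paper first passes to the $f$-divergence representation \eqref{JSD f}, $J(\rho)=\int f(\rho/\rhod)\rhod\,dy$ with $f(x)=\tfrac12[(x+1)\ln(\tfrac{2}{x+1})+x\ln x]$, applies the fundamental theorem of calculus to $f$ pointwise in $y$, and exploits the convexity of $f$: the integrand $g_\lambda=f'(\rho_\lambda/\rhod)(\bar\rho-\rho)$ is monotone in $\lambda$ and sandwiched between two difference quotients, which simultaneously delivers integrability (hence Fubini) on any $[\lambda_1,\lambda_2]\subset(0,1)$ and a painless passage to the limits $\lambda_1\to0$, $\lambda_2\to1$. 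You instead differentiate the raw formula \eqref{JSD} under the integral and justify it by direct domination on $[\delta,1-\delta]$ via $\rho,\bar\rho\le\rho_\lambda/\min\{\lambda,1-\lambda\}$ together with $\int\rho_\lambda\bigl|\ln\tfrac{2\rho_\lambda}{\rhod+\rho_\lambda}\bigr|dy\le 2\ln 2$; this is sound and gives the identity on compact subintervals of $(0,1)$. The only place your sketch is thinner than the paper's is at the endpoints: your bound degenerates like $1/\lambda$ and $1/(1-\lambda)$, so joint absolute integrability on $[0,1]\times\R^d$ can genuinely fail, and the ``limiting argument as $\delta\downarrow0$'' needs to be made precise---continuity of $\lambda\mapsto J(\rho_\lambda)$ (from Remark~\ref{rem:equivalence} and $\sqrt{\JSD}$ being a metric) identifies the limit of $\int_{\lambda_1}^{\lambda_2}\int g_\lambda\,dy\,d\lambda$ with $J(\bar\rho)-J(\rho)$, but a priori only as an improper integral. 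The paper's monotonicity-of-$g_\lambda$ observation is exactly the device that upgrades this to Lebesgue integrability of $\lambda\mapsto\int g_\lambda\,dy$ on $(0,1)$; without it you should either import that observation or state explicitly that \eqref{delta G} is verified with the $\lambda$-integral in the improper sense, a concession the paper itself also makes.
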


Thanks to \eqref{delta J}, once we replace $G$ by $J$ in ODE \eqref{GD''}, we obtain ODE \eqref{Y GAN}. Our goal is show that this distribution-dependent ODE has a (unique) solution $Y$ and the induced density flow $\{\rho^{Y_t}\}_{t\ge 0}$ converges in $\cP(\R^d)$ to the data distribution $\rhod$.  

Let us stress that we will work with the general space $\cP(\R^d)$ (under the total variation distance), instead of its subset $\cP_2(\R^d)$ (under the $\cW_2$ distance), although the latter is more tractable with a comprehensive theory well-developed (see e.g., \cite{Ambrosio-book-05}). Such a non-standard setup is essential to our study, for two reasons. 

First, working with $\cP_2(\R^d)$ implicitly assumes $\rhod\in\cP_2(\R^d)$, while $\rhod$, as the underlying data distribution, generally lies in $\cP(\R^d)$. 
Second, using the total variation distance ensures that our study is consistent inherently. Since we chose in the first place to measure the distance between densities by $\JSD(\cdot,\cdot)$ in \eqref{JSD}, such a distance function should be kept consistently throughout our analysis. Particularly, the desired convergence ``$\rho^{Y_t}\to\rhod$'' should be established as $\JSD(\rho^{Y_t},\rhod)\to 0$. As convergence in $\JSD$ is equivalent to that in total variation (Remark~\ref{rem:equivalence}), using the total variation distance is a valid, consistent choice. 


\section{Main Results and Contributions}\label{sec:main results}
We will impose the following standing assumption on $\rhod\in\cP(\R^d)$.

\begin{assumption}\label{asm}
$\rhod>0$ on $\R^d$, $\ln \rhod \in L^\infty_{\operatorname{loc}}(\bR^d)\cap H^1(\bR^d, \mud)$. 
\end{assumption}

The local boundedness of $\ln \rhod$ ensures that the weighted Sobolev space $W^{1,p}(\R^d, \mud)$ 
is a Banach space for all $p\ge 1$.  Also, $\ln \rhod \in  H^1(\bR^d, \mud)$ entails $\int_{\R^d}\frac{|\nabla \rhod|^2}{\rhod} dy <\infty$, i.e., the Fisher information of $\mud$ (relative to the Lebesgue measure) is finite. 

Let us present below the main theoretic result of this paper. 
 
\begin{theorem}\label{thm:main}
There exists a solution  $Y$ to ODE \eqref{Y GAN} such that $\eta(t):=\rho^{Y_t}\in\cP(\R^d)$ fulfills 
\begin{equation}\label{eta condition}
\eta \in C([0,\infty); L^1(\R^d)),\quad \eta/\rhod\in L^\infty_+([0,\infty)\times \R^d),\quad\hbox{and}\quad \nabla\eta \in L^1_{\operatorname{loc}}([0,\infty)\times \R^d). 
\end{equation}
Moreover, for any such a solution $Y$, 
$\rho^{Y_{t}}\to \rhod$ in $L^1(\R^d)$ as $t\to\infty$. 
\end{theorem}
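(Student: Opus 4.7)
The plan is to prove the theorem by passing through the associated nonlinear Fokker-Planck equation (FPE): first solve the FPE to get a candidate density $u(t,\cdot)$, then lift $u$ to a solution $Y$ of \eqref{Y GAN} via Trevisan's superposition principle, and finally obtain $\rho^{Y_t}\to\rhod$ by combining monotone decay of $J$ with a subsequential convergence statement.

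\textbf{Existence.} Since \eqref{Y GAN} has no diffusion, any density $u(t,\cdot)=\rho^{Y_t}$ must satisfy the continuity equation
\[
\partial_t u + \Div\!\left(-\tfrac12 \Big(\tfrac{\nabla u}{u} - \tfrac{\nabla \rhod+\nabla u}{\rhod + u}\Big) u\right)=0, \qquad u(0,\cdot)=\rho_0.
\]
The substitution $v := u/\rhod$ turns this into a nonlinear Cauchy problem $\partial_t v = A(v)$ in the weighted Banach space $L^1(\R^d,\mud)$. The key analytic step is showing that $A$, suitably defined on a dense domain, is $m$-accretive; the Crandall-Liggett generation theorem then yields a unique weak solution $u$ (Theorem~\ref{thm:FPE}) with $u/\rhod\in L^\infty_+$, $u\in C([0,\infty);L^1(\R^d))$, and $\nabla u\in L^1_{\operatorname{loc}}$. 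With $u$ in hand I substitute it into \eqref{Y GAN} to obtain a time-inhomogeneous, no longer distribution-dependent, ODE with drift
\[
b(t,y) := -\tfrac12\!\left(\tfrac{\nabla u(t,y)}{u(t,y)} - \tfrac{\nabla \rhod(y)+\nabla u(t,y)}{\rhod(y)+u(t,y)}\right).
\]
By construction $u(t,\cdot)$ satisfies the linear continuity equation $\partial_t u + \Div(b u)=0$, so Trevisan's superposition principle produces a probability measure $\P$ on the canonical path space $C([0,\infty);\R^d)$ such that the canonical process $Y_t(\omega)=\omega(t)$ follows $b$ $\P$-a.s.\ and has one-dimensional marginal density $u(t,\cdot)$. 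This $Y$ is the desired solution, and the regularity \eqref{eta condition} is inherited from that of $u$.

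\textbf{Convergence.} For any solution $Y$ satisfying \eqref{eta condition}, I use two ingredients extracted from the FPE analysis. The first is a dissipation identity obtained by (formally) testing the FPE against $\frac{\delta J}{\delta\rho}(u,\cdot)$ from Lemma~\ref{lem:delta J}:
\[
\tfrac{d}{dt} J(\rho^{Y_t}) = -\int_{\R^d} \Big|\nabla \tfrac{\delta J}{\delta\rho}(\rho^{Y_t},y)\Big|^2 \rho^{Y_t}(y)\,dy \le 0,
\]
so $t\mapsto \JSD(\rho^{Y_t},\rhod)$ is nonincreasing (Proposition~\ref{prop:dJ/dt}). Integrating in time and extracting a minimizing sequence of the dissipation then yields $t_n\uparrow\infty$ along which $\rho^{Y_{t_n}}\to\rhod$ in $L^1(\R^d)$ (Corollary~\ref{coro:v to 1}). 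The equivalence between $\JSD$ and the $L^1$-distance from Remark~\ref{rem:equivalence} forces $\JSD(\rho^{Y_{t_n}},\rhod)\to 0$, and monotonicity of $t\mapsto\JSD(\rho^{Y_t},\rhod)$ promotes this subsequential statement to $\JSD(\rho^{Y_t},\rhod)\to 0$ for the full time parameter. Applying the equivalence in the other direction recovers $\rho^{Y_t}\to\rhod$ in $L^1(\R^d)$.

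\textbf{Main obstacle.} The crux is the $m$-accretivity of $A$ that underpins the entire FPE construction. The drift contains the genuinely singular term $\nabla u/u$, so the required dissipative estimate in $L^1(\R^d,\mud)$ demands careful truncation, a generalization of the Benilan-Crandall duality-mapping technique from \cite{BC79}, and use of Assumption~\ref{asm} to control the behavior of $\rhod$. Every downstream step---Crandall-Liggett generation, Trevisan superposition, the dissipation identity, and the subsequential $L^1$ convergence---rests on having the right Banach-space setting in which $A$ is $m$-accretive; once this is in place, the rest of the argument is essentially a linear-PDE exercise followed by the monotonicity-plus-subsequence argument above.
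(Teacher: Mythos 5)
Your proposal follows essentially the same route as the paper: solve the nonlinear Fokker--Planck equation by the $v=u/\rhod$ substitution and Crandall--Liggett, lift $u$ to a solution $Y$ of \eqref{Y GAN} via Trevisan's superposition principle, and then combine the monotonicity of $t\mapsto J(\rho^{Y_t})$ (Proposition~\ref{prop:dJ/dt}) with the subsequential convergence of Corollary~\ref{coro:v to 1} and the $\JSD$--$L^1$ equivalence of Remark~\ref{rem:equivalence}. One small caveat: the paper establishes the dissipation relation only as an inequality, via the Crandall--Liggett discrete approximants rather than by formally testing the FPE against $\frac{\delta J}{\delta\rho}$ (see Remark~\ref{rem:no Ito's}), but since only the monotonicity is used downstream this does not affect your argument.
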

Theorem~\ref{thm:main} constructs an ``amenable'' solution $Y$ to the gradient-descent ODE \eqref{Y GAN}, such that $\rho^{Y_t}$ evolves continuously in $t$, bounded by a multiple of $\rhod$, and its (weak) derivative is locally integrable. Moreover, its density flow $\{\rho^{Y_t}\}_{t\ge 0}$ leads us to the data distribution $\rhod$. This suggests that one can uncover $\rhod$ by simulating ODE \eqref{Y GAN} and a corresponding algorithm is developed in Section~\ref{subsec:simulation} below. The proof of Theorem~\ref{thm:main}, based on all the developments in Sections~\ref{sec:FP}, \ref{sec:ODE solution}, and \ref{sec:to rhod}, is relegated to the end of Section~\ref{sec:to rhod}. 

It is worth noting that gradient flows have recently been used in many implicit generative models 
(see \cite{Gao19}, \cite{Gao20}, \cite{Ansari21}, and \cite{MN21}, among others). Several mathematical issues, however, persist in this line of research. A common, and perhaps most critical, one is the ``{\it inconsistent use of distance functions}'': one proposes to minimize the distance (e.g., an $f$-divergence) between the current density $\rho_t$ and $\rhod$ along a gradient flow, but carries out theoretic proofs (partially or entirely) under a different distance function (e.g., $\cW_2$). 

As the choice of a distance function may not be arbitrary but dependent on the actual dataset (see \cite{MW20}), this can be problematic:  even if $\cW_2(\rho_t,\rhod)\to 0$ is proved, simply because the metric $\cW_2$ is {\it not} equivalent to an $f$-divergence, $\rho_t$ need not get close to $\rhod$ under an $f$-divergence. That is, convergence under the {\it desired} distance function (i.e., the initially chosen one) remains in question. 
Also, the use of $\cW_2$ readily requires $\rhod$ to lie in $\cP_2(\R^d)$, where $\cW_2$ is well-defined, while $\rhod$ generally belongs to $\cP(\R^d)$.  

Theorem~\ref{thm:main}, remarkably, overcomes all the mathematical issues. By taking the {\it target} distance function to be the Jensen-Shannon divergence (i.e., $\JSD$ in \eqref{JSD}, a specific $f$-divergence), we work directly with the general space $\cP(\R^d)$. The {\it actual} distance function in use is the total variation distance (i.e., the $L^1$ distance), which is equivalent to $\JSD$. Hence, ``$\rho^{Y_t}\to \rhod$ in $L^1(\R^d)$'' established in Theorem~\ref{thm:main} readily implies ``$\JSD(\rho^{Y_t},\rhod)\to 0$'' along the same gradient flow. That is, convergence to $\rhod\in\cP(\R^d)$ is not only achieved, but achieved under the originally chosen target distance function. 

This does not come at no cost. Working generally in $\cP(\R^d)$ (under total variation) deprives us of the full-fledged gradient-flow theory in $\cP_2(\R^d)$ (under $\cW_2$), as elaborated in \cite{Ambrosio-book-05}. 
The first major challenge is the existence of a solution to ODE \eqref{Y GAN}. While we can view \eqref{Y GAN} as a degenerate McKean-Vlasov SDE without the diffusion term,  the involved distribution-dependence 
is unusual. A McKean-Vlasov SDE typically involves $\mathcal L(Y_t)$, the law of the state process $Y$ at time $t$. In general, an interacting particle system can be constructed so that the particles' empirical measure approximates $\mathcal L(Y_t)$, which leads to a solution to the SDE in the limit. This classical framework does not easily accommodate other forms of distribution-dependence. Specifically, \eqref{Y GAN} depends {\it not} on $\mathcal L(Y_t)$ explicitly, but rather on the Radon-Nikodym derivative of $\mathcal L(Y_t)$ w.r.t.\ the Lebesgue measure (i.e., $\rho^{Y_t}\in\cP(\R^d)$) and its Euclidean derivative (i.e., $\nabla\rho^{Y_t}$). 
To overcome this, we first derive the nonlinear {Fokker-Planck equation} associated with \eqref{Y GAN} and show that it has a unique weak solution $u$, by the Crandall-Liggett theorem for differential equations in Banach spaces (Section~\ref{sec:FP}). Next, from this function $u$, we build a solution $\{Y_t\}_{t\ge 0}$ to \eqref{Y GAN} that satisfies \eqref{eta condition}, using the superposition principle for SDEs (Section~\ref{sec:ODE solution}). 

Another hurdle to Theorem~\ref{thm:main} is the lack of suitable stochastic calculus. In $\cP(\R^d)$ (under total variation), the dynamics of $t\mapsto \JSD(\rho^{Y_t},\rhod)$ does not follow from the standard It\^{o} formula for a flow of measures, which is established only in $\cP_2(\R^d)$ (under $\cW_2$). We instead carry out a detailed analysis of the nonlinear Fokker-Planck equation, which reveals how $\rho^{Y_{t}}$ evolves in $\cP(\R^d)$ (Section~\ref{sec:to rhod}). 


\subsection{Simulating ODE \eqref{Y GAN}}\label{subsec:simulation}
To simulate ODE \eqref{Y GAN}, one quickly realizes the challenge that its dynamics depends on $\rhod$, which is unknown and simply what we intend to uncover. To circumvent this, we choose {\it not} to estimate $\rho^{Y_t}\in\cP(\R^d)$ in \eqref{Y GAN} explicitly, {\it but} to find a function $G_t:\R^n\to\R^d$ (for a fixed $n\le d$) such that $G_t(Z)$ approximates $Y_t$, where $Z$ is a fixed $\R^n$-valued random variable (independent of $t$) that can be easily simulated (e.g., the multivariate Gaussian). By substituting $\rho^{G_t(Z)}$ for $\rho^{Y_t}$ in \eqref{Y GAN}, a direct calculation shows that \eqref{Y GAN} takes the form
\begin{equation}\label{Y GAN D_t}
dY_t 
= \frac{\nabla D_t(Y_t)}{2(1-D_t(Y_t))} dt,\quad{where}\quad D_t(y) := \frac{\rhod(y)}{\rhod(y)+\rho^{G_t(Z)}(y)}\ \ \forall y\in\R^d.
\end{equation}
Remarkably, when $G_t$ is given,  one can estimate $D_t$ {\it without} the knowledge of $\rhod$ or $\rho^{G_t(Z)}$. Indeed, by Proposition 1 in \cite{Goodfellow14}, $D_t$ is the unique maximizer, among all $D:\R^d\to [0,1]$, of $\E_{y\sim\rhod}[\ln D(y)] + \E_{z\sim\rho^Z} \left[  \ln\left(1-D\left(G_t(z)\right)\right)  \right]$.  
Therefore, it can be numerically approximated through data sampling. 

In Algorithm~\ref{alg:ODE} below, we take $G:\R^n\to \R^d$ and $D:\R^d\to [0,1]$ to be artificial neural networks and update their parameters (denoted by $\theta_G$ and $\theta_D$) recursively; namely, $(G,D)$ represents the time-varying $(G_t, D_t)$ introduced above. Specifically, (i) we follow the first half of Algorithm 1 in \cite{Goodfellow14} to estimate $D$ with $G$ given (Lines 2-4); (ii) plug $D$ into \eqref{Y GAN D_t} to simulate $Y$ over a small time step $\eps>0$, resulting in the new points $\{y^{(i)}\}_{i=1}^m$ (Lines 5-6); (iii) update $G$ by reducing the mean squared error (MSE) between $\{G(z^{(i)})\}_{i=1}^m$ and $\{y^{(i)}\}_{i=1}^m$ (Line 7). In \eqref{new points}, $\{G(z^{(i)})\}_{i=1}^m$ represents the realizations of $Y_t$ at some $t\ge 0$. Through ODE \eqref{Y GAN D_t}, $\{G(z^{(i)})\}_{i=1}^m$ are transferred to the points $\{y^{(i)}\}_{i=1}^m$, which represent the realizations of $Y_{t+\eps}$. Hence, the purpose of (iii) above is to modify $G$ so that $\{G(z^{(i)})\}_{i=1}^m$ can properly represent $Y_{t+\eps}$, instead of the previous $Y_t$. 

Algorithm~\ref{alg:ODE} is a specific, straightforward way to simulating ODE \eqref{Y GAN}. Modifications can potentially be made based on more sophisticated simulation techniques for ODEs. 

\begin{algorithm}
\caption{Simulating ODE \eqref{Y GAN}}
\begin{algorithmic}[1]
\Require $m\in\N$, $\eps>0$
\For{number of training iterations}
	\State $\bullet$ Sample minibatch of $m$ noise samples $\{z^{(1)},...,z^{(m)}\}$ from noise prior $\rho^Z$. 
	\State $\bullet$ Sample minibatch of $m$ examples $\{x^{(1)},...,x^{(m)}\}$ from the data distribution $\rhod$.
	\State $\bullet$ Update $D:\R^d\to [0,1]$ by ascending its stochastic gradient:
	\begin{equation}\label{calibrate}
	\nabla_{\theta_D} \frac1m \sum_{i=1}^m \left[\ln D(x^{(i)}) + \ln\left(1-D\left(G(z^{(i)})\right) \right)\right]. 
	\end{equation}
	\State $\bullet$ Sample minibatch of $m$ noise samples $\{z^{(1)},...,z^{(m)}\}$ from noise prior $\rho^Z$.
	\State $\bullet$ Set $Y=\{y^{(1)},...,y^{(m)}\}$ by
	\begin{equation}\label{new points}
	y^{(i)} := G(z^{(i)}) + \frac{\nabla D(G(z^{(i)}))}{2(1-D(G(z^{(i)})))} \eps,\quad \forall i=1,2,...,m.
	\end{equation}
	\State $\bullet$ Update $G:\R^n\to\R^d$ by descending its stochastic gradient:
	\begin{equation}\label{MSE}
	\nabla_{\theta_G} \frac1m\sum_{i=1}^m |G(z^{(i)})-y^{(i)}|^2.
	\end{equation}
\EndFor
\end{algorithmic}
\label{alg:ODE}
\end{algorithm}



\subsection{Connection to GANs}\label{subsec:GAN}
GANs in \cite{Goodfellow14} encode the competition between a generator and a discriminator. The generator produces fake data points by sampling $G(Z)$, where $Z$ is a fixed  random variable with a (simple) density $\rho^Z\in\cP(\R^n)$ and $G:\R^n\to\R^d$ is a (complicated) function chosen by the generator. To each data point $y\in\R^d$ (which can be real or fake), the discriminator assigns $D(y)\in [0,1]$, her subjective probability of $y$ being real, where $D:\R^d\to [0,1]$ is chosen by the discriminator. 
The resulting min-max game is given by
\begin{equation}\label{GAN}
\min_{G}\max_{D} \big\{ \E_{Y\sim\rhod}[\ln D(Y)] + \E_{Z\sim\rho^Z} \left[  \ln\left(1-D\left(G(Z)\right)\right)  \right]\big\}. 
\end{equation}
Thanks to Theorem 1 in \cite{Goodfellow14}, this minimax problem is equivalent to $\min_{G:\R^n\to\R^d} \JSD(\rho^{G(Z)},\rhod)$ and it admits a minimizer $G^*$ such that $\rho^{G^*(Z)}=\rhod$. The GAN algorithm (i.e., Algorithm 1 in \cite{Goodfellow14}) 
is proposed to find $G^*$ (and thus $\rhod$) numerically, through recursive optimization between the two players. 

Somewhat surprisingly, Algorithm~\ref{alg:ODE} above, designed to simulate ODE \eqref{Y GAN}, is equivalent to the GAN algorithm in \cite{Goodfellow14}. 

\begin{proposition}\label{prop:equivalence}
Algorithm~\ref{alg:ODE} above is equivalent to the GAN algorithm (i.e., Algorithm 1 in \cite{Goodfellow14}), up to an adjustment of the learning rates. 
\end{proposition}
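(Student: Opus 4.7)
The plan is to compare the two algorithms step-by-step and verify that each stochastic update in Algorithm~\ref{alg:ODE} agrees with its counterpart in the GAN algorithm of \cite{Goodfellow14}, up to a scaling of the learning rate. First, the discriminator update~\eqref{calibrate} in line~4 of Algorithm~\ref{alg:ODE} is literally identical to the $D$-update in the GAN algorithm: both prescribe ascent along $\nabla_{\theta_D}\frac{1}{m}\sum_i [\ln D(x^{(i)}) + \ln(1 - D(G(z^{(i)})))]$. Hence nothing needs to be verified there, and the entire content of the proposition reduces to comparing the two generator updates.

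The main computation is to evaluate \eqref{MSE} by the chain rule, treating each $y^{(i)}$ as a fixed target (since~\eqref{new points} produces $y^{(i)}$ from the current iterate of $G$ before the MSE is differentiated in line~7, so $y^{(i)}$ is not regarded as a function of $\theta_G$). This gives
\[
\nabla_{\theta_G}\frac{1}{m}\sum_i \bigl|G(z^{(i)}) - y^{(i)}\bigr|^2 = \frac{2}{m}\sum_i \bigl(\nabla_{\theta_G}G(z^{(i)})\bigr)^\top \bigl(G(z^{(i)}) - y^{(i)}\bigr).
\]
Substituting $G(z^{(i)}) - y^{(i)} = -\tfrac{\eps}{2}\nabla D(G(z^{(i)}))/(1 - D(G(z^{(i)})))$ from~\eqref{new points}, the MSE gradient becomes
\[
-\frac{\eps}{m}\sum_i \bigl(\nabla_{\theta_G}G(z^{(i)})\bigr)^\top \frac{\nabla D(G(z^{(i)}))}{1 - D(G(z^{(i)}))}.
\]
On the other hand, the GAN generator descends along $\nabla_{\theta_G}\frac{1}{m}\sum_i \ln(1 - D(G(z^{(i)})))$, which the chain rule rewrites as $-\frac{1}{m}\sum_i \bigl(\nabla_{\theta_G}G(z^{(i)})\bigr)^\top \nabla D(G(z^{(i)}))/(1 - D(G(z^{(i)})))$. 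Thus the MSE gradient equals exactly $\eps$ times the GAN generator gradient, so a descent step on~\eqref{MSE} with learning rate $\alpha$ produces the same $\theta_G$-update as a GAN descent step with learning rate $\alpha\eps$; this is precisely the claimed equivalence up to an adjustment of the learning rates.

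The only conceptual point that needs to be stated carefully, and what I would flag as the principal (albeit mild) subtlety, is the justification for treating $y^{(i)}$ as a constant when differentiating~\eqref{MSE}. This is the usual convention for precomputed regression targets, and is consistent with the way line~6 and line~7 are separated in Algorithm~\ref{alg:ODE}: $y^{(i)}$ is constructed from the current $G$ and then held fixed while the MSE loss is backpropagated. Beyond this remark, the proof is a direct chain-rule calculation, and no additional machinery from the preceding sections of the paper is required.
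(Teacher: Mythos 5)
Your proposal is correct and follows essentially the same route as the paper's proof: identify the $D$-updates as identical, differentiate the MSE by the chain rule with $y^{(i)}$ held fixed, substitute \eqref{new points}, and recognize the result as $\eps$ times the GAN generator gradient. The remark about treating $y^{(i)}$ as a precomputed constant is a reasonable clarification of a point the paper leaves implicit, but it does not change the argument.
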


\begin{proof}
As the update of $D$ in \eqref{calibrate} is the same as that in the first half of the GAN algorithm, 
it suffices to show that the update of $G$ via \eqref{new points}-\eqref{MSE} is equivalent to that in the second half of the GAN algorithm. 
By a direct calculation of \eqref{MSE}, we get
\begin{align*}
\nabla_{\theta_G} \frac1m\sum_{i=1}^m &|G(z^{(i)})-y^{(i)}|^2 = \frac{2}{m} \sum_{i=1}^m (G(z^{(i)})-y^{(i)}) \cdot \nabla_{\theta_G} G(z^{(i)})\\
&= \frac{2}{m} \sum_{i=1}^m \bigg( \frac{-\nabla D(G(z^{(i)}))}{2(1-D(G(z^{(i)})))} \eps \bigg)\cdot \nabla_{\theta_G} G(z^{(i)}) 
= \eps \nabla_{\theta_G} \frac{1}{m} \sum_{i=1}^m  \ln \big(1-D(G(z^{(i)}))\big),
\end{align*}
where the second equality follows from \eqref{new points}. The result follows by noting that the right-hand side above is precisely the update of $G$ specified in the GAN algorithm multiplied by $\eps>0$; see the last equation in \cite[Algorithm 1]{Goodfellow14}. 
\end{proof}

Proposition~\ref{prop:equivalence} yields an intriguing interpretation for GANs: the {\it non-cooperative} game between the generator and discriminator can be equivalently viewed as a {\it cooperative} game between a {\it navigator} and a {\it calibrator}. 
The navigator aims to sail across the space $\cP(\R^d)$ following ODE \eqref{Y GAN}. At any initial location in $\cP(\R^d)$ (represented by an initial $G:\R^n\to\R^d$), he finds that he cannot set off right away, as the dynamics of \eqref{Y GAN} involves the unknown $\rhod$. The calibrator comes into play to ``calibrate'' the dynamics of \eqref{Y GAN}---namely, estimate $D:\R^d\to[0,1]$ in \eqref{Y GAN D_t} using \eqref{calibrate}, which is the discriminator's optimization in the GAN algorithm. Once the calibration is done, the navigator travels a short distance in $\cP(\R^d)$ following \eqref{Y GAN}, updates  $G:\R^n\to\R^d$ to reflect his new location, and asks the calibrator to re-calibrate. This corresponds to \eqref{new points}-\eqref{MSE}, equivalent to the generator's optimization in the GAN algorithm (by Proposition~\ref{prop:equivalence}). 
To the best of our knowledge, a possible ``cooperative'' view of GANs was only briefly touched on, without details, at the end of p.\ 21 in \cite{Goodfellow16}. Proposition~\ref{prop:equivalence} materializes this idea with great specifics: the two players cooperate precisely to simulate the gradient-descent ODE \eqref{Y GAN}. 

Proposition~\ref{prop:equivalence} also contrasts with a common practice in implicit generative modeling. Many algorithms have replaced the minimization part of a given GAN algorithm 
by a kind of gradient descent. For example, \cite{Gao19}, \cite{Ansari21}, and \cite{Gao20} replace the minimization part of $f$-GAN by gradient descent in the Wasserstein space $\mathcal P_2(\R^d)$, and \cite{MN21} replaces the minimization part of MMD GAN by gradient descent on a statistical manifold. The resulting gradient-flow algorithms are then advocated as alternatives (and improvements) to the initially-given GAN algorithms. 
Contrary to the common practice ``modifying GANs to form a gradient-flow algorithm'', Proposition~\ref{prop:equivalence} shows that the (vanilla) GAN algorithm itself, without any modification, already computes gradient flows. 
As detailed in Section~\ref{subsec:mode collapse} below, this gradient-flow identification of GANs crucially reveals an unapparent fact: the (vanilla) GAN algorithm implicitly involves an MSE fitting, which can cause or aggravate the divergence of GANs.

Note that Algorithm~\ref{alg:ODE} resembles the gradient-flow algorithms in \cite{Gao19} and \cite{Gao20}, but a subtle difference sets them apart. In Algorithm~\ref{alg:ODE}, the estimation of $D$ (or equivalently, $\rho^{G(Z)}/\rhod$) is coordinated with the gradient-descent update of $G$, so that the {\it zero-sum} game \eqref{GAN} of GANs can be recovered. In \cite{Gao19} and \cite{Gao20}, as the estimation of $\rho^{G(Z)}/\rhod$ and the update of $G$ are attached to different objectives, their algorithms represent {\it non-zero-sum} games. 


\subsection{Implication for the Divergence of GANs}\label{subsec:mode collapse}

Mathematically, the recursive optimization in the GAN algorithm may compute either the desired min-max game value \eqref{GAN} or the corresponding max-min game value. If it computes the latter, the algorithm easily diverges in the form of {\it mode collapse}, i.e., the generator keeps producing $G:\R^n\to\R^d$ that maps many distinct $z\in\R^n$ to similar $y\in\R^d$ and fails to recover the entirety of $\rhod$; see Section 5.1.1 of \cite{Goodfellow16}. 
Beyond this typical ``max-min game'' diagnosis, our gradient-flow analysis zooms in on the GAN algorithm and points to an additional cause for the divergence of GANs. 

By the proof of Proposition~\ref{prop:equivalence}, the update of $G$ in the GAN algorithm can be decomposed into two steps: generating new points $\{y^{(i)}\}_{i=1}^m$ along ODE \eqref{Y GAN} (i.e., \eqref{new points}) and fitting $\{G(z^{i})\}_{i=1}^m$ to $\{y^{(i)}\}_{i=1}^m$ by minimizing the MSE (i.e., \eqref{MSE}). While the MSE fitting demands {\it point-wise} similarity (i.e., $G(z^{(i)})$ is close to $y^{(i)}$, for $i=1,2,...,m$), what we truly need is only {\it set-wise} similarity (i.e., 
the distribution of $\{G(z^{(i)})\}_{i=1}^m$ is similar to that of $\{y^{(i)}\}_{i=1}^m$). 

In Figure~\ref{figure new} (with $n=d=1$), for an initial $G:\R\to\R$ and samples $\{z^{(i)}\}_{i=1}^6$ of $Z$, each $x^{(i)}:= G(z^{(i)})$ is transported to $y^{(i)}$ via \eqref{new points}. Our goal is to modify $G$ such that $\{x^{(i)}\}_{i=1}^6$ becomes similar to $\{y^{(i)}\}_{i=1}^6$. To achieve point-wise similarity, the MSE fitting \eqref{MSE} 
moves each $x^{(i)}= G(z^{(i)})$ toward $y^{(i)}$. 
But since the distance from $x^{(i)}$ to $y^{(i)}$ can be quite large, the movement of $x^{(i)}$ will stop halfway, once the gradient updates $\nabla_{\theta_G}$ in \eqref{MSE} come to a halt near a local minimizer or saddle point in the parameter space for $\theta_G$. In Figure~\ref{figure new}(a), $x^{(i)}$ stops halfway at $\tilde x^{(i)}:= \tilde G(z^{(i)})$ before reaching $y^{(i)}$ (except $i=4$, for which $x^{(4)}$ and $y^{(4)}$ are close), where $\tilde G$ is the updated $G$ obtained from \eqref{MSE}. 
As the distribution of $\{\tilde x^{(i)}\}_{i=1}^6$ is distinct from that of $\{y^{(i)}\}_{i=1}^6$, we go astray from the gradient flow toward $\rhod$. 

The takeaway is that the MSE fitting {\it alone} can cause GANs to diverge. When $\rho^{G(Z)}$ is distinct from $\rhod$, the term $\frac{\nabla D(y)}{2(1-D(y))}$ in ODE \eqref{Y GAN D_t} can be large, so that $G(z^{(i)})$ and $y^{(i)}$ in \eqref{new points} can be far apart. In this case, as explained above, the MSE fitting \eqref{MSE} tends to drive us away from the gradient flow toward $\rhod$. More precisely, the updated $G$ moves $\rho^{G(Z)}$ {\it off} the gradient flow, such that it does not move closer to $\rhod$ in any reasonable sense. As the updated $\rho^{G(Z)}$ remains distinct from $\rhod$, the same argument above indicates that another training iteration is unlikely to help: it will simply update $\rho^{G(Z)}$ again to another distribution distinct from $\rhod$. That is, Algorithm~\ref{alg:ODE} (equivalent to the GAN algorithm) will generate indefinitely one distribution after another, all of which stay distinct from $\rhod$. This is reminiscent of {mode collapse} in Figure 2 of  \cite{Metz17}, where $\rho^{G(Z)}$ rotates among several distributions distinct from $\rhod$ and never converges. 
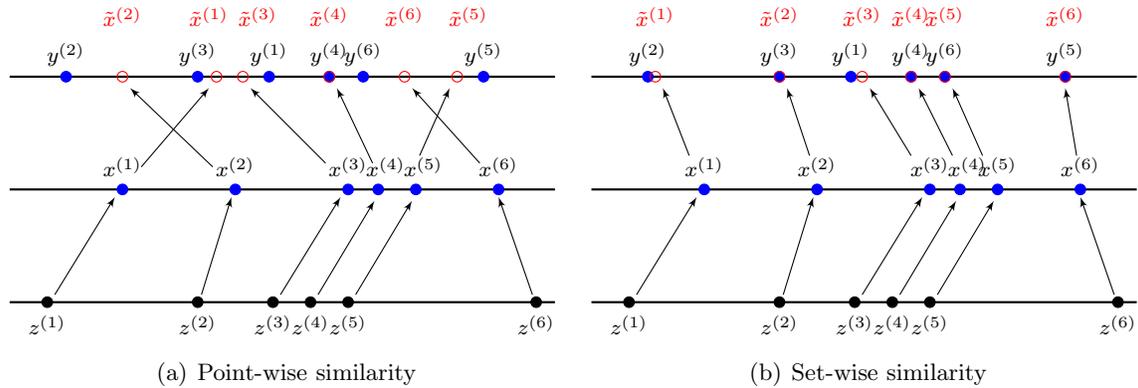
\begin{figure}[h]
\subfigure[Point-wise similarity]
{
\begin{tikzpicture}
\draw[thick] (0,0) -- (7.25,0);
\filldraw [] (0.5,0) circle (2pt);
\filldraw [] (2.5,0) circle (2pt);
\filldraw [] (3.5,0) circle (2pt);
\filldraw [] (4,0) circle (2pt);
\filldraw [] (4.5,0) circle (2pt);
\filldraw [] (7,0) circle (2pt);
\draw [] (0.5,-0.3) node{\scriptsize{$z^{(1)}$}};
\draw [] (2.5,-0.3) node{\scriptsize{$z^{(2)}$}};
\draw [] (3.5,-0.3) node{\scriptsize{$z^{(3)}$}};
\draw [] (4,-0.3) node{\scriptsize{$z^{(4)}$}};
\draw [] (4.5,-0.3) node{\scriptsize{$z^{(5)}$}};
\draw [] (7,-0.3) node{\scriptsize{$z^{(6)}$}};
\draw[thick] (0,1.5) -- (7.25,1.5);
\filldraw [blue] (1.5,1.5) circle (2pt);
\filldraw [blue] (3,1.5) circle (2pt);
\filldraw [blue] (4.5,1.5) circle (2pt);
\filldraw [blue] (4.9,1.5) circle (2pt);
\filldraw [blue] (5.4,1.5) circle (2pt);
\filldraw [blue] (6.5,1.5) circle (2pt);
\draw [] (1.5,1.8) node{\scriptsize{$x^{(1)}$}};
\draw [] (3,1.8) node{\scriptsize{$x^{(2)}$}};
\draw [] (4.5,1.8) node{\scriptsize{$x^{(3)}$}};
\draw [] (5,1.8) node{\scriptsize{$x^{(4)}$}};
\draw [] (5.5,1.8) node{\scriptsize{$x^{(5)}$}};
\draw [] (6.5,1.8) node{\scriptsize{$x^{(6)}$}};
\draw [->,>=latex'] (0.6,0.1) -- (1.4,1.4);
\draw [->,>=latex'] (2.55,0.1) -- (2.95,1.4);
\draw [->,>=latex'] (3.6,0.1) -- (4.4,1.4);
\draw [->,>=latex'] (4.1,0.1) -- (4.85,1.35);
\draw [->,>=latex'] (4.6,0.1) -- (5.35,1.35);
\draw [->,>=latex'] (6.95,0.1) -- (6.5,1.4);
\draw[thick] (0,3) -- (7.25,3);
\filldraw [blue] (3.45,3) circle (2pt);
\filldraw [blue] (0.75,3) circle (2pt);
\filldraw [blue] (2.5,3) circle (2pt);
\filldraw [blue] (4.25,3) circle (2pt);
\filldraw [blue] (6.3,3) circle (2pt);
\filldraw [blue] (4.7,3) circle (2pt);
\draw [] (3.45,3.3) node{\scriptsize{$y^{(1)}$}};
\draw [] (0.75,3.3) node{\scriptsize{$y^{(2)}$}};
\draw [] (2.5,3.3) node{\scriptsize{$y^{(3)}$}};
\draw [] (4.25,3.3) node{\scriptsize{$y^{(4)}$}};
\draw [] (6.3,3.3) node{\scriptsize{$y^{(5)}$}};
\draw [] (4.7,3.3) node{\scriptsize{$y^{(6)}$}};
\draw [red] (2.75,3) circle (2pt);
\draw [red] (1.5,3) circle (2pt);
\draw [red] (3.1,3) circle (2pt);
\draw [red] (4.25,3) circle (2pt);
\draw [red] (5.95,3) circle (2pt);
\draw [red] (5.25,3) circle (2pt);
\draw [red] (2.65,3.8) node{\scriptsize{$\tilde x^{(1)}$}};
\draw [red] (1.5,3.8) node{\scriptsize{$\tilde x^{(2)}$}};
\draw [red] (3.3,3.8) node{\scriptsize{$\tilde x^{(3)}$}};
\draw [red] (4.25,3.8) node{\scriptsize{$\tilde x^{(4)}$}};
\draw [red] (6.1,3.8) node{\scriptsize{$\tilde x^{(5)}$}};
\draw [red] (5.25,3.8) node{\scriptsize{$\tilde x^{(6)}$}};
\draw [->,>=latex'] (1.75,1.8) -- (2.65,2.85);
\draw [->,>=latex'] (2.7,1.8) -- (1.6,2.85);
\draw [->,>=latex'] (4.2,1.85) -- (3.2,2.85);
\draw [->,>=latex'] (4.825,1.85) -- (4.35,2.85);
\draw [->,>=latex'] (5.45,1.95) -- (5.85,2.85);
\draw [->,>=latex'] (6.25,1.85) -- (5.35,2.85);
\end{tikzpicture}
}
\subfigure[Set-wise similarity]
{
\begin{tikzpicture}
\draw[thick] (0,0) -- (7.25,0);
\filldraw [] (0.5,0) circle (2pt);
\filldraw [] (2.5,0) circle (2pt);
\filldraw [] (3.5,0) circle (2pt);
\filldraw [] (4,0) circle (2pt);
\filldraw [] (4.5,0) circle (2pt);
\filldraw [] (7,0) circle (2pt);
\draw [] (0.5,-0.3) node{\scriptsize{$z^{(1)}$}};
\draw [] (2.5,-0.3) node{\scriptsize{$z^{(2)}$}};
\draw [] (3.5,-0.3) node{\scriptsize{$z^{(3)}$}};
\draw [] (4,-0.3) node{\scriptsize{$z^{(4)}$}};
\draw [] (4.5,-0.3) node{\scriptsize{$z^{(5)}$}};
\draw [] (7,-0.3) node{\scriptsize{$z^{(6)}$}};
\draw[thick] (0,1.5) -- (7.25,1.5);
\filldraw [blue] (1.5,1.5) circle (2pt);
\filldraw [blue] (3,1.5) circle (2pt);
\filldraw [blue] (4.5,1.5) circle (2pt);
\filldraw [blue] (4.9,1.5) circle (2pt);
\filldraw [blue] (5.4,1.5) circle (2pt);
\filldraw [blue] (6.5,1.5) circle (2pt);
\draw [] (1.5,1.8) node{\scriptsize{$x^{(1)}$}};
\draw [] (3,1.8) node{\scriptsize{$x^{(2)}$}};
\draw [] (4.5,1.8) node{\scriptsize{$x^{(3)}$}};
\draw [] (5,1.8) node{\scriptsize{$x^{(4)}$}};
\draw [] (5.4,1.8) node{\scriptsize{$x^{(5)}$}};
\draw [] (6.5,1.8) node{\scriptsize{$x^{(6)}$}};
\draw [->,>=latex'] (0.6,0.1) -- (1.4,1.4);
\draw [->,>=latex'] (2.55,0.1) -- (2.95,1.4);
\draw [->,>=latex'] (3.6,0.1) -- (4.4,1.4);
\draw [->,>=latex'] (4.1,0.1) -- (4.85,1.35);
\draw [->,>=latex'] (4.6,0.1) -- (5.35,1.35);
\draw [->,>=latex'] (6.95,0.1) -- (6.5,1.4);
\draw[thick] (0,3) -- (7.25,3);
\filldraw [blue] (3.45,3) circle (2pt);
\filldraw [blue] (0.75,3) circle (2pt);
\filldraw [blue] (2.5,3) circle (2pt);
\filldraw [blue] (4.25,3) circle (2pt);
\filldraw [blue] (6.3,3) circle (2pt);
\filldraw [blue] (4.7,3) circle (2pt);
\draw [] (3.45,3.3) node{\scriptsize{$y^{(1)}$}};
\draw [] (0.75,3.3) node{\scriptsize{$y^{(2)}$}};
\draw [] (2.5,3.3) node{\scriptsize{$y^{(3)}$}};
\draw [] (4.25,3.3) node{\scriptsize{$y^{(4)}$}};
\draw [] (6.3,3.3) node{\scriptsize{$y^{(5)}$}};
\draw [] (4.7,3.3) node{\scriptsize{$y^{(6)}$}};
\draw [red] (0.85,3) circle (2pt);
\draw [red] (2.5,3) circle (2pt);
\draw [red] (3.6,3) circle (2pt);
\draw [red] (4.25,3) circle (2pt);
\draw [red] (4.7,3) circle (2pt);
\draw [red] (6.3,3) circle (2pt);
\draw [red] (0.85,3.8) node{\scriptsize{$\tilde x^{(1)}$}};
\draw [red] (2.5,3.8) node{\scriptsize{$\tilde x^{(2)}$}};
\draw [red] (3.6,3.8) node{\scriptsize{$\tilde x^{(3)}$}};
\draw [red] (4.25,3.8) node{\scriptsize{$\tilde x^{(4)}$}};
\draw [red] (4.7,3.8) node{\scriptsize{$\tilde x^{(5)}$}};
\draw [red] (6.3,3.8) node{\scriptsize{$\tilde x^{(6)}$}};
\draw [->,>=latex'] (1.3,1.95) -- (0.95,2.85);
\draw [->,>=latex'] (2.9,1.95) -- (2.6,2.85);
\draw [->,>=latex'] (4.25,1.95) -- (3.7,2.85);
\draw [->,>=latex'] (4.8,1.95) -- (4.35,2.85);
\draw [->,>=latex'] (5.2,1.95) -- (4.8,2.85);
\draw [->,>=latex'] (6.45,1.95) -- (6.3,2.85);
\end{tikzpicture}
}
\caption{\small{An initial $G:\R\to\R$ maps samples $\{z^{(i)}\}_{i=1}^6$ of $Z$ to $\{x^{(i)}\}_{i=1}^6$. Two distinct $\tilde G:\R\to\R$ serve to relocate $\{x^{(i)}\}_{i=1}^6$ to approximate $\{y^{(i)}\}_{i=1}^6$, the new points obtained in \eqref{new points}, under the criterion of point-wise similarity and set-wise similarity, respectively. The red circles represent $\tilde x^{(i)} := \tilde G(z^{(i)})$, $i=1,...,6$.}}
\label{figure new}
\end{figure}

In fact, if $x^{(i)}=G(z^{(i)})$ is allowed to move toward a nearby $y^{(j)}$, with $j$ possibly different from $i$, the update of $G$ can be made more effective. In Figure~\ref{figure new}(b), as each $x^{(i)}$ only needs to travel a short distance to a nearby $y^{(j)}$, it is likely to reach (or get very close to) $y^{(j)}$, before the gradient updates $\nabla_{\theta_G}$ come to a halt near a local minimizer or saddle point. The resulting distribution on $\R$ of $\{\tilde G(z^{(i)})\}_{i=1}^6$ is then very similar to that of $\{y^{(i)}\}_{i=1}^6$. This suggests a new potential route to encourage convergence of GANs: 
replacing the MSE (a measure of point-wise similarity) in \eqref{MSE} by a measure of set-wise similarity. 
This warrants a detailed analysis in itself and will be left for future research. 

\section{The Nonlinear Fokker-Planck Equation}\label{sec:FP} 
If there is a solution $Y$ to ODE \eqref{Y GAN} (which remains to be proved), a heuristic calculation shows that the density flow $u(t,\cdot) := \rho^{Y_t}(\cdot)\in\cP(\R^d)$, $t\ge 0$, satisfies the nonlinear Fokker-Planck equation
\begin{equation}\label{eq:FP}
\begin{split}
\frac{\partial u}{\partial t} (t,y) & = \frac12 \Div \left( \left(\frac{\nabla u(t,y)}{u(t,y)} -\frac{\nabla\rhod(y)+\nabla u(t,y)}{\rhod(y)+u(t,y)} \right) u(t,y)  \right), \quad u(0,y)=\rho_0(y).
\end{split}
\end{equation}
In view of this, we will construct a solution to \eqref{Y GAN} in a {\it backward} manner. In this section, we will establish the existence of a unique solution $u$ to \eqref{eq:FP}. The function $u$ will then be used, in Section~\ref{sec:ODE solution} below, to construct a solution $Y$ to \eqref{Y GAN}. 
This backward method was recently introduced in \cite{BR20}, for a McKean-Vlasov SDE depending on $\rho^{Y_t}$. As we will see, our case is markedly more involved than \cite{BR20}, due to the additional dependence on $\nabla\rho^{Y_t}$ in \eqref{Y GAN}. 


\subsection{Preparations}
For any fixed density flow $\{\rho_t\}_{t\ge 0}$ in $\cP(\R^d)$, whenever $\rho_t(\cdot)$ is weakly differentiable, we introduce the infinitesimal generator 
\begin{align}\label{cL}
\cL_{\rho_t}\varphi(t,y)&:= -\frac12 \left(\frac{\nabla \rho_t(t,y)}{\rho_t(t,y)} -\frac{\nabla\rhod(y)+\nabla \rho_t(t,y)}{\rhod(y)+\rho_t(t,y)} \right)\cdot \nabla \varphi, \quad \forall \varphi \in C^{1,1}((0,\infty)\times \R^d).
\end{align}
Thanks to integration by parts, we define a weak solution to \eqref{eq:FP} as follows.

\begin{definition}\label{def:weak sol u}
We say $u:[0,\infty)\to \mathcal{P}(\R^d)$ is a {\it weak solution} to the nonlinear Fokker-Planck equation \eqref{eq:FP}, if $u(t,\cdot)$ is weakly differentiable for a.e.\ $t\ge 0$ such that
\begin{equation}\label{eq:u2}
\int_0^\infty \int_{\R^d} \left(\varphi_t(t,y)  + \cL_{u(t,\cdot)} \varphi(t,y) \right)u(t,y) dydt =0,\quad \forall \varphi\in C^{1,2}_c((0,\infty)\times\mathbb{R}^d).
\end{equation}
\end{definition}

\begin{remark}\label{rem:u2'}
As we assume $\varphi(t,\cdot)\in C^2(\R^d)$ in \eqref{eq:u2}, using integration by parts twice gives
\[
\int_{\R^d} \left(\cL_{u(t,\cdot)} \varphi(t,y) \right)u(t,y) dy = \int_{\R^d} \bigg(\frac{1}{2}\frac{\nabla\rhod(y)+\nabla \rho_t(y)}{\rhod(y)+\rho_t(y)}\cdot \nabla \varphi +\frac{1}{2}\Delta \varphi\bigg) u(t,y) dy.
\]
Hence, \eqref{eq:u2} can be equivalently stated as 
\begin{equation}\label{eq:u2'}
\int_0^\infty \int_{\R^d} \left(\varphi_t(t,y)  + \widetilde\cL_{u(t,\cdot)} \varphi(t,y) \right)u(t,y) dydt =0,\quad \forall \varphi\in C^{1,2}_c((0,\infty)\times\mathbb{R}^d),
\end{equation}
where the generator $\widetilde \cL_{\rho_t}$ is defined, for any fixed density flow $\{\rho_t\}_{t\ge 0}$ in $\cP(\R^d)$, by
\begin{align}\label{cL'}
\widetilde \cL_{\rho_t}\varphi(t,y)&:=\frac{1}{2}\frac{\nabla\rhod(y)+\nabla \rho_t(y)}{\rhod(y)+\rho_t(y)}\cdot \nabla \varphi +\frac{1}{2}\Delta \varphi, \quad \forall \varphi \in C^{1,2}(\R_+\times \R^d).
\end{align}
That is to say, the Fokker-Planck equation \eqref{eq:FP} is equivalent (in the weak sense) to 
\begin{equation}\label{eq:FP'}
\begin{split}
\frac{\partial u}{\partial t} (t,y) & = -\Div \left(\frac12  \frac{\nabla\rhod(y)+\nabla u(t,y)}{\rhod(y)+u(t,y)} u(t,y) \right) + \frac12 \Delta u(t,y),\quad u(0,y)=\rho_0(y).
\end{split}
\end{equation}
\end{remark}



To find a weak solution $u$ to \eqref{eq:FP}, or equivalently \eqref{eq:FP'}, let us turn \eqref{eq:FP'} into a more amenable PDE. As $\rhod>0$ (Assumption~\ref{asm}), we consider the ansatz
\[
v(t,y):= \frac{u(t,y)}{\rhod(y)},\quad \forall (t,y)\in[0,\infty)\times\R^d. 
\]
It follows that $u_t=\rhod v_t$, so that
\begin{align*}
-\frac12\Div\left(\frac{\nabla (\rhod+u)}{\rhod+u} u\right)+\frac12 \Delta u &= -\frac12\Div\left(\nabla(\rhod+u)-\frac{\nabla (\rhod (1+v))}{1+v}\right)+\frac12 \Delta u\\
&= -\frac12 \Div\left(\nabla \rhod - \frac{\rhod \nabla (1+v)+(1+v)\nabla \rhod}{1+v}\right)\\
&= \frac12 \Div(\rhod \nabla \ln (1+v))=\frac12 \left(\rhod \Delta \ln (1+v) + \nabla \rhod \cdot \nabla \ln (1+v)\right).
\end{align*}
Hence, \eqref{eq:FP'} can be written in terms of $v$ as
\begin{equation}\label{eq:v0}
v_t=\frac{1}{2}\Delta_{\mud} \ln(1+v), \quad v(0)=\rho_0/\rhod.
\end{equation}
where the operator $\Delta_{\mud}$ is defined by
\begin{equation}\label{Laplacian}
\Delta_{\mud}:=\Delta + \nabla \ln \rhod \cdot \nabla.
\end{equation}

\begin{remark}\label{rem:Laplacian}
Thanks to integration by parts, 
\[
\int_{\R^d} (\Delta_{\mud} w) \varphi d\mud=\int_{\R^d} (\Delta w+ \nabla \ln \rhod \cdot \nabla w) \varphi d\mud = -\int_{\R^d} \nabla w \cdot \nabla \varphi d\mud, \quad \forall w, \varphi \in C^\infty_c(\R^d).
\]
That is, we can view $\Delta_{\mud}$ as the ``Laplacian operator" in the weighted Sobolev space $W^{1,2}(\R^d,\mud)$.
\end{remark}

In light of 
Remark~\ref{rem:Laplacian}, we define weak solutions to \eqref{eq:v0} as below.

\begin{definition}\label{def:weak sol v}
We say $v:[0,\infty)\to L^1(\R^d,\mud)$ is a {\it weak solution} to \eqref{eq:v0} {\it w.r.t.\ $\mud$}, if 
\begin{equation}\label{weak v}
\int_0^\infty \int_{\bR^d} \left( v(t,y) \varphi_t(t,y)+\frac{1}{2}\ln(1+v(t,y)) \Delta_{\mud} \varphi(t,y)\right)  d\mud dt = 0,\quad \forall \varphi\in C^{1,2}_c((0, \infty)\times \bR^d).
\end{equation}
\end{definition}
Note that a weak solution to \eqref{eq:v0} is defined {\it not} under the standard Lebesgue integration on $\R^d$ (in contrast to \eqref{eq:u2} and the classical definition of weak solutions), but under the integration w.r.t.\ $\mud$, the probability measure on $\R^d$ induced by $\rhod\in\cP(\R^d)$.
 
Now, let us recast the transformed Fokker-Planck equation \eqref{eq:v0} as a nonlinear Cauchy problem in the Banach space $L^1(\bR^d, \mud)$. Specifically, under the condition
\begin{equation}\label{rho_0 condition}
\rho_0 \le \beta \rhod\quad \hbox{for some}\ \ \beta>0,
\end{equation}
we define the operator $A: D(A)\subseteq L^1(\bR^d, \mud) \rightarrow L^1(\bR^d, \mud)$ by
\begin{equation}\label{A}
Av := -\frac{1}{2}\Delta_{\mud} \ln(1+v) = -\frac12 \Delta \ln (1+v) - \frac{1}{2} \nabla \ln \rhod \cdot \nabla \ln (1+v),
\end{equation}
where 
\[
D(A):= \{v\in L^1(\bR^d, \mud)\cap H^1_0(\bR^d, \mud): 0\le v\le \beta,\ Av\in L^1(\bR^d, \mud)\}
\]
and the differentiation in \eqref{A} is understood in the weak sense. Then, \eqref{eq:v0} can be expressed as the Cauchy problem
\begin{equation}\label{eq:v}
v_t+Av =0, \quad v(0)={\rho_0}/{\rhod}.
\end{equation}

\begin{remark}
Condition \eqref{rho_0 condition} states that the Radon-Nikodym derivative of the initial probability measure (induced by $\rho_0\in\cP(\R^d)$) w.r.t.\ the underlying measure $\mud$ is bounded. This covers all bounded and compactly supported $\rho_0\in\cP(\R^d)$. Also, note that $\beta\ge 1$ necessarily.
\end{remark}

Our first finding is that both operators $A$ and $-\Delta_{\mud}$ are ``accretive'' in $L^1(\R^d,\mud)$. To properly state this property, we will denote by $I$ the identity map on $L^1(\R^d,\mud)$ (i.e., $I(v)=v$ for all $v\in L^1(\R^d,\mud)$) in the next result, whose proof is relegated to Appendix~\ref{subsec:proof of lem:accretive}.

\begin{lemma}\label{lem:accretive}
\begin{itemize}
\item [(i)] The operator $A:D(A)\to L^1(\R^d,\mud)$ is accretive. That is, 
\[
\norm{v_1-v_2}_{L^1(\bR^d, \mud)}\le \norm{(I+\lambda A)v_1-(I+\lambda A)v_2}_{L^1(\bR^d, \mud)},\quad \forall v_1, v_2 \in D(A),\ \lambda>0. 
\]
\item [(ii)] The operator $-\Delta_{\mud}: D(-\Delta_{\mud})\to L^1(\R^d,\mud)$, with 
\[
D(-\Delta_{\mud}):=\{w\in L^1(\bR^d, \mud)\cap H^1_0(\bR^d, \mud): -\Delta_{\mud} w \in L^1(\bR^d, \mud)\},
\] 
is accretive. That is, 
\[
\norm{w_1-w_2}_{L^1(\bR^d, \mud)}\le \norm{(I-\lambda \Delta_{\mud}) w_1-(I-\lambda \Delta_{\mud}) w_2}_{L^1(\bR^d, \mud)},\ \ \forall w_1, w_2 \in D(-\Delta_{\mud}),\ \lambda>0.
\]
\end{itemize}
\end{lemma}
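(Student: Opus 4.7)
I would reduce both parts to a single Kato-type inequality for the weighted Laplacian $\Delta_{\mud}$, using the $L^1$--$L^\infty$ duality
\[
\|f\|_{L^1(\R^d,\mud)} = \sup\Big\{\int f\sigma\,d\mud \ :\ \sigma\in L^\infty(\R^d,\mud),\ \|\sigma\|_{L^\infty}\le 1\Big\}.
\]
Accretivity of an operator $T$ in $L^1(\R^d,\mud)$ is equivalent to the following: for each pair $(v_1,v_2)$ in the domain, one can exhibit an admissible ``sign function'' $\sigma\in L^\infty$ with $|\sigma|\le 1$ and $\sigma(v_1-v_2)=|v_1-v_2|$ a.e., such that $\int \sigma(Tv_1-Tv_2)\,d\mud\ge 0$. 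One then writes $|v_1-v_2|=\sigma[(v_1-v_2)+\lambda(Tv_1-Tv_2)] - \lambda\sigma(Tv_1-Tv_2)$, integrates against $\mud$, and applies the duality bound to the first term.

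\textbf{Part (ii).} Set $w:=w_1-w_2\in H^1_0(\R^d,\mud)$. Approximate $\operatorname{sgn}(\cdot)$ by a family of smooth, non-decreasing, bounded Lipschitz functions $p_\eps:\R\to[-1,1]$ with $p_\eps(0)=0$ and $p_\eps\to\operatorname{sgn}$ pointwise. Since $p_\eps$ is Lipschitz, the chain rule gives $p_\eps(w)\in H^1_0(\R^d,\mud)$ with weak gradient $p_\eps'(w)\nabla w$. Extending the identity of Remark~\ref{rem:Laplacian} by density from $C^\infty_c(\R^d)$ to $H^1_0(\R^d,\mud)$,
\[
\int p_\eps(w)(-\Delta_{\mud} w)\,d\mud = \int p_\eps'(w)|\nabla w|^2\,d\mud\ \ge\ 0.
\]
Since $-\Delta_{\mud} w\in L^1(\mud)$ and $|p_\eps(w)|\le 1$, dominated convergence as $\eps\downarrow 0$ yields $\int \sigma\,(-\Delta_{\mud} w)\,d\mud\ge 0$, where $\sigma$ is the pointwise limit of a subsequence of $\{p_\eps(w)\}$; in particular $\sigma(y)=\operatorname{sgn}(w(y))$ on $\{w\ne 0\}$ and $|\sigma|\le 1$ elsewhere, so $\sigma$ belongs to the dual class of $w$. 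The strategy paragraph then concludes.

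\textbf{Part (i).} This reduces to part (ii) via the substitution $g_i:=\ln(1+v_i)$. The bound $0\le v_i\le\beta$ makes $g_i$ bounded, and $\nabla g_i=\nabla v_i/(1+v_i)$ shows $g:=g_1-g_2\in H^1_0(\R^d,\mud)$. The key algebraic fact is that strict monotonicity of $s\mapsto\ln(1+s)$ forces $\operatorname{sgn}(v_1-v_2)=\operatorname{sgn}(g)$ on $\{v_1\ne v_2\}$. Applying the $\eps$-approximation argument of part (ii) to $g$ in place of $w$ then gives
\[
\int \operatorname{sgn}(v_1-v_2)(Av_1-Av_2)\,d\mud \ =\ -\tfrac12\int\operatorname{sgn}(g)\,\Delta_{\mud} g\,d\mud\ \ge\ 0,
\]
and the duality argument of the strategy paragraph finishes the accretivity of $A$.

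\textbf{Main obstacle.} The only nontrivial step is upgrading the weighted integration-by-parts identity $\int \varphi(-\Delta_{\mud} w)\,d\mud = \int\nabla\varphi\cdot\nabla w\,d\mud$ from the $C^\infty_c$ setting in Remark~\ref{rem:Laplacian} to test functions $\varphi=p_\eps(w)\in H^1_0(\R^d,\mud)$, given only $w\in H^1_0(\R^d,\mud)$ and $\Delta_{\mud} w\in L^1(\mud)$ (no $L^2$-regularity for $\Delta_{\mud} w$, no compact support). This is where one really needs the hypothesis $\ln\rhod\in H^1(\R^d,\mud)$ from Assumption~\ref{asm} to control the drift term $\nabla\ln\rhod\cdot\nabla\ln(1+v)$, together with a truncation-plus-mollification argument in the weighted Sobolev space. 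Once this identity is secured, the sign-approximation/dominated-convergence passage and the reduction of (i) to (ii) are essentially mechanical.
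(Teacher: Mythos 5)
Your proposal is correct, and its core mechanism---testing the difference of the two equations against an (approximate) sign function of $\ln(1+v_1)-\ln(1+v_2)$ and discarding the resulting nonnegative gradient term---is the same as the paper's, but the packaging is genuinely different. The paper proves a general accretivity lemma for operators $-\Delta_{\mud}g$ with $g$ concave and strictly increasing: it works directly with the subtracted resolvent identity $\int v\chi_\eps(h)\,d\mud+\lambda\int|\nabla h|^2\chi'_\eps(h)\,d\mud=\int f\chi_\eps(h)\,d\mud$ (with $h=g(v_1)-g(v_2)$ and $f=(I+\lambda A)v_1-(I+\lambda A)v_2$), and must then control the discrepancy $\int_{\{|h|\le\eps\}}\left(v h/\eps-|v|\right)d\mud$; this is precisely where concavity enters, via $|h|\ge|v_1-v_2|\,g'_+(\max(v_1,v_2))$ together with the integrability of $1/g'_+$. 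Your route instead isolates the dissipativity statement $\int\sigma\,(Av_1-Av_2)\,d\mud\ge0$ for a genuine sign function $\sigma$ of $v_1-v_2$ and feeds it into the $L^1$--$L^\infty$ duality; since the limit $\eps\downarrow 0$ is taken only in the pairing with $\Delta_{\mud}g\in L^1(\R^d,\mud)$ (dominated convergence), no transition-region estimate and no concavity of $\ln(1+\cdot)$ are needed---strict monotonicity alone matches the signs. This makes your argument slightly cleaner and marginally more general, at the price of invoking the duality characterization of accretivity rather than verifying the resolvent inequality by hand. Both arguments hinge on the same technical step that neither fully elaborates: extending the integration-by-parts identity of Remark~\ref{rem:Laplacian} to test functions in $H^1_0(\R^d,\mud)\cap L^\infty(\R^d,\mud)$ when the weighted Laplacian of the other factor is only in $L^1(\R^d,\mud)$. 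You flag this explicitly as the main obstacle, whereas the paper asserts it with a one-line justification, so this is not a gap relative to the paper's own standard of rigor.
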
 

As shown in Section~\ref{subsec:existence} below, the accretiveness of $A$ and $-\Delta_{\mud}$ will play a crucial role in proving the existence of a unique weak solution to \eqref{eq:v}. To get a glimpse of the potential use of the accretiveness, let us look at the equation $(I+\lambda A)v=f$. 

\begin{definition}
For any $\lambda>0$ and $f\in L^1(\R^d,\mud)$, we say $v\in D(A)$ is a weak solution to $(I+\lambda A)v=f$ w.r.t.\ $\mud$ if 
\begin{equation}\label{weak v'}
\int_{\bR^d} \left( v(y) \varphi(y)+\frac{\lambda}{2}\ln(1+v(y)) \Delta_{\mud} \varphi(y)\right)  d\mud = \int_{\R^d} f(y)\varphi(y) d\mud,\quad \forall \varphi\in C^{2}_c(\bR^d).
\end{equation}
\end{definition}


\begin{corollary}\label{coro:uniqueness}
For any $\lambda>0$ and $f\in L^1(\R^d,\mud)$, $(I+\lambda A)v=f$ has at most one weak solution $v\in D(A)$ w.r.t.\  $\mud$. Such a solution, if it exists, satisfies $\int_{\R^d} v d\mud = \int_{\R^d} f d\mud$. 
\end{corollary}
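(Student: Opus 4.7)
The plan is to derive both assertions from the accretiveness of $A$ (Lemma~\ref{lem:accretive}(i)) together with a cutoff argument in the weak formulation~\eqref{weak v'}.

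For uniqueness, the first step is to observe that, for $v\in D(A)$, the weak formulation~\eqref{weak v'} is equivalent to the strong identity $(I+\lambda A)v=f$ holding in $L^1(\R^d,\mud)$. Indeed, since $0\le v\le \beta$ and $v\in H^1_0(\R^d,\mud)$, the composition $\ln(1+v)$ also lies in $H^1_0(\R^d,\mud)$ (by a standard Lipschitz-composition argument); together with $Av = -\tfrac12\Delta_{\mud}\ln(1+v)\in L^1(\R^d,\mud)$ and the symmetry of $\Delta_{\mud}$ following from Remark~\ref{rem:Laplacian}, one can shift $\Delta_{\mud}$ between $\ln(1+v)$ and the smooth test function $\varphi\in C^2_c(\R^d)$, so that~\eqref{weak v'} rewrites as $\int_{\R^d}((I+\lambda A)v)\varphi\,d\mud=\int_{\R^d}f\varphi\,d\mud$ for every such $\varphi$; density of $C^2_c(\R^d)$ in $L^\infty(\R^d,\mud)$ forces $(I+\lambda A)v=f$ $\mud$-a.e. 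Given two weak solutions $v_1,v_2\in D(A)$, Lemma~\ref{lem:accretive}(i) then yields $\|v_1-v_2\|_{L^1(\R^d,\mud)}\le\|(I+\lambda A)v_1-(I+\lambda A)v_2\|_{L^1(\R^d,\mud)}=0$.

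For the mass identity $\int v\,d\mud=\int f\,d\mud$, one would formally test~\eqref{weak v'} against $\varphi\equiv 1$, exploiting $\Delta_{\mud}1=0$. Since $1\notin C^2_c(\R^d)$, the natural remedy is to fix a smooth radial cutoff $\chi:[0,\infty)\to[0,1]$ equal to $1$ on $[0,\tfrac12]$ and to $0$ on $[1,\infty)$, set $\varphi_n(y):=\chi(|y|/n)\in C^2_c(\R^d)$, and pass $n\to\infty$. Direct estimates give $|\nabla\varphi_n|\le C/n$ and $|\Delta\varphi_n|\le C/n^2$, both supported in the shell $\{n/2\le |y|\le n\}$. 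The terms $\int v\varphi_n\,d\mud$ and $\int f\varphi_n\,d\mud$ converge to the desired integrals by dominated convergence ($v,f\in L^1(\mud)$). For the middle term, $0\le \ln(1+v)\le \ln(1+\beta)$ together with
\[
|\Delta_{\mud}\varphi_n|\le |\Delta\varphi_n|+|\nabla\ln\rhod||\nabla\varphi_n|
\]
reduces matters to showing $\int |\Delta\varphi_n|\,d\mud\to 0$ (trivial from $C/n^2$) and $\int|\nabla\ln\rhod||\nabla\varphi_n|\,d\mud\to 0$; the latter is controlled via Cauchy-Schwarz by $(C/n)\bigl(\int|\nabla\ln\rhod|^2\,d\mud\bigr)^{1/2}\mud(\{n/2\le|y|\le n\})^{1/2}$, which vanishes thanks to the Fisher-information bound $\ln\rhod\in H^1(\R^d,\mud)$ in Assumption~\ref{asm} and $\mud(\R^d)=1$.

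The main obstacle, in my view, is the bookkeeping that upgrades~\eqref{weak v'} tested against $C^2_c$ functions to the pointwise $L^1$ identity needed to invoke accretiveness. Remark~\ref{rem:Laplacian} states the integration-by-parts formula only for $w,\varphi\in C^\infty_c$, whereas here it must be applied with $w=\ln(1+v)\in H^1_0(\R^d,\mud)$; this calls for a short density approximation in the weighted Sobolev space $H^1_0(\R^d,\mud)$, whose Banach-space character is guaranteed by $\ln\rhod\in L^\infty_{\mathrm{loc}}(\R^d)$ in Assumption~\ref{asm}. Once this step is justified, both claims follow as essentially formal consequences of Lemma~\ref{lem:accretive}(i) and Assumption~\ref{asm}.
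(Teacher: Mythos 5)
Your proof is correct and follows essentially the same route as the paper: accretiveness (Lemma~\ref{lem:accretive}(i)) for uniqueness, and testing \eqref{weak v'} against (an approximation of) the constant function $1$ for the mass identity. You supply two details the paper leaves implicit---the upgrade from the weak formulation to the pointwise identity $(I+\lambda A)v=f$ in $L^1(\R^d,\mud)$ needed to invoke accretiveness, and the explicit cutoff computation verifying in effect that $\varphi\equiv 1$ lies in $H^1_0(\R^d,\mud)$---and both are handled correctly.
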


\begin{proof}
The uniqueness part follows from Lemma~\ref{lem:accretive} (i). Let $v\in D(A)$ be the unique weak solution to $(I+\lambda A)v=f$ w.r.t.\  $\mud$. Since $C^{2}_c(\bR^d)$ is dense in $H^1_0(\R^d,\mud)$, \eqref{weak v'} in fact holds for all $\varphi\in H^1_0(\R^d,\mud)$. Taking $\varphi\equiv 1\in H^1_0(\R^d,\mud)$ in \eqref{weak v'}  yields $\int_{\R^d} v d\mud = \int_{\R^d} f d\mud$.   
\end{proof}


\subsection{Existence and Uniqueness}\label{subsec:existence} 
Thanks to the accretiveness of $A$ in Lemma~\ref{lem:accretive}, we can invoke the Crandall-Liggett theorem to identify a sufficient condition, i.e., \eqref{domain condition} below, for \eqref{eq:v0} to admit a weak solution w.r.t.\ $\mud$. To properly state the result, we denote by $\ubar{D(A)}$ the closure of $D(A)$ under the $L^1(\R^d,\mud)$-norm. Also, for any $\lambda>0$, we introduce the range of $I+\lambda A$ in $L^1(\R^d,\mud)$, defined by
\[
R(I+\lambda A) := \{f\in L^1(\R^d,\mud) : \hbox{$(I+\lambda A)v=f$ has a weak solution $v\in D(A)$ w.r.t.\ $\mud$}\}. 
\]
By Corollary~\ref{coro:uniqueness}, for each $f\in R(I+\lambda A)$, the corresponding weak solution $v\in D(A)$ is unique. Hence, we will constantly write $v = (I+\lambda A)^{-1} f$.  

\begin{proposition}\label{prop:CL thm}
Let $\rho_0\in\cP(\R^d)$ satisfy \eqref{rho_0 condition}. If
\begin{equation}\label{domain condition}
\ubar{D(A)}\subseteq R(I+\lambda A)\quad \hbox{for $\lambda>0$ small enough},
\end{equation}
there exists a weak solution $v:[0,\infty)\to D(A)$ to \eqref{eq:v0} w.r.t.\ $\mud$ (cf.\ Definition~\ref{def:weak sol v}). 
Moreover, $v$ is continuous (i.e.,\ $v\in C([0,\infty);L^1(\R^d,\mud))$) and satisfies
\begin{equation}\label{exp form}
v(t)=\lim_{n\rightarrow \infty}\bigg(I+\frac{t}{n}A\bigg)^{-n}v(0)\quad\hbox{in}\ \ L^1(\R^d,\mud),\quad \forall t\ge 0,
\end{equation}
where the convergence is uniform in $t$ on compact intervals.
\end{proposition}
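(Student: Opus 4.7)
The plan is to apply the Crandall–Liggett theorem for nonlinear semigroups generated by $m$-accretive operators in Banach spaces. The two hypotheses required are precisely (a) accretiveness of $A$, supplied by Lemma~\ref{lem:accretive}(i), and (b) the range condition $\ubar{D(A)} \subseteq R(I+\lambda A)$ for all sufficiently small $\lambda>0$, which is exactly \eqref{domain condition}. Before invoking the theorem, I would verify $v(0)=\rho_0/\rhod \in \ubar{D(A)}$: condition \eqref{rho_0 condition} gives $0\le v(0)\le \beta$ and $v(0)\in L^1(\R^d,\mud)$, and $v(0)$ can then be approximated in $L^1(\R^d,\mud)$ by compactly supported smooth functions truncated by $\beta$, each of which lies in $D(A)$. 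With these inputs, Crandall–Liggett directly produces, for every $t\ge 0$, the limit $v(t):=\lim_{n\to\infty}(I+\tfrac{t}{n}A)^{-n}v(0)$ in $L^1(\R^d,\mud)$, the convergence being uniform on compact time intervals, which yields \eqref{exp form} and simultaneously gives $v\in C([0,\infty);L^1(\R^d,\mud))$ essentially for free.

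The remaining task is to verify that this mild solution is a weak solution in the sense of Definition~\ref{def:weak sol v}. Fix $T>0$, set $\lambda_n:=T/n$, and define recursively $v_0^n:=v(0)$ and $v_k^n:=(I+\lambda_n A)^{-1}v_{k-1}^n$ for $k=1,\dots,n$. Each $v_k^n$ satisfies \eqref{weak v'} with $f=v_{k-1}^n$, so for any $\varphi\in C^{1,2}_c((0,T)\times\R^d)$, testing with $\psi_k:=\varphi(k\lambda_n,\cdot)\in C^2_c(\R^d)$ and summing over $k$ gives
\begin{equation*}
\sum_{k=1}^n \int_{\R^d}(v_k^n - v_{k-1}^n)\psi_k\,d\mud = -\sum_{k=1}^n \frac{\lambda_n}{2}\int_{\R^d}\ln(1+v_k^n)\,\Delta_{\mud}\psi_k\,d\mud.
\end{equation*}
Abel summation on the left rewrites it as $-\sum_{k} \lambda_n \int_{\R^d} v_k^n \frac{\psi_{k+1}-\psi_k}{\lambda_n}\,d\mud$ plus negligible boundary terms (since $\varphi$ has compact support in time), giving the discrete analogue of \eqref{weak v}. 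Passing to the limit $n\to\infty$ then uses: (i) uniform convergence on $[0,T]$ of the piecewise-constant interpolants of $\{v_k^n\}$ to $v(\cdot)$ in $L^1(\R^d,\mud)$, from the exponential formula; (ii) the smoothness of $\varphi$ to convert the discrete difference quotient into $\varphi_t$; and (iii) the Lipschitz bound $|\ln(1+a)-\ln(1+b)|\le |a-b|$ together with the uniform bound $0\le v_k^n\le \beta$ to push $L^1(\R^d,\mud)$ convergence of $v_k^n$ through the nonlinearity.

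The main obstacle I anticipate is establishing the uniform pointwise bound $0\le v_k^n\le \beta$ needed in (iii). This is essentially a maximum-principle statement for the resolvent: the operator $(I+\lambda A)^{-1}$ must preserve both nonnegativity and the upper bound $\beta$. Both preservation properties should be provable by testing the resolvent equation against suitable sign-truncations (e.g.\ $\mathbf{1}_{\{v>\beta\}}$ and $\mathbf{1}_{\{v<0\}}$), in the same spirit as the proof of Lemma~\ref{lem:accretive}(i) but now comparing $v_k^n$ against the constants $0$ and $\beta$, which are themselves fixed points of $(I+\lambda A)^{-1}$ since $A$ annihilates constants. Once the $L^\infty$ bound is in place, everything else is standard dominated convergence, and the weak formulation \eqref{weak v} follows.
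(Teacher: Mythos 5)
Your proposal is correct and follows essentially the same route as the paper: Crandall--Liggett applied to the accretive operator $A$ under the range condition \eqref{domain condition}, with $\rho_0/\rhod\in\ubar{D(A)}$ checked by density of $C^\infty_c(\R^d)$, followed by a discrete summation-by-parts on the resolvent iterates to upgrade the mild solution to a weak solution in the sense of Definition~\ref{def:weak sol v}. The one obstacle you flag---the uniform bound $0\le v_k^n\le\beta$---is automatic in this setup, since $(I+\lambda A)^{-1}$ by definition returns the unique weak solution lying in $D(A)$, whose elements satisfy $0\le v\le\beta$ by construction (and the existence proof in Lemma~\ref{lem:domain condition} produces such a solution via the sub/supersolutions $0$ and $\ln(1+\beta)$, which is exactly the maximum-principle comparison you propose).
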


\begin{proof}
By definition, $\{v\in L^1(\R^d,\mud): v\in C^\infty_c(\bR^d),\ 0\le v\le \beta\}\subseteq D(A)$. As $C^\infty_c(\bR^d)$ is dense in $L^1(\bR^d, \mud)$, we have $\{v\in L^1(\R^d,\mud): 0\le v\le \beta\}\subseteq \ubar{D(A)}$. By \eqref{rho_0 condition}, this implies $\rho_0/\rhod\in \ubar{D(A)}$. Due to the accretiveness of $A$ in Lemma~\ref{lem:accretive} (i), \cite[p.99, (3.5)]{Barbu-book-2010} is satisfied with $\omega=0$; that is, $A$ is $\omega$-accretive with $\omega=0$, in the terminology of \cite{Barbu-book-2010}. Under the $\omega$-accretiveness of $A$, \eqref{domain condition}, and $\rho_0/\rhod\in \ubar{D(A)}$, the Crandall-Liggett theorem (see e.g., Theorem 4.3 in \cite{Barbu-book-2010}) asserts that \eqref{eq:v}, or equivalently \eqref{eq:v0}, has a unique {\it mild solution} $v\in C([0,\infty);L^1(\R^d,\mud))$ (in the sense of Definition 4.3 in \cite{Barbu-book-2010})), which fulfills \eqref{exp form}. 
Note that the operator $(I+\eps A)^{-1}:\ubar{D(A)}\to D(A)$ is well-defined for $\eps>0$ small, thanks to \eqref{domain condition} and Corollary~\ref{coro:uniqueness}, so that the right-hand side of \eqref{exp form} is well-defined. 
It remains to show that the mild solution $v$ is a weak solution w.r.t.\ $\mud$. 

As stated below Theorem 4.3 in \cite{Barbu-book-2010}, 
the mild solution $v\in C([0,\infty);L^1(\R^d,\mud))$ 
satisfies the following: for $\eps>0$ small enough, 
by taking $v^0 := \rho_0/\rhod$ and $v^i= (I+\eps A)^{-1}v^{i-1}\in D(A)$ for all $i\in\N$, 
we have
\begin{equation}\label{v_eps}
v(t) = \lim_{\eps\to 0} v_\eps(t)\ \ \hbox{in}\ L^1(\R^d,\mud),\ \ \hbox{uniformly in $t$ on compact intervals},
\end{equation}
where $v_\eps:[0, \infty)\to L^1(\R^d,\mud)$ is defined by $v_{\eps}(t) := v^{i-1}$ for $t\in [(i-1)\eps, i\eps)$, $i\in\N$. Observe from the construction of $v_\eps$ that
\[
\frac{v_\eps(t+\eps,y)-v_\eps (t,y)}{\eps} -\frac{1}{2}\Delta_{\mud} \ln(1+v_\eps) (t+\eps, y)=0.
\]
Hence, for any $\varphi\in C^{1,2}_c((0, \infty)\times \bR^d)$, by using the calculation in Remark~\ref{rem:Laplacian} twice, we get 
\begin{align*}
0&=\int_{\bR^d} \left( \frac{v_\eps(t+\eps,y)-v_\eps (t,y)}{\eps} \varphi(t+\eps,y) -\frac{1}{2} \ln (1+v_\eps) (t+\eps, y) \Delta_{\mud} \varphi(t+\eps,y)\right) d\mud\\
&= \int_{\bR^d} \bigg(\frac{(v_\eps \varphi)(t+\eps,y)- (v_\eps\varphi)(t,y)}{\eps}-v_\eps(t,y) \frac{\varphi(t+\eps, y)-\varphi(t,y)}{\eps}\\
&\hspace{2.5in}-\frac{1}{2}\ln (1+v_\eps)(t+\eps, y) \Delta_{\mud} \varphi(t+\eps,y)\bigg) d\mud.
\end{align*}
Integrating the above equation over $t\in (0,\infty)$ yields
\begin{align*}
&\int_0^\infty \int_{\R^d}\bigg(v_\eps(t,y) \frac{\varphi(t+\eps, y)-\varphi(t,y)}{\eps}+\frac{1}{2}\ln(1+v_\eps)(t+\eps, y) \Delta_{\mud} \varphi(t+\eps,y)\bigg) d\mud dt \\
&\hspace{4in}= -\frac{1}{\eps}\int_0^\eps \int_{\bR^d} (v_\eps \varphi)(t,y) d\mud dt. 
\end{align*}
As $\eps\rightarrow 0$, by $\varphi\in C^{1,2}_c((0, \infty)\times \bR^d)$ and $0\le v_\eps\le \beta$  $\forall \eps>0$ small (as $v_\eps\in\overline{D(A)}$ by construction), the left-hand side tends to $\int_0^\infty \int_{\bR^d} \left(v \varphi_t+\frac{1}{2}\ln(1+v) \Delta_{\mud} \varphi\right)  d\mud dt$ and the right-hand side vanishes. That is, \eqref{weak v} is satisfied, i.e., $v$ is a weak solution to \eqref{eq:v0} w.r.t.\ $\mud$. 
\end{proof}

To establish the sufficient condition \eqref{domain condition}, 
we need to show that as $\lambda>0$ is small enough, there exists a weak solution $v\in D(A)$ to $(I+\lambda A)v=f$ w.r.t.\ $\mud$, for any $f\in \ubar{D(A)}$. To this end, we consider the change of variable $w:=\ln (1+v)$ and rewrite $(I+\lambda A)v=f$ as
\begin{equation}\label{eq:nonlinear-w}
-\frac{\lambda}{2}\Delta_{\mud} w = -\frac{\lambda}{2} \Delta w - \frac{\lambda}{2} \nabla \ln \rhod \cdot \nabla w = f+1-e^{w}.
\end{equation}
To find a weak solution to \eqref{eq:nonlinear-w}, we use the method of subsolutions and supersolutions, motivated by Evans \cite[Section 9.3]{Evans-book-98}. Notably, the arguments therein needs to be extended from the integration w.r.t.\ $\Leb$ over a bounded domain to the integration w.r.t.\ $\mud$ over $\R^d$. 
The detailed derivation is relegated to Appendix~\ref{subsec:proof of lem:domain condition} and it leads to, in fact, a {\it stronger} version of \eqref{domain condition}. 

\begin{lemma}\label{lem:domain condition}
$\ubar{D(A)}\subseteq R(I+\lambda A)$ for all $\lambda>0$. That is, for any $\lambda>0$ and $f\in \ubar{D(A)}$, there exists a weak solution $v\in D(A)$ to $(I+\lambda A)v=f$ w.r.t.\ $\mud$. 
\end{lemma}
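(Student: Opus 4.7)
Fix $\lambda>0$ and $f\in \ubar{D(A)}$. Since every element of $D(A)$ lies in $[0,\beta]$, the $L^1(\R^d,\mud)$-limit $f$ satisfies $0\le f\le\beta$ a.e.\ and $f\in L^1(\R^d,\mud)$. Following the hint, make the change of variable $w=\ln(1+v)$, so that finding $v\in D(A)$ with $(I+\lambda A)v=f$ is equivalent to finding $w\in H^1_0(\R^d,\mud)$ with $0\le w\le\ln(1+\beta)$ solving the semilinear elliptic equation
\begin{equation}\label{eq:w-ell}
-\tfrac{\lambda}{2}\Delta_{\mud} w + e^{w} \;=\; f+1 \qquad \text{weakly w.r.t.\ } \mud,
\end{equation}
in the sense that $\int \tfrac{\lambda}{2}\nabla w\cdot\nabla\varphi\,d\mud+\int e^{w}\varphi\,d\mud=\int(f+1)\varphi\,d\mud$ for all $\varphi\in C^2_c(\R^d)$. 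The proof proceeds in three steps: set up a subsolution/supersolution sandwich, solve a linear problem at each step of a monotone iteration, and pass to the limit.

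\textbf{Sub/supersolution and linearised iteration.} Choose $\lbar w\equiv 0$ and $\ubar w\equiv \ln(1+\beta)$. Because $0\le f\le\beta$, $\lbar w$ is a subsolution and $\ubar w$ is a supersolution of \eqref{eq:w-ell}. Set $K:=1+\beta$, so that $s\mapsto Ks-e^{s}$ is non-decreasing on $[0,\ln(1+\beta)]$. Define iteratively $w_0:=\ubar w$ and, for $k\ge 0$, let $w_{k+1}\in H^1_0(\R^d,\mud)$ be the unique weak solution in $H^1(\R^d,\mud)$ of
\begin{equation}\label{eq:lin-iter}
-\tfrac{\lambda}{2}\Delta_{\mud} w_{k+1} + K w_{k+1} \;=\; F(w_k) \;:=\; f+1 + K w_k - e^{w_k} .
\end{equation}
Existence and uniqueness of $w_{k+1}$ follow from Lax--Milgram applied to the bilinear form $a(\psi,\varphi)=\tfrac{\lambda}{2}\int\nabla\psi\cdot\nabla\varphi\,d\mud+K\int\psi\varphi\,d\mud$ on $H^1_0(\R^d,\mud)$, which is coercive thanks to the zeroth-order term (no Poincar\'e inequality needed). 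The right-hand side $F(w_k)\in L^2(\R^d,\mud)$ because $w_k$ is bounded and $\mud$ is a probability measure.

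\textbf{Monotonicity and pointwise limit.} Using a weighted $L^1$ maximum-principle argument analogous to Lemma~\ref{lem:accretive}(ii) (test \eqref{eq:lin-iter} against $\mathrm{sign}^+(w_{k+1}-w_k)$, smoothed as in the proof of Lemma~\ref{lem:accretive}), one shows $0\le w_{k+1}\le w_k\le \ln(1+\beta)$ a.e., using the monotonicity of $F$ on $[0,\ln(1+\beta)]$ and the fact that $\lbar w,\ubar w$ are sub/supersolutions. Hence $w_k\downarrow w$ a.e.\ for some bounded measurable $w$ with $0\le w\le \ln(1+\beta)$. Dominated convergence (with the constant majorant $\ln(1+\beta)$) gives $w_k\to w$ and $F(w_k)\to F(w)$ strongly in $L^p(\R^d,\mud)$ for every $p\in[1,\infty)$.

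\textbf{Passage to the limit and verification.} Testing \eqref{eq:lin-iter} against $w_{k+1}$ and using the uniform $L^\infty$ bound yields a uniform $H^1(\R^d,\mud)$ estimate on $\{w_k\}$; by weak compactness and uniqueness of the pointwise limit, $w_k\rightharpoonup w$ in $H^1(\R^d,\mud)$. Passing to the limit in \eqref{eq:lin-iter} against any $\varphi\in C^2_c(\R^d)$ (the linear terms pass by weak convergence of $\nabla w_k$ and strong convergence of $w_k$; the nonlinear $e^{w_k}$ passes by dominated convergence) and cancelling the common $Kw$ terms yields \eqref{eq:w-ell}. Setting $v:=e^{w}-1$, one checks $0\le v\le\beta$, $\ln(1+v)=w\in H^1_0(\R^d,\mud)$, and \eqref{weak v'} holds, while $Av=(f-v)/\lambda\in L^1(\R^d,\mud)$ since $v,f\in L^1(\R^d,\mud)$. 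Thus $v\in D(A)$ and $(I+\lambda A)v=f$.

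\textbf{Main obstacle.} The delicate point is that Evans's sub/supersolution machinery is built on a bounded domain with the Lebesgue measure, where Poincar\'e, Sobolev embedding, and the strong maximum principle are automatic. Here we have an unbounded domain and a weighted operator $\Delta_{\mud}=\Delta+\nabla\ln\rhod\cdot\nabla$ whose weight $\mud$ need not satisfy a Poincar\'e inequality. The remedy is to rely on the $L^\infty$ bounds coming from the sub/supersolutions together with the zeroth-order term $Kw$ in \eqref{eq:lin-iter}, which restores coercivity and makes the $L^1$-style maximum principle of Lemma~\ref{lem:accretive}(ii) applicable to the linearised equation, bypassing any need for Poincar\'e-type estimates in $W^{1,p}(\R^d,\mud)$.
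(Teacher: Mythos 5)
Your proposal is correct and follows essentially the same route as the paper's Appendix~B.2: the substitution $w=\ln(1+v)$, the constant sub/supersolution pair $0$ and $\ln(1+\beta)$, a monotone iteration on the linearized equation with the zeroth-order term $Kw$ (the paper's $\alpha w$ with $\alpha\ge 1+\beta$) solved by Lax--Milgram, an $L^1$/maximum-principle comparison to get monotonicity, and a passage to the limit yielding $v=e^{w}-1\in D(A)$. The only differences are cosmetic: you run only the decreasing iteration from the supersolution and identify the limit by weak $H^1(\R^d,\mud)$ compactness, whereas the paper builds both monotone sequences and instead uses continuity of the resolvent $(\alpha I-\tfrac{\lambda}{2}\Delta_{\mud})^{-1}$ from $L^2(\R^d,\mud)$ to $H^1_0(\R^d,\mud)$ together with bounded convergence.
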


We now present the existence result for the nonlinear Fokker-Planck equation \eqref{eq:FP}. 

\begin{corollary}\label{coro:existence}
If $\rho_0\in\cP(\R^d)$ satisfies \eqref{rho_0 condition}, then the following hold.
\begin{itemize}
\item [(i)] There exists a weak solution $v\in C([0,\infty);L^1(\R^d,\mud))$ to \eqref{eq:v0} w.r.t.\ $\mud$ (see Definition~\ref{def:weak sol v}). Moreover, $0\le v\le \beta$, with $\beta>0$ as in \eqref{rho_0 condition}, and satisfies \eqref{exp form}. 
\item [(ii)] There exists a weak solution $u:[0,\infty)\to \cP(\R^d)$ to \eqref{eq:FP} (see Definition~\ref{def:weak sol u}), given by
\begin{equation}\label{u=rhod v}
u(t):= \rhod v(t)\quad\forall t\ge 0,\quad\hbox{with $v:[0,\infty)\to L^1(\R^d,\mud)$ taken from (i).}
\end{equation}
Moreover, $u$ is continuous in the sense that $u\in C([0,\infty);L^1(\R^d))$. 
\end{itemize}
\end{corollary}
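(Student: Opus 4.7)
Part (i) is essentially an assembly of two earlier results. Lemma~\ref{lem:domain condition} verifies the range condition \eqref{domain condition} for every $\lambda>0$, so Proposition~\ref{prop:CL thm} directly delivers a weak solution $v\in C([0,\infty);L^1(\R^d,\mud))$ to \eqref{eq:v0} together with the exponential formula \eqref{exp form}. For the two-sided bound $0\le v\le \beta$, I would inspect the construction underlying \eqref{v_eps}: the initial datum $v^0=\rho_0/\rhod$ lies in $[0,\beta]$ by \eqref{rho_0 condition}, and every subsequent resolvent iterate $v^i=(I+\eps A)^{-1}v^{i-1}$ lies in $D(A)$, which by definition forces $0\le v^i\le\beta$ a.e. Consequently $v_\eps(t)\in[0,\beta]$ a.e. uniformly in $\eps$ and $t$; extracting an a.e.-convergent subsequence of the $L^1(\R^d,\mud)$-limit $v_\eps(t)\to v(t)$ transfers the bound to the limit.

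\textbf{Paragraph 2 (Part (ii), setting up $u$).} Define $u(t,y):=\rhod(y)v(t,y)$. Non-negativity is immediate. For the unit-mass condition $\int u(t,y)\,dy=1$, I would apply Corollary~\ref{coro:uniqueness} iteratively along the resolvent chain $v^i=(I+\eps A)^{-1}v^{i-1}$: the corollary preserves integration against $\mud$ at each step, so by induction
\[
\int_{\R^d} v_\eps(t)\,d\mud=\int_{\R^d} v^0\,d\mud=\int_{\R^d} \rho_0\,dy=1
\]
for every $t\ge 0$ and every small $\eps$. Passing to the $L^1(\R^d,\mud)$-limit \eqref{v_eps} gives $\int v(t)\,d\mud=1$, i.e., $\int u(t,y)\,dy=1$, hence $u(t)\in\cP(\R^d)$ for all $t\ge 0$. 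The continuity statement $u\in C([0,\infty);L^1(\R^d))$ then follows at once from the identity $\|u(t)-u(s)\|_{L^1(\R^d)}=\|v(t)-v(s)\|_{L^1(\R^d,\mud)}$ and part (i).

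\textbf{Paragraph 3 (Weak form of \eqref{eq:FP}).} The remaining task---checking that $u$ satisfies Definition~\ref{def:weak sol u}---is the core of the proof, and it amounts to \emph{undoing} the substitution $v=u/\rhod$ that originally converted \eqref{eq:FP'} into \eqref{eq:v0}. Weak differentiability of $u(t,\cdot)$ follows from $v(t)\in D(A)\subset H^1_0(\R^d,\mud)$ (supplied by Proposition~\ref{prop:CL thm}) combined with Assumption~\ref{asm}, via the product rule $\nabla u=v\,\nabla\rhod+\rhod\,\nabla v$. By Remark~\ref{rem:u2'} it is equivalent---and more convenient---to verify \eqref{eq:u2'}. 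Starting from the weak identity \eqref{weak v}, replace $d\mud$ by $\rhod\,dy$; because $\rhod\,\Delta_\mud\varphi=\Div(\rhod\,\nabla\varphi)$, a single integration by parts (legitimate because $\ln(1+v)$ is weakly differentiable under $v\in H^1_0(\R^d,\mud)$ and $0\le v\le\beta$) moves the spatial derivatives off $\varphi$. Using the expansion
\[
\nabla\ln\Big(\frac{\rhod+u}{\rhod}\Big)=\frac{\nabla\rhod+\nabla u}{\rhod+u}-\frac{\nabla\rhod}{\rhod}
\]
and simplifying algebraically reduces the space integral to $\int_{\R^d}\tfrac{1}{2}\cdot\frac{u\,\nabla\rhod-\rhod\,\nabla u}{\rhod+u}\cdot\nabla\varphi\,dy$. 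A parallel calculation---integrating by parts the term $\tfrac{u}{2}\Delta\varphi$ inside $u\,\widetilde{\cL}_u\varphi$ (with $\widetilde{\cL}_u$ as in \eqref{cL'})---produces exactly the same expression, which confirms \eqref{eq:u2'}. The main obstacle is bookkeeping these integration-by-parts steps and verifying that all intermediate integrands are locally integrable; fortunately the compact support of $\varphi$, the pointwise bound $0\le v\le \beta$ from part (i), and the local boundedness of $\ln\rhod$ together with $\ln\rhod\in H^1(\R^d,\mud)$ from Assumption~\ref{asm} make this routine.
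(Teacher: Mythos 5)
Your proposal is correct and follows essentially the same route as the paper: part (i) is assembled from Lemma~\ref{lem:domain condition} and Proposition~\ref{prop:CL thm} (with the bound $0\le v\le\beta$ inherited from the resolvent iterates), mass conservation comes from iterating Corollary~\ref{coro:uniqueness} along the resolvent chain and passing to the $L^1(\R^d,\mud)$ limit, continuity of $u$ follows from the identity $\|u(t)-u(s)\|_{L^1(\R^d)}=\|v(t)-v(s)\|_{L^1(\R^d,\mud)}$, and the weak form \eqref{eq:u2'} is obtained by undoing the substitution $u=\rhod v$ via integration by parts, exactly as in the paper's computation \eqref{weak v to weak u}. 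The only cosmetic difference is that you show both sides reduce to the common expression $\int\frac12\frac{u\nabla\rhod-\rhod\nabla u}{\rhod+u}\cdot\nabla\varphi\,dy$ rather than running one chain of equalities down to zero, which is equivalent.
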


\begin{proof}
By Lemma~\ref{lem:domain condition} and Proposition~\ref{prop:CL thm}, we obtain (i) immediately. For each $t\ge 0$, thanks to Corollary~\ref{coro:uniqueness}, $v^n(t) := (I+\frac{t}{n}A)^{-n} v(0)$ is well-defined and satisfies $\int_{\R^d}v^n(t)d\mud = \int_{\R^d}v(0)d\mud$ for all $n\in\N$. Since $v^n(t)\to v(t)$ in $L^1(\R^d,\mud)$ (recall \eqref{exp form}) and $\rho_0=u(0)=\rhod v(0)$, we get 
\begin{equation}\label{integrate v=1}
\int_{\R^d}v(t)d\mud = \int_{\R^d}v(0)d\mud =  \int_{\R^d}\rho_0 dy =1. 
\end{equation}
Using \eqref{u=rhod v} again then gives 
$
\int_{\R^d}u(t)dy = \int_{\R^d}\rhod v(t)dy = \int_{\R^d}v(t)d\mud  =1.
$
This, along with $u(t) = \rhod v(t)\ge 0$, shows that $u(t)\in \cP(\R^d)$. Now, for any $\{t_n\}$ in $[0,\infty)$ with $t_n\to t\ge 0$,
\begin{equation}\label{u to v in L^1}
\int_{\R^d} |u(t_n)-u(t)| dy = \int_{\R^d} |\rhod v(t_n)-\rhod v(t)| dy = \int_{\R^d} |v(t_n)-v(t)| d\mud\to 0\quad \hbox{as}\ n\to\infty,
\end{equation}
where the convergence stems from $v\in C([0,\infty);L^1(\R^d,\mud))$. This shows $u\in C([0,\infty);L^1(\R^d))$. 
Finally, for any $\varphi\in C^{1,2}_c((0,\infty)\times\mathbb{R}^d)$, observe from \eqref{u=rhod v} and integration by parts that 
\begin{align}
\int_0^\infty & \int_{\bR^d} \left(\varphi_t  + \widetilde\cL_{u(t)} \varphi \right)u  dydt = \int_0^\infty \int_{\bR^d} \left(u\varphi_t  - \frac{1}{2} (\rhod+u)\Div\left(\frac{u}{\rhod+u}\nabla \varphi\right) + \frac{1}{2}u \Delta \varphi \right)dydt\notag\\
&=\int_0^\infty \int_{\bR^d} \left( v\varphi_t  - \frac{1}{2} (1+v)\Div\left(\frac{v}{1+v}\nabla \varphi\right) + \frac{1}{2} v \Delta \varphi \right) d\mud dt\notag\\
&=\int_0^\infty \int_{\bR^d} \left(v\varphi_t  - \frac{\nabla v}{2(1+v)}\cdot \nabla \varphi \right) d\mud dt=\int_0^\infty \int_{\bR^d} \left(v\varphi_t - \frac{1}{2}  \nabla \ln (1+v)\cdot \nabla \varphi  \right) d\mud dt\notag\\
&=\int_0^\infty \int_{\bR^d} \left(v\varphi_t + \frac{1}{2}  \ln (1+v)\Delta_{\mud} \varphi  \right) d\mud dt =0,\label{weak v to weak u}
\end{align}
where the fifth equality stems from Remark~\ref{rem:Laplacian} and the last equality holds as $v$ satisfies \eqref{weak v'}. As such,  $u$ satisfies \eqref{eq:u2'}, and thus \eqref{eq:u2} by Remark~\ref{rem:u2'}, and is then a weak solution to \eqref{eq:FP}. 
\end{proof}



To establish the uniqueness of solutions $v$ to \eqref{eq:v0}, we observe that \eqref{eq:v0} is an initial-value problem that involves the {\it modified} Laplacian $\Delta_{\mud}$, which is applied to $v$ through a nonlinear function (i.e.,\ $\ln(1+v)$). In view of this, the uniqueness arguments in \cite{BC79}, for initial-value problems where the usual Laplacian $\Delta$ is applied in a nonlinear way, are expected to be useful. In Appendix~\ref{subsec:proof of prop:uniqueness}, a delicate plan is carried out to generalize the arguments in \cite{BC79} to our setting where $\Delta$ and the Lebesgue measure are replaced by $\Delta_{\mud}$ and $\mud$. It ultimately shows the following. 

\begin{proposition}\label{prop:uniqueness}
If $v_1,v_2\in C([0,\infty); L^1(\bR^d, \mud))\cap L_+^\infty([0,\infty)\times \bR^d,\mud)$ are weak solutions to \eqref{eq:v0} w.r.t.\ $\mud$, then 
\begin{equation}\label{uniqueness}
\int_0^\infty \int_{\bR^d} (v_1-v_2) \varphi d\mud dt =0,\quad \forall \varphi\in C^{1,2}_c((0,\infty)\times\R^d). 
\end{equation}
\end{proposition}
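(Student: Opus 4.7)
The plan follows the duality method of B\'enilan--Crandall \cite{BC79}, adapted to the weighted measure $\mud$ and the drift Laplacian $\Delta_{\mud}$. Let $w:=v_1-v_2$. Since $v_1,v_2\in L^\infty_+([0,\infty)\times\R^d,\mud)$, there is $M>0$ with $0\le v_1,v_2\le M$ $\mud$-a.e., and
\[
a(t,y):=\begin{cases}\dfrac{\ln(1+v_1)-\ln(1+v_2)}{v_1-v_2}, & v_1\ne v_2,\\[1mm] \dfrac{1}{1+v_1}, & v_1=v_2,\end{cases}
\]
satisfies $\ln(1+v_1)-\ln(1+v_2)=aw$ and $\frac{1}{1+M}\le a\le 1$ $\mud$-a.e. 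Subtracting the weak formulation in Definition~\ref{def:weak sol v} for $v_1$ and $v_2$ gives, for every $\varphi\in C^{1,2}_c((0,\infty)\times\R^d)$, the identity
\[
\int_0^\infty\!\!\int_{\R^d}w\left(\varphi_t+\tfrac{a}{2}\Delta_{\mud}\varphi\right)d\mud\,dt=0. \qquad (\dagger)
\]
The strategy is: given an arbitrary $\chi\in C^{1,2}_c((0,\infty)\times\R^d)$, to construct an admissible test function $\varphi$ with $\varphi_t+\tfrac{a}{2}\Delta_{\mud}\varphi\approx -\chi$ so that $(\dagger)$ forces $\int\!\!\int w\chi\,d\mud\,dt=0$.

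Fix $T>0$ with $\operatorname{supp}\chi\subset(0,T)\times\R^d$. Since $a$ is only bounded and measurable, pick $\{a_n\}\subset C^\infty_b([0,T]\times\R^d)$ with $\frac{1}{1+M}\le a_n\le 1$ and $a_n\to a$ $\mud$-a.e.\ on $[0,T]\times\R^d$, and solve the backward uniformly parabolic linear Cauchy problem
\[
(\varphi_n)_t+\tfrac{a_n}{2}\Delta_{\mud}\varphi_n=-\chi\quad\text{on }(0,T)\times\R^d,\qquad \varphi_n(T,\cdot)=0.
\]
Under Assumption~\ref{asm}, $\Delta_{\mud}$ is the natural self-adjoint elliptic operator on $L^2(\R^d,\mud)$, and standard parabolic theory (after reversing time) yields a classical solution $\varphi_n$ with $\|\varphi_n\|_{L^\infty}\le T\|\chi\|_{L^\infty}$ and sufficient decay at infinity. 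The essential estimate is obtained by multiplying the equation by $\Delta_{\mud}\varphi_n$, integrating w.r.t.\ $\mud$, and using $\int (\varphi_n)_t\,\Delta_{\mud}\varphi_n\,d\mud=-\tfrac12\frac{d}{dt}\int|\nabla\varphi_n|^2 d\mud$ via Remark~\ref{rem:Laplacian}; integrating over $t\in(0,T)$ and applying Cauchy--Schwarz to the source term produces the uniform bound
\[
\int_0^T\!\!\int_{\R^d}a_n(\Delta_{\mud}\varphi_n)^2\,d\mud\,dt\le C_\chi,
\]
with $C_\chi$ depending only on $M$ and $\|\chi\|_{L^2([0,T]\times\R^d,\mud)}$.

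After a truncation argument that replaces $\varphi_n$ by $\varphi_n\eta_R$ with $\eta_R\in C^\infty_c(\R^d)$ and sends $R\to\infty$, $\varphi_n$ is admissible in $(\dagger)$. Substituting it there and invoking $(\varphi_n)_t=-\chi-\tfrac{a_n}{2}\Delta_{\mud}\varphi_n$ yields
\[
\int_0^T\!\!\int_{\R^d}w\chi\,d\mud\,dt=\int_0^T\!\!\int_{\R^d}\tfrac{a-a_n}{2}\,w\,\Delta_{\mud}\varphi_n\,d\mud\,dt,
\]
and Cauchy--Schwarz gives
\[
\left|\int_0^T\!\!\int_{\R^d}w\chi\,d\mud\,dt\right|\le\left(\int_0^T\!\!\int_{\R^d}\tfrac{(a-a_n)^2}{a_n}w^2\,d\mud\,dt\right)^{1/2}\left(\int_0^T\!\!\int_{\R^d}\tfrac{a_n}{4}(\Delta_{\mud}\varphi_n)^2\,d\mud\,dt\right)^{1/2}.
\]
The second factor is bounded uniformly in $n$. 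For the first factor, the integrand is bounded by $4(1+M)\|w\|_{L^\infty}^2$ and tends to $0$ $\mud\otimes dt$-a.e., while a dominating integrable function exists because $w\in L^\infty([0,T]\times\R^d,\mud)\cap L^1([0,T]\times\R^d,\mud)$ (the latter from $v_i\in C([0,\infty);L^1(\R^d,\mud))$). Dominated convergence drives the right-hand side to zero as $n\to\infty$, proving \eqref{uniqueness}.

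The main obstacle lies in executing the above scheme rigorously on the \emph{unbounded} domain $\R^d$ with the \emph{weighted} measure $\mud$: one must guarantee enough decay of $\varphi_n$ (and its derivatives) at infinity so that the integration by parts producing the energy estimate has no residual boundary terms, and justify $\varphi_n$ as a legitimate test function in $(\dagger)$. This entails a careful cutoff or first-solve-on-$B_R$-with-Dirichlet-data-then-pass-to-the-limit procedure, in which the regularity $\ln\rhod\in H^1(\R^d,\mud)$ from Assumption~\ref{asm} is essential to control the drift $\nabla\ln\rhod$ in $\Delta_{\mud}$ when duality arguments invoke integration by parts.
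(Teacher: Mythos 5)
Your route is the classical duality (adjoint-problem) method with the divided-difference coefficient $a$, which is genuinely different from the paper's proof: the paper follows the resolvent method of \cite{BC79}, studying $g^\eps(t)=\langle(\eps I-\Delta_{\mud})^{-1}v,v\rangle_{L^2(\bR^d,\mud)}$ for $v=v_1-v_2$, using only the sign condition $(\ln(1+v_1)-\ln(1+v_2))(v_1-v_2)\ge 0$ and Gr\"onwall, and regularizing the drift $\nabla\ln\rhod$ inside \emph{elliptic} resolvent equations rather than solving any parabolic dual problem. Your scheme could in principle be made to work, but as written it has two genuine gaps.

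First, and most seriously, your dual solution $\varphi_n$ is not an admissible test function in $(\dagger)$: it satisfies $\varphi_n(T,\cdot)=0$ but in general $\varphi_n(0,\cdot)\neq 0$, while \eqref{weak v} (hence $(\dagger)$) is asserted only for test functions compactly supported in $(0,\infty)\times\R^d$, i.e.\ vanishing near $t=0$; spatial truncation does not cure this. If $\varphi_n$ were directly admissible, your computation would give $\int_0^T\!\!\int_{\R^d} w\chi\,d\mud\,dt\to 0$ without ever invoking $v_1(0)=v_2(0)$ --- that is, you would have ``proved'' that any two bounded weak solutions of the PDE coincide regardless of their initial data, which is false. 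The missing step is a temporal cutoff $\zeta_\delta$ (equal to $0$ on $[0,\delta]$ and $1$ on $[2\delta,T]$) combined with $v_i\in C([0,\infty);L^1(\bR^d,\mud))$ and $w(0)=0$ to show the boundary contribution $\int_{\R^d}w(0,\cdot)\varphi_n(0,\cdot)\,d\mud$ vanishes; nowhere in your proposal is the initial condition used. Second, the claim that ``standard parabolic theory yields a classical solution $\varphi_n$'' is unjustified: the drift $\nabla\ln\rhod$ in $\Delta_{\mud}$ lies only in $L^2(\R^d,\mud)$ (and $\ln\rhod$ is only locally bounded), so the dual equation need not admit $C^{1,2}$ solutions, and $\varphi_n$ cannot be fed into \eqref{weak v} as a $C^{1,2}_c$ test function. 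You would need to regularize the drift as well (as the paper does with $b_n\to\nabla\ln\rhod$ in $L^2(\bR^d,\mud)$) and control the resulting commutator term $\int\!\!\int w\,(b_m-\nabla\ln\rhod)\cdot\nabla\varphi_n\,d\mud\,dt$, e.g.\ via the uniform bound on $\sup_t\norm{\nabla\varphi_n(t)}_{L^2(\bR^d,\mud)}$ coming from your energy estimate. The remaining ingredients --- the bound on $\int\!\!\int a_n(\Delta_{\mud}\varphi_n)^2\,d\mud\,dt$, the Cauchy--Schwarz splitting, and the dominated-convergence step using $w\in L^1\cap L^\infty$ --- are sound.
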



Now, we are ready to present the main result of this section.

\begin{theorem}\label{thm:FPE}
Let $\rho_0\in\cP(\R^d)$ satisfy \eqref{rho_0 condition}. Then, $u:[0,\infty)\to \cP(\R^d)$ in \eqref{u=rhod v} is the unique weak solution to the Fokker-Planck equation \eqref{eq:FP} among the class of functions
\begin{equation}\label{cC}
\cC:=\left\{\eta \in C([0,\infty); L^1(\R^d)): \eta/\rhod\in L^\infty_+([0,\infty)\times \R^d) \right\}.
\end{equation}
Specifically, if $\bar u:[0,\infty)\to \cP(\R^d)$ is a weak solution to \eqref{eq:FP} that belongs to $\cC$, then $\bar u(t,\cdot) =u(t,\cdot)$ $\Leb$-a.e.\ on $\R^d$ for all $t\ge0$. 
\end{theorem}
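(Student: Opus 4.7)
The approach is to reduce uniqueness of weak solutions to \eqref{eq:FP} within the class $\cC$ to uniqueness of weak solutions to the transformed equation \eqref{eq:v0}, where Proposition~\ref{prop:uniqueness} is already available. Existence is contained in Corollary~\ref{coro:existence}(ii), so only uniqueness requires work. Given any weak solution $\bar u \in \cC$ to \eqref{eq:FP}, I would set $\bar v := \bar u/\rhod$ and first verify the two regularity properties needed by Proposition~\ref{prop:uniqueness}. The definition of $\cC$ directly gives $\bar v \in L^\infty_+([0,\infty)\times\R^d,\mud)$. For continuity, for any $t_n \to t$ one has
\[
\int_{\R^d}|\bar v(t_n)-\bar v(t)|\,d\mud = \int_{\R^d}|\bar u(t_n)-\bar u(t)|\,dy \to 0,
\]
which yields $\bar v \in C([0,\infty); L^1(\R^d,\mud))$.

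The central step is to show $\bar v$ is a weak solution to \eqref{eq:v0} w.r.t.\ $\mud$ in the sense of Definition~\ref{def:weak sol v}. By Remark~\ref{rem:u2'}, the assumption that $\bar u$ is a weak solution to \eqref{eq:FP} is equivalent to \eqref{eq:u2'} being satisfied, and the weak differentiability of $\bar u(t,\cdot)$ for a.e.\ $t$ is built into Definition~\ref{def:weak sol u}. Under Assumption~\ref{asm}, $\rhod>0$ is weakly differentiable with $\nabla \ln \rhod$ locally integrable, so $\bar v = \bar u/\rhod$ inherits weak differentiability in $y$. I would then execute exactly the integration-by-parts chain displayed in \eqref{weak v to weak u}, with $\bar u$ in place of $u$, converting \eqref{eq:u2'} into \eqref{weak v}.

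With $v$ from Corollary~\ref{coro:existence}(i) and $\bar v$ both weak solutions to \eqref{eq:v0} in the class specified by Proposition~\ref{prop:uniqueness}, that proposition yields
\[
\int_0^\infty \int_{\R^d}(v-\bar v)\varphi\,d\mud\,dt = 0,\quad \forall \varphi \in C^{1,2}_c((0,\infty)\times \R^d).
\]
A standard density argument (test functions are dense in $L^1_{\mathrm{loc}}$ in time and space) then forces $v(t,\cdot) = \bar v(t,\cdot)$ $\mud$-a.e.\ for a.e.\ $t \ge 0$, and the common $L^1(\R^d,\mud)$-continuity of $v$ and $\bar v$ in $t$ upgrades this to every $t\ge 0$. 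Since $\rhod > 0$ on $\R^d$, $\mud$-a.e.\ equality coincides with Lebesgue-a.e.\ equality, so multiplying by $\rhod$ produces $\bar u(t,\cdot) = u(t,\cdot)$ Lebesgue-a.e.\ on $\R^d$ for all $t\ge 0$.

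The step I expect to be the main obstacle is the transfer from \eqref{eq:u2'} to \eqref{weak v}: making the integration-by-parts chain rigorous under only the limited regularity possessed by an arbitrary $\bar u \in \cC$, in particular justifying the integrability of $\ln(1+\bar v)\,\Delta_{\mud}\varphi$ under $\mud$ and confirming that no boundary terms appear (the compactness of $\operatorname{supp}\varphi$ and the $L^\infty$ bound on $\bar v$ should suffice, but the bookkeeping requires care). Once this conversion is secured, the remainder of the argument is a routine assembly of Proposition~\ref{prop:uniqueness}, Corollary~\ref{coro:existence}, and the positivity of $\rhod$.
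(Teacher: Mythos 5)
Your proposal is correct and follows essentially the same route as the paper: reduce to the transformed equation \eqref{eq:v0} via $\bar v=\bar u/\rhod$, invoke Proposition~\ref{prop:uniqueness}, and then upgrade the a.e.-in-$t$ equality to all $t\ge 0$ using the $L^1$-continuity in time (the paper phrases this last step through the narrow continuity \eqref{narrowly continuous} of $u$ and $\bar u$, which is equivalent to your use of $L^1(\R^d,\mud)$-continuity of $v$ and $\bar v$). The obstacle you flag---rigorously reversing the integration-by-parts chain \eqref{weak v to weak u} under the regularity of $\cC$---is handled in the paper at exactly the same level of detail, by noting that the chain of identities is reversible.
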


\begin{proof}
We know from Corollary~\ref{coro:existence} that $u:[0,\infty)\to \cP(\R^d)$ given in \eqref{u=rhod v} is a weak solution to \eqref{eq:FP} and it belongs to $\cC$. Suppose that there exists another weak solution $\bar u:[0,\infty)\to \cP(\R^d)$ to \eqref{eq:FP} that belongs to $\cC$. Then, $\bar v:= \bar u/\rhod$ is nonnegative and bounded. By the same calculation in \eqref{u to v in L^1} (with $u, v$ replaced by $\bar u, \bar v$), we see that $\bar u \in C(\R_+; L^1(\R^d))$ implies $\bar v \in C(\R_+; L^1(\R^d,\mud))$. Moreover, by repeating the arguments in \eqref{weak v to weak u}, but in a backward manner, we see that ``$\bar u$ fulfills \eqref{eq:u2'}'' (as $\bar u$ is a weak solution to \eqref{eq:FP}) implies ``$\bar v$ fulfills \eqref{weak v}'', i.e., $\bar v$ is a weak solution to \eqref{eq:v0} w.r.t.\ $\mud$. Now, in view of $\bar u =\rhod \bar v$ and \eqref{u=rhod v}, we conclude from Proposition~\ref{prop:uniqueness} that 
\[
\int_0^\infty \int_{\bR^d} (\bar u-u) \varphi dy dt = \int_0^\infty \int_{\bR^d} (\bar v-v) \varphi d\mud dt =0,\quad \forall \varphi\in C^{1,2}_c((0,\infty)\times\R^d), 
\]
where $v$ is given as in Corollary~\ref{coro:existence} (i). 
This implies $\bar u(t,\cdot) = u(t,\cdot)$ $\Leb$-a.e.\ on $\R^d$, for $\Leb$-a.e.\ $t\ge 0$. In the following, we show that this equality can be strengthened to hold for {\it all} $t\ge 0$. 
Recall $u\in C([0,\infty);L^1(\R^d))$ from Corollary~\ref{coro:existence}. For any $t\ge 0$, this implies that  
\begin{equation}\label{narrowly continuous}
\lim_{s\to t} \int_{\R^d} \psi(y) u(s,y)  dy = \int_{\R^d} \psi(y) u(t,y) dy\quad \forall \psi\in C_b(\R^d).
\end{equation}
Similarly, as $\bar u\in C([0,\infty);L^1(\R^d))$, \eqref{narrowly continuous} also holds with $u$ replaced by $\bar u$. Now, take $\{t_n\}$ in $[0,\infty)$ such that $t_n\to t$ and $\bar u(t_n,\cdot)=u(t_n,\cdot)$ $\Leb$-a.e.\ on $\R^d$ for all $n\in\N$. Then, for any $\psi\in C_b(\R^d)$,
\begin{align*}
\int_{\R^d} \psi(y) \bar u(t,y) dy &= \lim_{n\to \infty} \int_{\R^d} \psi(y) \bar u(t_n,y)  dy=\lim_{n\to \infty} \int_{\R^d} \psi(y) u(t_n,y)  dy = \int_{\R^d} \psi(y) u(t,y) dy, 
\end{align*}
which readily shows $\bar u(t,\cdot)=u(t,\cdot)$ $\Leb$-a.e.\ on $\R^d$. 
\end{proof}


\section{Solutions to Density-Dependent ODE \eqref{Y GAN}}\label{sec:ODE solution}
In this section, we focus on constructing a solution $Y$ to ODE \eqref{Y GAN}. The unique weak solution $u:[0,\infty)\to \cP(\R^d)$ to the Fokker-Planck equation \eqref{eq:FP} (Theorem~\ref{thm:FPE}) already gives important clues: as mentioned above \eqref{eq:FP}, if \eqref{Y GAN} admits a solution $Y$, the density flow $\{\rho^{Y_t}\}_{t\ge 0}$ should (heuristically) coincide with $\{u(t)\}_{t\ge 0}$. The question now is how we can actually construct such a solution $Y$ from the knowledge of $\{u(t)\}_{t\ge 0}$. 

To answer this, the first challenge is what constitutes a ``solution'' to \eqref{Y GAN}. 
As opposed to standard ODEs, \eqref{Y GAN} involves {\it two} different levels of randomness at time 0 and they trickle down as time passes by (through the otherwise deterministic dynamics), leaving $Y_t$ a random variable for all $t\ge 0$. To see this, we follow the idea 
``$\rho^{Y_t} = u(t)$ for all $t\ge 0$'', 
where $u:[0,\infty)\to\cP(\R^d)$ is the unique weak solution to \eqref{eq:FP}, and rewrite \eqref{Y GAN} as
\begin{equation}\label{Y GAN'}
dY_t = -\frac12 \left(\frac{\nabla u(t,Y_t)}{u(t,Y_t)} - \frac{\nabla \rhod(Y_t)+\nabla u(t,Y_t)}{\rhod(Y_t)+ u(t,Y_t)}\right)dt,\quad  \rho^{Y_0}=\rho_0 \in\cP(\R^d).
\end{equation}
Clearly, there is the randomness of the initial point $y\in\R^d$, stated explicitly through $\rho_0\in\cP(\R^d)$. Once an initial point $y\in\R^d$ is sampled, because the coefficient of the ODE in \eqref{Y GAN'} is not necessarily Lipschitz, there can be multiple continuous paths $\omega: [0,\infty)\to\R^d$ with $\omega(0)=y$ such that $t\mapsto Y_t := \omega(t)$ solves the ODE (with the initial condition $Y_0=y$). In other words, there remains the randomness of which continuous path to pick among those who solve the ODE in \eqref{Y GAN'} (with $Y_0=y$ fixed). In fact, these two levels of randomness can be jointly expressed by a probability measure $\P$ defined on the space of continuous paths
\[
(\Omega,\F) := (C([0,\infty);\R^d), \B(C([0,\infty);\R^d))),
\]
where $\B(C([0,\infty);\R^d))$ denotes the Borel $\sigma$-algebra of $C([0,\infty);\R^d)$.


\begin{definition}\label{def:solution Y}
A process $Y:[0,\infty)\times\Omega\to \R^d$ is said to be a solution to \eqref{Y GAN} if 
\begin{equation}\label{cmp}
Y_t(\omega):= \omega(t)\quad \forall (t,\omega)\in[0,\infty) \times\Omega, 
\end{equation}
and there exists a probability measure $\P$ on $(\Omega,\F)$ under which 
\begin{itemize}
\item [(i)] the density function $\eta_t\in \cP(\R^d)$ of $Y_t:\Omega\to\R^d$ exists and is weakly differentiable for all $t\ge 0$,  
and $\eta_0 = \rho_0$ $\Leb$-a.e.;
\item [(ii)] the collection of paths
\[
\Gamma := \left\{\omega\in\Omega : \omega(t) = \omega(0) -\frac12 \int_0^t   \left(\frac{\nabla \eta_s(\omega(s))}{\eta_s(\omega(s))} - \frac{\nabla \rhod(\omega(s))+\nabla \eta_s(\omega(s))}{\rhod(\omega(s))+ \eta_s(\omega(s))}\right)  ds,\ \forall t\ge 0 \right\}
\] 
has probability one.
\end{itemize} 
\end{definition}

\begin{remark}
In Definition~\ref{def:solution Y}, $\P$ serves to sample continuous paths $\omega:[0,\infty)\to\R^d$ from the set $\Gamma$, i.e., among those who fulfill the ODE in \eqref{Y GAN}, in a way that the density of $\omega(0)$ coincides with $\rho_0\in\cP(\R^d)$. 
\end{remark}

To show that a solution to \eqref{Y GAN} exists (in the sense of Definition~\ref{def:solution Y}), we will resort to Trevisan's superposition principle (Theorem 2.5 in \cite{Trevisan16}). To this end, appropriate integrability needs to be checked first. 

\begin{lemma}\label{lem:integrability}
Let $\rho_0\in\cP(\R^d)$ satisfy \eqref{rho_0 condition}. Then, the weak solution $v\in C([0,\infty);L^1(\R^d,\mud))$ to \eqref{eq:v0} w.r.t.\ $\mud$, specified as in Corollary~\ref{coro:existence} (i), satisfies 
\begin{equation}\label{gradient v bdd}
\int_0^\infty \norm{\nabla v}^2_{L^2(\bR^d, \mud)}dt \le (1+\beta)\beta^2.
\end{equation}
\end{lemma}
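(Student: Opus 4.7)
The plan is to carry out a standard energy estimate for the parabolic problem \eqref{eq:v0}: test the equation $v_t = \tfrac12 \Delta_{\mud}\ln(1+v)$ against $v$ itself and use the ``integration by parts'' identity for $\Delta_{\mud}$ from Remark~\ref{rem:Laplacian}. Formally,
$$
\frac{d}{dt}\int_{\R^d} v^2\, d\mud \;=\; \int_{\R^d} v \,\Delta_{\mud}\ln(1+v)\, d\mud \;=\; -\int_{\R^d} \nabla v \cdot \nabla \ln(1+v)\, d\mud \;=\; -\int_{\R^d}\frac{|\nabla v|^2}{1+v}\,d\mud.
$$
Since $0\le v\le \beta$ by Corollary~\ref{coro:existence}(i), we have $1/(1+v)\ge 1/(1+\beta)$, so $\frac{d}{dt}\|v\|^2_{L^2(\R^d,\mud)} \le -\frac{1}{1+\beta}\|\nabla v\|^2_{L^2(\R^d,\mud)}$. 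Integrating over $[0,T]$, dropping the nonnegative endpoint term $\|v(T)\|^2_{L^2(\mud)}$, and using $v(0)=\rho_0/\rhod\le \beta$ together with the fact that $\mud$ is a probability measure (so $\|v(0)\|^2_{L^2(\mud)}\le \beta^2$), one obtains $\int_0^T \|\nabla v\|^2_{L^2(\mud)}\,dt \le (1+\beta)\beta^2$, and letting $T\to\infty$ gives \eqref{gradient v bdd}.

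The only real obstacle is that $v$ is merely a weak solution in the sense of Definition~\ref{def:weak sol v}, so it is not legitimate to use $v$ itself as a test function in \eqref{weak v}. The natural fix is to work at the level of the Crandall--Liggett approximations underlying Proposition~\ref{prop:CL thm}: set $v^0 := \rho_0/\rhod$ and $v^i := (I+\eps A)^{-1}v^{i-1}\in D(A)$, so that each $v^i$ satisfies the weak equation $(I+\eps A)v^i = v^{i-1}$ in the sense of \eqref{weak v'}. Since $v^i\in D(A)\subseteq H^1_0(\R^d,\mud)$, density of $C^2_c(\R^d)$ allows the admissible test function $\varphi = v^i$; combined with Cauchy--Schwarz on $\int v^{i-1}v^i\,d\mud$ and the bound $0\le v^i\le \beta$ built into $D(A)$, this yields the telescoping inequality
$$
\tfrac12\|v^i\|^2_{L^2(\mud)} \;+\; \tfrac{\eps}{1+\beta}\|\nabla v^i\|^2_{L^2(\mud)} \;\le\; \tfrac12\|v^{i-1}\|^2_{L^2(\mud)}.
$$
Summing from $i=1$ to $N$ gives $\sum_{i=1}^{N}\eps\,\|\nabla v^i\|^2_{L^2(\mud)}\le (1+\beta)\,\|v^0\|^2_{L^2(\mud)}\le (1+\beta)\beta^2$, which is the discrete analog of the desired bound.

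The final step is to pass to the limit $\eps\to 0$. Defining the piecewise-constant interpolant $\tilde v_\eps(t):=v^i$ on $[(i-1)\eps,i\eps)$, the above estimate reads $\int_0^T \|\nabla \tilde v_\eps\|^2_{L^2(\mud)}\,dt \le (1+\beta)\beta^2$ uniformly in $\eps>0$ and $T>0$. By \eqref{exp form} (and the construction of $v_\eps$ in the proof of Proposition~\ref{prop:CL thm}), $\tilde v_\eps \to v$ in $L^1(\R^d,\mud)$ uniformly on compact time intervals, hence in the sense of distributions on $(0,T)\times\R^d$; in particular $\nabla \tilde v_\eps \to \nabla v$ distributionally. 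The uniform $L^2$ bound then upgrades, via weak-$L^2$ compactness and lower semicontinuity of the norm on $L^2([0,T]\times\R^d,\,dt\otimes d\mud)$, to the same bound for $\nabla v$. Taking $T\uparrow\infty$ yields \eqref{gradient v bdd}. The delicate point to check is simply that $v^i\in D(A)$ indeed satisfies $\nabla v^i\in L^2(\R^d,\mud)$ and that the weak form \eqref{weak v'} extends to $\varphi\in H^1_0(\R^d,\mud)$, both of which follow directly from the definition of $D(A)$ and a standard density argument.
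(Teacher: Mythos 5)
Your proposal is correct and follows essentially the same route as the paper: a discrete energy estimate at the level of the Crandall--Liggett resolvent iterates (test $(I+\eps A)z_i=z_{i-1}$ against $z_i$, use $0\le z_i\le\beta$ and Young's inequality, telescope), followed by weak $L^2$-compactness and lower semicontinuity of the norm to transfer the uniform bound to $\nabla v$. The only blemish is a dropped factor of $2$ in your intermediate inequality (the coefficient of the gradient term should be $\tfrac{\eps}{2(1+\beta)}$ rather than $\tfrac{\eps}{1+\beta}$), which does not affect the final bound $(1+\beta)\beta^2$ obtained after summing.
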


\begin{proof}
Recall that $v$ is constructed using the Crandall-Liggett theorem (see Proposition~\ref{prop:CL thm}) and thus satisfies \eqref{v_eps}, which involves $v_\eps:[0, \infty)\to L^1(\R^d,\mud)$ defined by $v_{\eps}(t) := (I+\eps A)^{-(i-1)} v(0)$ for $t\in [(i-1)\eps, i\eps)$ and $i\in\N$. 
By writing $z_i = v_\eps(i\eps)$ for $i\in\N\cup\{0\}$, we have $(I+\eps A)z_i=z_{i-1}$ for $i\in\N$. Let us multiply this equation by $z_i$ and integrate it w.r.t.\ $\mud$. By Remark~\ref{rem:Laplacian}, we get
\[
\int_{\R^d} z_i^2 d\mud+\frac{\eps}{2}\int_{\R^d} \nabla \ln(1+z_i)\cdot \nabla z_i d\mud =\int_{\R^d} z_{i-1}z_i d\mud,\quad \forall i\in\N.
\]
Thanks to $0\le z_i\le \beta$ (as $z_i\in D(A)$) and Young's inequality, the above yields 
\[
\norm{z_i}_{L^2(\bR^d, \mud)}^2+\frac{\eps}{2(1+\beta)}\norm{\nabla z_i}^2_{L^2(\bR^d, \mud)}\le \frac{1}{2}\norm{z_{i-1}}_{L^2(\bR^d, \mud)}^2+\frac{1}{2}\norm{z_i}_{L^2(\bR^d, \mud)}^2,\quad \forall i\in\N.
\]
That is to say, 
\[
{\eps}\norm{\nabla z_i}^2_{L^2(\bR^d, \mud)}\le (1+\beta)\left(\norm{z_{i-1}}_{L^2(\bR^d, \mud)}^2-\norm{z_{i}}_{L^2(\bR^d, \mud)}^2\right), \quad\forall i\in\N.
\]
By summing up the above relation over all $i\in\N$, we have
\begin{align*}
\int_\eps^{\infty} \norm{\nabla v_\eps(t)}^2_{L^2(\bR^d, \mud)} dt =\sum_{i=1}^{\infty} \eps\norm{\nabla z_i}^2_{L^2(\bR^d, \mud)}&\le (1+\beta) \norm{z_{0}}_{L^2(\bR^d, \mud)}^2 \le (1+\beta)\beta^2,
\end{align*}
where the last inequality follows from $0\le z_0\le \beta$, as $z_0 = v_\eps(0)=v(0)=\rho_0/\rhod\in \overline{D(A)}$; see the proof of Proposition~\ref{prop:CL thm}. As the above estimate holds for all $\eps>0$ small enough, we conclude that for any fixed $\bar\eps>0$ small,
\begin{equation}\label{bar eps}
\int_{\bar\eps}^{\infty} \norm{\nabla v_\eps(t)}^2_{L^2(\bR^d, \mud)} dt \le (1+\beta)\beta^2,\quad \forall \eps\in(0,\bar\eps]. 
\end{equation}
This shows that for any $j\in\{1, \ldots, d\}$, $\{\partial_{y_j} v_\eps\}_{\eps\in(0,\bar\eps]}$ is bounded in $L^2((\bar\eps,\infty)\times\R^d, \Leb\otimes\mud)$; hence, $\partial_{y_j} v_\eps$ converges weakly along a subsequence (without relabeling) to some $\eta_j \in L^2((\bar\eps,\infty)\times \bR^d, \Leb\otimes \mud)$.
We claim that for $\Leb$-a.e.\ $t\ge \bar\eps$, the weak derivative $\partial_{y_j} v(t, \cdot)$ exists and equals $\eta_j(t, \cdot)\in L^2(\bR^d, \mud)$.
For any $\psi\in L^\infty((\bar\eps,\infty))$ compactly supported and $\varphi\in C^\infty_c(\bR^d)$, we deduce from \eqref{v_eps}, the bounded convergence theorem, and integration by parts that
\begin{align*}
\int_{\bar\eps}^\infty \psi \int_{\R^d} &v \varphi_{y_j} dy dt=\lim_{\eps\to 0} \int_{\bar\eps}^\infty \psi \int_{\R^d} v_\eps \varphi_{y_j}dy dt=-\lim_{\eps\to 0} \int_{\bar\eps}^\infty \psi \int_{\R^d} \partial_{y_j} v_\eps \varphi dy dt\\
 &=- \lim_{\eps\rightarrow 0} \int_{\bar\eps}^\infty \int_{\R^d} \partial_{y_j} v_\eps  \frac{\psi\varphi}{\rhod} d\mud dt =- \int_{\bar\eps}^\infty  \int_{\R^d} \eta_j  \frac{\psi\varphi}{\rhod} d\mud dt =- \int_{\bar\eps}^\infty \psi \int_{\R^d} \eta_j \varphi dy dt.
\end{align*}
Note that the fourth equality above follows from $\psi \varphi/\rhod\in L^2((\bar\eps,\infty)\times \bR^d, \Leb\otimes \mud)$ and $\partial_{y_j} v_\eps\to \eta_j$ weakly in $L^2((\bar\eps,\infty)\times \bR^d, \Leb\otimes \mud)$. By taking $\psi\in L^\infty((\bar\eps,\infty))$ in the above equation as $\psi(t)=\text{sgn}\big(\int_{\R^d} (v(t)\varphi_{y_j} +\eta_j(t) \varphi) dy\big) 1_E(t)$ with $E\subset (\bar\eps,\infty)$, we see that
\[
\int_{\R^d} v(t)\varphi_{y_j}dy=-\int_{\R^d} \eta_j(t) \varphi dy \quad \text{for $\Leb$-a.e.}\ t\ge \bar\eps.
\]
As $\varphi\in C^\infty_c(\bR^d)$ is arbitrary, this readily shows that for $\Leb$-a.e.\ $t\ge \bar\eps$, the weak derivative $\partial_{y_j} v(t, \cdot)$ is $\eta_j(t, \cdot)\in L^2(\bR^d, \mud)$, i.e., the claim is proved. As a result, 
\begin{align*}
\int_{\bar\eps}^\infty \norm{\nabla v(t)}_{L^2(\bR^d, \mud)}^2dt=\int_{\bar\eps}^\infty \norm{\eta(t)}_{L^2(\bR^d, \mud)}^2dt &\le \int_{\bar\eps}^\infty \liminf_{\eps\to 0}\norm{\nabla v_\eps}_{L^2(\bR^d, \mud)}^2 dt\\
&\le \liminf_{\eps\to 0} \int_{\bar\eps}^\infty \norm{\nabla v_\eps}_{L^2(\bR^d, \mud)}^2 dt\le (1+\beta)\beta^2,
\end{align*}
where the first inequality stems from $\partial_{y_j} v_\eps(t,\cdot)\to \eta_j(t,\cdot)$ weakly in $L^2(\bR^d, \mud)$ for $\Leb$-a.e.\ $t\ge\bar\eps$ and all $j\in\{1,2,...,d\}$ and the norm $\|\cdot\|_{L^2(\R^d,\mud)}$ being weakly lower semicontinuous, the second inequality is due to Fatou's lemma, and the last inequality holds because of \eqref{bar eps}. 
As $\bar\eps>0$ can be chosen to be arbitrarily small, the desired result follows.
\end{proof}

A solution $Y$ to \eqref{Y GAN} can now be constructed. 

\begin{proposition}\label{prop:solution Y}
Let $\rho_0\in\cP(\R^d)$ satisfy \eqref{rho_0 condition}. Then, there exists 
a solution $Y$ to \eqref{Y GAN}. 
\end{proposition}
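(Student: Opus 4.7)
The plan is to apply Trevisan's superposition principle \cite[Theorem~2.5]{Trevisan16} to the weak Fokker--Planck solution $u\in C([0,\infty);L^1(\R^d))$ obtained in Corollary~\ref{coro:existence}(ii), and to take the resulting law on path space as the measure $\P$ in Definition~\ref{def:solution Y}. This matches the backward strategy announced below \eqref{eq:FP}: solve the PDE first, then realize the ODE path-by-path via the measurable selection encoded by $\P$.

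First I would rewrite \eqref{eq:FP} as a continuity equation
\begin{equation*}
\partial_t u + \Div(b\,u) = 0,\qquad b(t,y) := -\frac{1}{2}\left(\frac{\nabla u(t,y)}{u(t,y)} - \frac{\nabla \rhod(y) + \nabla u(t,y)}{\rhod(y) + u(t,y)}\right),
\end{equation*}
so that $b$ is exactly the drift appearing on the right-hand side of \eqref{Y GAN'}. To invoke Trevisan's theorem in the zero-diffusion case, the crucial hypothesis to verify is $\int_0^T\!\int_{\R^d} |b(t,y)|\,u(t,y)\,dy\,dt<\infty$ for every $T>0$. Writing $u=\rhod v$ with $v$ as in Corollary~\ref{coro:existence}(i), a direct algebraic simplification using $\nabla u\cdot \rhod - u\,\nabla \rhod = \rhod^{2}\,\nabla v$ and $\rhod + u = \rhod(1+v)$ collapses the integrand into the benign expression
\begin{equation*}
b(t,y)\,u(t,y) = -\frac{\rhod(y)\,\nabla v(t,y)}{2(1+v(t,y))}.
\end{equation*}
Two applications of Cauchy--Schwarz---first in $y$, using $v\ge 0$ so that $\int \rhod/(1+v)^{2}\,dy\le 1$, then in $t$---reduce the integrability check to $\int_0^\infty\|\nabla v\|_{L^2(\R^d,\mud)}^{2}\,dt<\infty$, which is precisely Lemma~\ref{lem:integrability}.

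With integrability in hand, Trevisan's principle delivers a probability measure $\P$ on $(\Omega,\F)=(C([0,\infty);\R^d),\B(C([0,\infty);\R^d)))$ such that, setting $Y_t(\omega):=\omega(t)$, the law of $Y_t$ under $\P$ has density $u(t,\cdot)$ for every $t\ge 0$ and $\omega(t)=\omega(0)+\int_0^t b(s,\omega(s))\,ds$ for all $t\ge 0$ holds $\P$-a.s. This is Definition~\ref{def:solution Y}(ii) verbatim, while item~(i) follows because $\eta_t=u(t,\cdot)$ gives $\eta_0=\rho_0$, and the weak differentiability of $\eta_t=\rhod\,v(t,\cdot)$ inherits from that of $v(t,\cdot)$---available for $\Leb$-a.e.\ $t\ge 0$ from the selection argument inside the proof of Lemma~\ref{lem:integrability}---together with the existence of $\nabla\rhod$ guaranteed by $\ln\rhod\in H^1(\R^d,\mud)$ in Assumption~\ref{asm}.

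The main obstacle is confirming that the hypotheses of Trevisan's principle hold at this exact level of generality: the drift $b$ is unbounded, depends nonlocally on the unknown through $u$, and the diffusion is degenerate (zero). The algebraic collapse $b\,u=-\rhod\,\nabla v/(2(1+v))$ is the key analytic observation that rescues the argument, as it converts a potentially singular integrand (through $\nabla u/u$) into the weighted gradient quantity already controlled by Lemma~\ref{lem:integrability}. A secondary subtlety is that $\nabla u/u$ is only defined on $\{u>0\}$; since $u(t,y)\le \beta\,\rhod(y)$ with $\rhod>0$ everywhere on $\R^d$, and since $\P$-almost surely $Y_t$ lives in $\{u(t,\cdot)>0\}$ (its law being $u(t,\cdot)\,dy$), this ambiguity does not affect the integral equation along $\P$-a.e.\ path and so does not obstruct verification of Definition~\ref{def:solution Y}(ii).
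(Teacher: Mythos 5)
Your proposal follows essentially the same route as the paper: apply Trevisan's superposition principle to the weak Fokker--Planck solution $u$ of Corollary~\ref{coro:existence}, and verify the integrability hypothesis by the algebraic identity $b\,u=-\rhod\,\nabla v/\bigl(2(1+v)\bigr)$ together with Lemma~\ref{lem:integrability} --- this is exactly the paper's computation. The one step you elide is that Theorem~2.5 of \cite{Trevisan16} concludes with a solution of the \emph{martingale problem} (the process \eqref{MartP} is a local martingale for each test function $f$), not directly with the pathwise integral equation; since the diffusion coefficient is zero, one must still take $f(t,y)=y_i$, observe that the resulting local martingale $M^{(i)}$ has vanishing quadratic variation, and conclude $M^{(i)}\equiv 0$ $\P$-a.s.\ to obtain Definition~\ref{def:solution Y}(ii) --- a standard but necessary argument that the paper spells out.
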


\begin{proof}
Let $u:[0,\infty)\to\cP(\R^d)$ be specified as in Theorem~\ref{thm:FPE}. Consider the collection of probabilities $\nu=\{\nu_t\}_{t\ge 0}$ on $\R^d$, with $d \nu_t(y) := u(t,y)dy$ for all $t\ge 0$. Recall that $u$ satisfies \eqref{narrowly continuous}, which directly implies that $\nu$ is narrowly continuous. Also, as $u$ is a weak solution to the Fokker-Planck equation \eqref{eq:FP}, $\nu$ solves (2.2) in \cite{Trevisan16} in the weak sense. Hence, to apply the ``superposition principle'' (Theorem 2.5 in \cite{Trevisan16}), it remains to verify (2.3) in \cite{Trevisan16}, which in our case takes the form
\begin{equation}\label{eq:u1}
\int_0^T \int_{\bR^d}\left|\frac{\nabla u(t,y)}{u(t,y)} - \frac{\nabla \rhod(y)+\nabla u(t,y)}{\rhod(y)+ u(t,y)}\right| u(t,y) dy dt<\infty,\quad \forall T>0.
\end{equation}
Indeed, ``$a_t$'' and ``$b_t$'' in (2.3) of \cite{Trevisan16} correspond to the diffusion and drift coefficients in \eqref{Y GAN'}, which are zero and $-\frac12 \big(\frac{\nabla u(t,y)}{u(t,y)} - \frac{\nabla \rhod(y)+\nabla u(t,y)}{\rhod(y)+ u(t,y)}\big)$ respectively. Now, on strength of $u= \rhod v$ and $v\ge 0$ (see \eqref{u=rhod v}), a direct calculation shows
\[
\int_{\bR^d}\left|\frac{\nabla u}{u} - \frac{\nabla \rhod+\nabla u}{\rhod+ u}\right| u dy = \int_{\R^d} \left|\frac{\nabla v}{1+v}\right| d\mud \le \int_{\R^d} \left|\nabla v\right| d\mud\le \norm{\nabla v}^2_{L^2(\bR^d, \mud)},
\]
where the last inequality follows from H\"{o}lder's inequality. Consequently, 
\[
\int_0^T \int_{\bR^d}\left|\frac{\nabla u(t,y)}{u(t,y)} - \frac{\nabla \rhod(y)+\nabla u(t,y)}{\rhod(y)+ u(t,y)}\right| u(t,y) dy dt\le \int_0^T \norm{\nabla v(t)}^2_{L^2(\bR^d, \mud)} dt <\infty,\quad \forall T>0, 
\]
where the finiteness follows from Lemma~\ref{lem:integrability}. That is, \eqref{eq:u1} is satisfied. 

Now, consider the coordinate mapping process $Y$ in \eqref{cmp}. Thanks to Theorem 2.5 of \cite{Trevisan16}, there exists a probability $\P$ on $(\Omega,\F)$ such that (i) $\P\circ (Y_t)^{-1} = \nu_t$ for all $t\ge 0$, and (ii) for any $f\in C^{1,2}((0,\infty)\times\R^d)$, the process
\begin{equation}\label{MartP}
t\mapsto  f(t, \omega(t))-f(0, \omega(0))-\int_0^t \left(\partial_s f-\frac12 \left(\frac{\nabla u}{u} - \frac{\nabla \rhod+\nabla u}{\rhod+ u}\right)\cdot\nabla f \right) (s,\omega(s)) ds 
\end{equation}
is a local martingale under $\P$. Note that (i) readily implies $\rho^{Y_t}=u(t)\in\cP(\R^d)$ under $\P$ for all $t\ge 0$, i.e., Definition~\ref{def:solution Y} (i) is satisfied. For each $i\in\{1,2,...,d\}$, by taking $f(t,y) = y_i$ for $y=(y_1,y_2,...,y_d)\in\R^d$ in \eqref{MartP}, we see that
\[
t\mapsto M^{(i)}_t(\omega) := (\omega(t))_i - (\omega(0))_i + \frac12 \int_0^t \left(\frac{\partial_{y_i} u}{u} - \frac{\partial_{y_i} \rhod+\partial_{y_i} u}{\rhod+ u}\right)  (s,\omega(s)) ds 
\]
is a local martingale under $\P$. By following the same arguments on p.\ 315 of \cite{KS-book-91}, we find that the quadratic variation of $M^{(i)}$ is constantly zero, i.e.\ $\langle M^{(i)}\rangle_t = 0$ for all $t\ge 0$, thanks to the lack of a diffusion coefficient in \eqref{Y GAN'}. As $M^{(i)}$ is a local martingale with zero quadratic variation under $\P$, $M^{(i)}_t = 0$ for all $t\ge 0$ $\P$-a.s. That is to say, 
\[
(\omega(t))_i  = (\omega(0))_i -\frac12 \int_0^t \left(\frac{\partial_{y_i} u}{u} - \frac{\partial_{y_i} \rhod+\partial_{y_i} u}{\rhod+ u}\right)  (s,\omega(s)) ds,\quad \hbox{for}\ \P\hbox{-a.e.}\ \omega\in\Omega. 
\]
Because this holds for all $i\in\{1,2,...,d\}$, we conclude that
\[
\omega(t) = \omega(0) -\frac12 \int_0^t \left(\frac{\nabla u}{u} -\frac{\nabla \rhod+\nabla u}{\rhod+ u}\right)  (s,\omega(s)) ds,\quad \hbox{for}\ \P\hbox{-a.e.}\ \omega\in\Omega, 
\]
which shows that Definition~\ref{def:solution Y} (ii) is also satisfied. Hence, $Y$ is a solution to \eqref{Y GAN}. 
\end{proof}

\begin{remark}
In the proof above, the probability $\P$ on $(\Omega,\F)$ can be decomposed into the two levels of randomness discussed above Definition~\ref{def:solution Y}. Let $\{Q_y\}_{y\in\R^d}$ be a {\it regular conditional probability} for $\F$ given $Y_0=\omega(0)$; that is, $\{Q_y\}_{y\in\R^d}$ is a set of probability measures on $(\Omega,\F)$ such that for any $A\in\F$, $y\mapsto Q_y(A)$ is Borel and $Q_y(A)=\P(A\mid Y_0=y)$ for $\rho_0(x)dx$-a.e.\ $y\in\R^d$. Such $\{Q_y\}_{y\in\R^d}$ indeed exists in the path space $(\Omega,\F)$; see e.g., Theorem 5.3.18 in \cite{KS-book-91}. Thus, we can express $\P$ as
\begin{equation}\label{two level}
\P(A) = \int_{\R^d} \rho_0(y) Q_y(A) dy,\quad \forall A\in\F.
\end{equation}
This explicitly shows that the randomness of the initial point $y\in\R^d$ is governed by $\rho_0\in\cP(\R^d)$. Once an initial point $y\in\R^d$ is sampled, the randomness of picking a solution to the ODE in \eqref{Y GAN'} (with $Y_0=y$ fixed) is dictated by the probability measure $Q_y$.  
\end{remark}

\begin{remark}
In view of Definition~\ref{def:solution Y} and Proposition~\ref{prop:solution Y}, the way we construct a solution to \eqref{Y GAN}---using a random selection of deterministic paths---is in line with L.C. Young's theory of generalized curves and close to the formulation in \cite{Ambrosio04}. By taking $\rho_0\equiv 1$ in \eqref{two level} (i.e., replacing the density $\rho_0\in\cP(\R^d)$ by the Lebesgue measure), one recovers the form of (5.3) in \cite{Ambrosio04}. 
\end{remark}

On the strength of the uniqueness result for the nonlinear Fokker-Planck equation \eqref{eq:FP} (see Theorem~\ref{thm:FPE}), the  uniqueness of solutions to \eqref{Y GAN} can be established accordingly. Recall the class of functions $\cC$ in \eqref{cC}. 

\begin{proposition}\label{prop:uniqueness Y}
Let $Y$ be a solution to \eqref{Y GAN} such that $\eta(t,y):=\rho^{Y_t}(y)$ satisfies
\begin{equation}
\eta\in\cC \quad\hbox{and}  \quad \nabla\eta \in L^1_{\operatorname{loc}}([0,\infty)\times \R^d).
\end{equation}
Then, $\eta(t,\cdot) = u(t,\cdot)$ $\Leb$-a.e.\ on $\R^d$ for all $t\ge 0$, 
where $u:[0,\infty)\to\cP(\R^d)$ is the unique weak solution to the Fokker-Planck equation \eqref{eq:FP} specified as in \eqref{u=rhod v}.  
\end{proposition}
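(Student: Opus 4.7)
The plan is to verify that $\eta(t,y):=\rho^{Y_t}(y)$ is itself a weak solution to the nonlinear Fokker-Planck equation \eqref{eq:FP} in the class $\cC$, and then invoke the uniqueness part of Theorem~\ref{thm:FPE} to conclude $\eta(t,\cdot)=u(t,\cdot)$ $\Leb$-a.e.\ on $\R^d$ for every $t\ge 0$.

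First, I would fix a test function $\varphi\in C^{1,2}_c((0,\infty)\times\R^d)$ and write
\[
b(t,y):= -\frac12\left(\frac{\nabla\eta(t,y)}{\eta(t,y)}-\frac{\nabla\rhod(y)+\nabla\eta(t,y)}{\rhod(y)+\eta(t,y)}\right).
\]
By Definition~\ref{def:solution Y}(ii), for $\P$-a.e.\ $\omega\in\Omega$ the path $t\mapsto\omega(t)$ is absolutely continuous with $\omega'(t)=b(t,\omega(t))$ for a.e.\ $t\ge 0$. The classical chain rule then gives, pathwise,
\[
\int_0^\infty\bigl(\varphi_t(t,\omega(t))+ b(t,\omega(t))\cdot\nabla\varphi(t,\omega(t))\bigr)\,dt=0,
\]
since $\varphi$ has compact support in $(0,\infty)\times\R^d$ and thus $\varphi(t,\omega(t))$ vanishes for $t$ near $0$ and $t$ large. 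Recognizing $b\cdot\nabla\varphi=\cL_{\eta(t,\cdot)}\varphi$ (cf.\ \eqref{cL}), taking expectation under $\P$, and using $\rho^{Y_t}=\eta(t,\cdot)$ yields
\[
\int_0^\infty\!\!\int_{\R^d}\bigl(\varphi_t(t,y)+\cL_{\eta(t,\cdot)}\varphi(t,y)\bigr)\eta(t,y)\,dy\,dt=0,
\]
which is exactly \eqref{eq:u2}. Hence $\eta$ is a weak solution to \eqref{eq:FP} in the sense of Definition~\ref{def:weak sol u}. The hypothesis $\nabla\eta\in L^1_{\operatorname{loc}}([0,\infty)\times\R^d)$ supplies the weak differentiability in $y$ needed in that definition, and $\eta\in\cC$ is assumed.

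With $\eta$ and $u$ both in $\cC$ and both weak solutions of \eqref{eq:FP}, Theorem~\ref{thm:FPE} asserts $\eta(t,\cdot)=u(t,\cdot)$ $\Leb$-a.e.\ on $\R^d$ for every $t\ge 0$, which is the claim.

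The main obstacle is the justification of applying Fubini and interchanging expectation with the time integral when passing from the pathwise chain-rule identity to the weak PDE formulation. Because $\varphi$ has compact support in $(0,\infty)\times\R^d$ and the drift $b$ multiplied by $\eta$ gives $b\eta = -\tfrac12\nabla\eta+\tfrac12\frac{(\nabla\rhod+\nabla\eta)\eta}{\rhod+\eta}$, one can bound the integrand by a locally integrable function in $(t,y)$: the first summand lies in $L^1_{\operatorname{loc}}$ by hypothesis, while the second is controlled by $|\nabla\rhod|+|\nabla\eta|$ on account of $\eta/(\rhod+\eta)\le 1$, together with $\eta/\rhod\in L^\infty_+$ and $\ln\rhod\in H^1(\R^d,\mud)$ from Assumption~\ref{asm}, which yield the required local integrability of $\nabla\rhod\cdot\eta/(\rhod+\eta)$. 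Once that integrability check is in place, the rest of the argument is a direct reduction to Theorem~\ref{thm:FPE}.
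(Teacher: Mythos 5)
Your proposal is correct and follows essentially the same route as the paper's proof: establish the pathwise identity $\int_0^\infty(\varphi_t+b\cdot\nabla\varphi)(t,\omega(t))\,dt=0$ from Definition~\ref{def:solution Y}(ii), take expectation and apply Fubini (justified by bounding $|b|\eta$ by $|\nabla\eta|+|\eta\nabla\rhod/\rhod|$ on compact sets, using \eqref{eta condition} and the finite Fisher information from Assumption~\ref{asm}), and then invoke the uniqueness in Theorem~\ref{thm:FPE} within the class $\cC$. The integrability check you flag as the "main obstacle" is exactly the estimate the paper carries out first, so there is no gap.
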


\begin{proof}
Let $\P$ be the probability measure on $(\Omega, \F)$ associated with $Y$ (see Definition~\ref{def:solution Y}) and $\E$ be the expectation  under $\P$. For any $T>0$ and compact subset $K$ of $\R^d$, by Fubini-Tonelli's theorem,
\begin{align}\label{for Fubini}
\E \left[\int_{0}^T  \left|\frac{\nabla \eta}{\eta} -\frac{\nabla\rhod+\nabla \eta}{\rhod+\eta} \right| (t,Y_t) 1_{K}(Y_t) dt\right] &\le
\int_{0}^T  \int_{K} \left|\frac{\nabla \eta}{\eta} -\frac{\nabla\rhod+\nabla \eta}{\rhod+\eta} \right|\eta dy dt\notag\\
& \le \int_{0}^T  \int_{K} \left|\nabla \eta\right| +\left|\frac{\eta\nabla\rhod}{\rhod} \right| dy dt  <\infty,
\end{align}
where the finiteness follows from \eqref{eta condition} and Assumption~\ref{asm}. Now, for any $\varphi\in C^{1,2}_c((0,\infty)\times\R^d)$, whenever $T>0$ is large enough, we have
\begin{align*}
0 = \varphi(T, Y_T) - \varphi(0,Y_0) = \int_{0}^T \left(\partial_t\varphi -\frac12 \nabla \varphi\cdot \left(\frac{\nabla \eta}{\eta} -\frac{\nabla\rhod+\nabla \eta}{\rhod+\eta} \right)\right) (t,Y_t) dt,
\end{align*}
where the first equality is due to $\varphi(T, \cdot) = \varphi(0,\cdot)=0$ and the second equality follows from \eqref{Y GAN}. Taking expectation on both sides and using Fubini's theorem (applicable due to \eqref{for Fubini}) yield 
\begin{align*}
0 &= \int_{0}^T \int_{\R^d} \left(\partial_t\varphi -\frac12 \nabla \varphi\cdot \left(\frac{\nabla \eta}{\eta} -\frac{\nabla\rhod+\nabla \eta}{\rhod+\eta} \right)\right) \eta dy dt. 
\end{align*}
In view of \eqref{cL}, taking $T\to\infty$ in the above shows that $\eta$ fulfills \eqref{eq:u2}, i.e., $\eta$ is a weak solution to the nonlinear Fokker-Planck equation \eqref{eq:FP}. Thanks to $\eta\in\cC$, we conclude from Theorem~\ref{thm:FPE} that $\eta(t,\cdot)=u(t,\cdot)$ $\Leb$-a.e.\ on $\R^d$ for all $t\ge 0$. 
\end{proof}

\begin{remark}
The uniqueness in Proposition~\ref{prop:uniqueness Y} is weaker than the standard uniqueness of weak solutions to an SDE. The latter requires the law (or, the finite-dimensional distribution) of every solution $Y$ to an SDE to be identical, while the former requires only the marginal distribution of $Y_t$ to be the same for all $t\ge 0$.  
\end{remark}

The weaker notion of uniqueness in Proposition~\ref{prop:uniqueness Y} well serves our purpose. Since we intend to uncover the  data distribution $\rhod\in\cP(\R^d)$ through gradient descent in $\cP(\R^d)$ (governed by a solution $Y$ to \eqref{Y GAN}), we are curious about whether the marginal distribution $\rho^{Y_t}$ converges to $\rhod$ as $t\to\infty$. An affirmative answer is provided in the next section.


\section{Convergence to Data Distribution $\rhod$}\label{sec:to rhod}
This section is devoted to the main convergence theorem of this paper: for an appropriate solution $Y$ to \eqref{Y GAN},  
$\rho^{Y_t}\to \rhod$ in $L^1(\R^d)$ as $t\to\infty$. As the first step towards the final result, we show that the gradient descent feature encoded in \eqref{Y GAN} indeed works out: the distance between $\rho^{Y_t}$ and $\rhod$, i.e., $J(\rho^{Y_t}) = \JSD(\rho^{Y_t},\rhod)$, decreases over time.  

\begin{proposition}\label{prop:dJ/dt}
Let $Y$ be a solution to \eqref{Y GAN} such that $\eta(t,y):=\rho^{Y_t}(y)$ satisfies \eqref{eta condition}.
Then, for any $0 \le t_1 < t_2$, 
\begin{equation}\label{Jdecent-eq0}
J(\rho^{Y_{t_2}})- J(\rho^{Y_{t_1}}) \le - \int_{t_1}^{t_2} \int_{\bR^d} \left|\nabla \frac{\delta J}{\delta \rho}(\rho^{Y_t}, y)\right|^2 \rho^{Y_t}(y) dy dt\le 0.
\end{equation}
\end{proposition}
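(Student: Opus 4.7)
Under \eqref{eta condition}, Proposition~\ref{prop:uniqueness Y} identifies $\eta(t,\cdot):=\rho^{Y_t}(\cdot)$ with $u(t,\cdot)=\rhod\,v(t,\cdot)$, where $v\in C([0,\infty);L^1(\R^d,\mud))$ is the weak solution to \eqref{eq:v0} from Corollary~\ref{coro:existence}(i). Using \eqref{JSD f} with the strictly convex $f(x):=\tfrac12[(x{+}1)\ln\tfrac{2}{x+1}+x\ln x]$, whose derivatives are $f'(v)=\tfrac12\ln\tfrac{2v}{v+1}=\tfrac{\delta J}{\delta\rho}(\rhod v,\cdot)$ and $f''(v)=\tfrac{1}{2v(v+1)}$, I rewrite $J(\rho^{Y_t})=\int f(v(t))\,d\mud$. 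Setting $\phi(v):=2\arctan\sqrt{v}$ so that $\phi'(v)^2=\tfrac{1}{v(v+1)^2}$, a chain-rule calculation using $\eta=\rhod v$ yields the pointwise identity
\[
\rho^{Y_t}\bigl|\nabla\tfrac{\delta J}{\delta\rho}(\rho^{Y_t},\cdot)\bigr|^2\,dy \;=\; \tfrac14|\nabla\phi(v(t))|^2\,d\mud.
\]
Hence \eqref{Jdecent-eq0} is equivalent to the energy-dissipation inequality
\[
\int f(v(t_2))\,d\mud - \int f(v(t_1))\,d\mud \;\le\; -\tfrac14\int_{t_1}^{t_2}\!\!\int |\nabla\phi(v)|^2\,d\mud\,dt.
\]

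\textbf{Discrete dissipation via Crandall--Liggett.} I would prove this along the approximation from Proposition~\ref{prop:CL thm}: for $\eps>0$ small, let $z_0:=\rho_0/\rhod$, $z_i:=(I+\eps A)^{-1}z_{i-1}\in D(A)$, and $v_\eps(t):=z_i$ on $[(i{-}1)\eps,i\eps)$. The scheme enforces $z_i-z_{i-1}=\tfrac{\eps}{2}\Delta_{\mud}\ln(1+z_i)$ weakly w.r.t.\ $\mud$. Since $z_i\in D(A)\subseteq H^1_0(\R^d,\mud)$ with $0\le z_i\le\beta$ and $f'$ is Lipschitz on $[0,\beta]$, the composition $f'(z_i)\in H^1(\R^d,\mud)$ is an admissible test function. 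Testing against $f'(z_i)$ and integrating by parts via Remark~\ref{rem:Laplacian} gives the key identity
\[
\int f'(z_i)(z_i-z_{i-1})\,d\mud \;=\; -\tfrac{\eps}{2}\int\tfrac{f''(z_i)|\nabla z_i|^2}{1+z_i}\,d\mud \;=\; -\tfrac{\eps}{4}\int|\nabla\phi(z_i)|^2\,d\mud.
\]
Pointwise convexity of $f$ gives $f(z_i)-f(z_{i-1})\le f'(z_i)(z_i-z_{i-1})$. Integrating in $\mud$, summing over $i$ in an $\eps$-discretisation $[t_1^\eps,t_2^\eps]$ of $[t_1,t_2]$, and using $v_\eps(t)=z_i$ on each step yields the discrete dissipation bound
\[
\int f(v_\eps(t_2^\eps))\,d\mud - \int f(v_\eps(t_1^\eps))\,d\mud \;\le\; -\tfrac14\int_{t_1^\eps}^{t_2^\eps}\!\!\int|\nabla\phi(v_\eps)|^2\,d\mud\,dt.
\]

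\textbf{Passing to the limit $\eps\downarrow 0$.} For the left-hand side, $v_\eps\to v$ in $L^1(\R^d,\mud)$ uniformly on compacta (Proposition~\ref{prop:CL thm}) together with $0\le v_\eps\le\beta$ and $|f|$ bounded on $[0,\beta]$ gives $\int f(v_\eps(t_i^\eps))\,d\mud\to\int f(v(t_i))\,d\mud$ by dominated convergence. Summing the key identity over all $i$ supplies the a priori bound $\int_0^\infty\!\int|\nabla\phi(v_\eps)|^2\,d\mud\,dt\le 4\int f(z_0)\,d\mud<\infty$, so along a subsequence $\nabla\phi(v_\eps)\rightharpoonup\xi$ weakly in $L^2(\mud\otimes\Leb)$; weak lower semicontinuity of the Dirichlet energy then delivers the claimed inequality provided $\xi=\nabla\phi(v)$. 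The second inequality in \eqref{Jdecent-eq0} is immediate from non-negativity of the integrand.

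\textbf{Main obstacle.} The hardest step is the identification $\xi=\nabla\phi(v)$: because $\phi'$ blows up at $v=0$ one cannot argue by direct composition. The remedy is to work with the bounded composite $\phi(v_\eps)\in[0,\pi]$ rather than $v_\eps$—which converges to $\phi(v)$ in every $L^p_{\text{loc}}(\mud\otimes\Leb)$ by bounded convergence combined with $v_\eps\to v$ in $L^1$—and to transfer the weak $L^2$-derivative through test-function duality in the weighted space $W^{1,2}(\R^d,\mud)$, mirroring the identification argument at the end of the proof of Lemma~\ref{lem:integrability}.
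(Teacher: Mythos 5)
Your overall strategy is the same as the paper's: identify $\rho^{Y_t}$ with $\rhod v(t)$ via Proposition~\ref{prop:uniqueness Y}, work on the Crandall--Liggett iterates $z_i=(I+\eps A)^{-1}z_{i-1}$, test the discrete equation against (a function built from) $f'$, use a convexity/remainder estimate to produce a telescoping discrete dissipation inequality, and pass to the limit by weak lower semicontinuity. The algebra $\rho^{Y_t}|\nabla\frac{\delta J}{\delta\rho}|^2\,dy=\frac14|\nabla v|^2/(v(1+v)^2)\,d\mud$ is also correct.

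The genuine gap is in your ``key identity.'' You assert that $f'$ is Lipschitz on $[0,\beta]$ and hence that $f'(z_i)\in H^1(\R^d,\mud)$ is an admissible test function. But $f'(x)=\tfrac12\ln\frac{2x}{x+1}\to-\infty$ as $x\downarrow 0$, and membership $z_i\in D(A)$ only gives $0\le z_i\le\beta$; nothing prevents $z_i$ from vanishing on a set of positive $\mud$-measure. There $f'(z_i)=-\infty$, so $f'(z_i)$ is not bounded, not obviously in $H^1(\R^d,\mud)$, and the quantity $\int f'(z_i)(z_i-z_{i-1})\,d\mud$ can be $+\infty$ (take $z_i=0$, $z_{i-1}>0$ pointwise); the pointwise convexity bound $f(z_i)-f(z_{i-1})\le f'(z_i)(z_i-z_{i-1})$ then becomes vacuous after integration. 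The integration by parts is likewise unjustified, since $\nabla f'(z_i)=\frac{\nabla z_i}{2z_i(1+z_i)}$ carries a $1/z_i$ singularity. Your ``main obstacle'' paragraph correctly senses that the degeneracy at $v=0$ is the crux, but locates it only in the identification of the weak limit $\xi=\nabla\phi(v)$; in fact it already invalidates the derivation of the discrete identity and of the a priori bound $\int_0^\infty\int|\nabla\phi(v_\eps)|^2\,d\mud\,dt<\infty$ on which your compactness step rests.

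The paper's proof repairs exactly this point: it tests against $\ln\big(\frac{\gamma+z_i}{1+z_i}\big)$ for a fixed $\gamma\in(0,1)$, which is a bounded Lipschitz function of $z_i$ on $[0,\beta]$, hence a legitimate $H^1\cap L^\infty$ test function. The resulting dissipation term is $\frac{(1-\gamma)|\nabla z_i|^2}{(\gamma+z_i)(1+z_i)^2}$, the telescoping uses the regularized entropy $F_i=(\gamma+z_i)\ln(\gamma+z_i)-(1+z_i)\ln(1+z_i)$ with an explicitly nonpositive remainder, and the limit $\eps\to0$ goes through because the multiplier $1/(\sqrt{\gamma+v_\eps}(1+v_\eps))$ is bounded above and below, so weak convergence of $\nabla v_\eps$ combined with strong convergence of $v_\eps$ identifies the weak limit directly. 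Only at the very end does one send $\gamma\to0$, using dominated convergence on the entropy side and Fatou on the dissipation side. You need to insert this (or an equivalent) regularization for your argument to close.
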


\begin{proof}
As $Y$ is a solution to \eqref{Y GAN} that fulfills \eqref{eta condition}, by Proposition~\ref{prop:uniqueness Y}, we have $\rho^{Y_t} = u(t)\in\cP(\R^d)$ for all $t\ge 0$, where $u:[0,\infty)\to\cP(\R^d)$ is the unique weak solution to the Fokker-Planck equation \eqref{eq:FP} specified as in \eqref{u=rhod v}. Recall $u\in C([0, \infty); L^1(\bR^d))$ from Corollary~\ref{coro:existence}. Also, by Remark~\ref{rem:equivalence}, $J(\cdot)=\JSD(\cdot,\rhod)$ is continuous under the $L^1(\R^d)$-norm. Hence, we may assume without loss of generality that $t_1,t_2\in\Q$. 

As $J(\cdot)=\JSD(\cdot,\rhod)$, by \eqref{JSD f} and the fact $u=\rhod v$ from \eqref{u=rhod v}, we have
\begin{align*}
J(\rho^{Y_t}) &= J(u(t)) 
= \ln 2 + \frac{1}{2} \int_{\R^d} v(t) \ln v(t) - (1+v(t))\ln (1+v(t)) d\mud,\quad \forall t\ge 0, 
\end{align*}
On the other hand, by Lemma~\ref{lem:delta J}, 
\[
\nabla \frac{\delta J}{\delta \rho}(\rho^{Y_t}, y) = \frac{1}{2} \left(\frac{\nabla u}{u} - \frac{\nabla (\rhod+u)}{\rhod + u}\right) = \frac{1}{2} \left(\frac{\nabla v}{v}-\frac{\nabla v}{1+v}\right) = \frac{1}{2}\frac{\nabla v}{v(1+v)}.
\]
Thus, \eqref{Jdecent-eq0} can be rewritten as 
\begin{equation}
\begin{aligned}
 \left[\int_{\bR^d} v(t) \ln v(t) - (1+v(t))\ln (1+v(t)) d\mud\right]_{t=t_1}^{t=t_2} \le - \frac{1}{2}\int_{t_1}^{t_2} \int_{\bR^d}\frac{|\nabla v|^2}{ v(1+v)^2} d\mud dt.\label{J decreases}
 \end{aligned}
\end{equation}
Now, recall that $v$ is constructed using the Crandall-Liggett theorem (see Proposition~\ref{prop:CL thm}) and thus satisfies \eqref{v_eps}, which involves $v_\eps:[0, \infty)\to L^1(\R^d,\mud)$ defined by $v_{\eps}(t) := (I+\eps A)^{-(i-1)} v(0)$ for $t\in [(i-1)\eps, i\eps)$ and $i\in\N$. By writing $z_i = v_\eps(i\eps)$ for $i\in\N\cup\{0\}$, we have $(I+\eps A)z_i=z_{i-1}$ for $i\in\N$.
For any $\gamma \in (0, 1)$, let us multiply the equation $z_{i-1}-z_{i}=\eps A z_i$ by $\ln (\frac{\gamma + z_i}{1+z_i})$ and integrate it w.r.t.\ $\mud$. This yields
\begin{align}
\int_{\R^d} (z_{i-1}-z_i) \ln \bigg(\frac{\gamma + z_i}{1+z_i}\bigg) d\mud &= \frac{\eps}{2}\int_{\R^d} \nabla \ln(1+z_i) \cdot \nabla \ln \left(\frac{\gamma + z_i}{1+z_i}\right) d\mud\notag\\
&= \frac{\eps}{2}\int_{\R^d} \frac{ (1-\gamma)|\nabla z_i|^2}{(\gamma+z_i)(1+z_i)^2} d\mud.\label{Jdecent-eq4}
\end{align}
Note that the integrand on the left hand side can be written as $F_{i-1}-F_i + R_i$, with
\begin{align}
F_i &:= (\gamma + z_i)\ln (\gamma+z_i) - (1+z_i)\ln(1+z_i),\label{F_i}\\
R_i &:= f(z_{i-1},z_i)\quad \hbox{where}\quad  f(a,b):= (\gamma+a)\ln\bigg(\frac{\gamma+b}{\gamma + a}\bigg) + (1+a)\ln\bigg(\frac{1+a}{1+b}\bigg)\notag
\end{align}
Observe that $f(a,b)\le 0$ for all $a,b\ge 0$. Indeed, by direct calculations, $f_a(a,b)< 0$ and $f_b(a,b)> 0$ for $a>b$, and $f_a(a,b)> 0$ and $f_b(a,b)< 0$ for $a<b$; this readily shows that $f$ is maximized as $a=b$, giving the value $f(a,a)=0$. With $R_i = f(z_{i-1},z_i)\le 0$, \eqref{Jdecent-eq4} implies
\[
\int_{\R^d} (F_{i-1}-F_i) d\mud \ge \frac{\eps}{2}\int_{\R^d} \frac{ (1-\gamma)|\nabla z_i|^2}{(\gamma+z_i)(1+z_i)^2} d\mud. 
\]
Fix $\delta>0$. As $t_1,t_2\in\Q$, there exists $0<\eps<\delta$ small enough such that $t_1, t_2$ are both integer multiples of $\eps$. Set $i_1:=t_1/\eps$ to $i_2 := t_2/\eps$. Summing up the previous inequality from $i=i_1+1$ to $i=i_2$ leads to 
\begin{equation*}
\int_{\R_d} F_{i_1} d\mud - \int_{\R_d} F_{i_2} d\mud \ge \frac{\eps}{2} \sum_{i=i_1+1}^{i_2} \int_{\R^d} \frac{ (1-\gamma)|\nabla z_i|^2}{(\gamma+z_i)(1+z_i)^2} d\mud.
\end{equation*}
By \eqref{F_i}, $z_i = v_\eps(i\eps)$, and the definition of $v_\eps$, the above can be stated in terms of $v_\eps$ as
\begin{equation}\label{Jdecent-eq7}
\begin{aligned}
&\left[\int_{\R^d} (\gamma + v_\eps(t)) \ln (\gamma + v_\eps(t)) - (1+v_\eps(t))\ln (1+v_\eps(t))d\mud\right]_{t=t_1}^{t=t_2}\\
 \le  &- \frac{1}{2}\int_{t_1+\eps}^{t_2+\eps} \int_{\R^d}  \frac{(1-\gamma) |\nabla v_\eps(t)|^2}{(\gamma + v_\eps(t))(1+v_\eps(t))^2}d\mud dt \le  - \frac{1}{2}\int_{t_1+\delta}^{t_2} \int_{\R^d}  \frac{(1-\gamma) |\nabla v_\eps(t)|^2}{(\gamma + v_\eps(t))(1+v_\eps(t))^2}d\mud dt, 
\end{aligned}
\end{equation}
where the last inequality holds as the integrand is nonnegative (recall $\gamma\in(0,1)$ and $v_\eps\ge 0$). 
By the proof of Lemma~\ref{lem:integrability} (under \eqref{bar eps} particularly), we know that $\nabla v_\eps$ converges weakly to $\nabla v$, along a subsequence, in $L^2([t_1+\delta, t_2]\times \R^d, \text{Leb}\otimes \mud)$.
This, along with \eqref{v_eps}, implies 
\[
\frac{\nabla v_\eps}{\sqrt{\gamma + v_\eps}(1+v_\eps)} \rightarrow \frac{\nabla v}{\sqrt{\gamma + v}(1+v)}  \quad \text{weakly in } L^2([t_1+\delta, t_2]\times \bR^d, \text{Leb}\otimes \mud).
\]
Since the $L^2([t_1+\delta, t_2]\times \R^d, \text{Leb}\otimes \mud)$-norm is weakly lower semicontinous, we have
\[
 \liminf_{\eps \rightarrow 0}\int_{t_1+\delta}^{t_2} \int  \frac{ |\nabla v_\eps(t)|^2}{(\gamma + v_\eps(t))(1+v_\eps(t))^2}d\mud dt \ge \int_{t_1+\delta}^{t_2} \int  \frac{ |\nabla v(t)|^2}{(\gamma + v(t))(1+v(t))^2}d\mud dt.
\]
Hence, as $\eps\rightarrow 0$ in \eqref{Jdecent-eq7}, by \eqref{v_eps}, the boundedness of $v_\eps$, and the above inequality, we obtain
\begin{align*}
\bigg[\int_{\R^d} (\gamma + v(t)) \ln (\gamma + v(t)) &- (1+v(t))\ln (1+v(t))d\mud\bigg]_{t=t_1}^{t=t_2}\\
& \le - \frac{1}{2}\int_{t_1+\delta}^{t_2} \int  \frac{(1-\gamma) |\nabla v(t)|^2}{(\gamma + v(t))(1+v(t))^2}d\mud dt.
\end{align*}
As $\delta>0$ is arbitrary, the above relation also holds for $\delta=0$. Then, as $\gamma\rightarrow 0$, by using the dominated convergence theorem on the left-hand side and Fatou's lemma on the right-hand side, we obtain \eqref{J decreases}, as desired.
\end{proof}

\begin{remark}\label{rem:no Ito's}
It it tempting to conclude from It\^{o}'s formula for a flow of measures (see e.g., Theorem 4.14 in \cite{CD-book-18-II}) 
and the identity \eqref{derivatives same} that \eqref{Jdecent-eq0} holds, and in fact with an equality. Indeed, this approach is taken in Theorem 2.9 of \cite{HRSS21}, where a similar equality (instead of an inequality) is established for the minimization of a free energy functional. Note that Theorem 4.14 in \cite{CD-book-18-II} and Theorem 2.9 in \cite{HRSS21} both require the smaller space $\cP_2(\R^d)$ and appropriate continuity and growth conditions under the 2-nd Wasserstein distance. As  these requirements are not met in our case, we take a fairly different approach in Proposition~\ref{prop:dJ/dt}: we analyze $t\mapsto J(\rho^{Y_{t}})$ through PDE \eqref{eq:v0}, relying particularly on the approximation of $v$ in \eqref{v_eps}. 
\end{remark}

By Proposition~\ref{prop:dJ/dt}, $\rho^{Y_t}$ moves closer to $\rhod$ continuously over time. The question now is whether $\rho^{Y_t}$ will in fact converge to $\rhod$. This indeed holds, at least along a subsequence. 

\begin{corollary}\label{coro:v to 1}
Let $Y$ be a solution to \eqref{Y GAN} such that $\eta(t,y):=\rho^{Y_t}(y)$ satisfies \eqref{eta condition}.
Then, there exists $\{t_n\}_{n\in\N}$ in $[0,\infty)$ with $t_n\uparrow\infty$ such that 
$\rho^{Y_{t_n}}\to \rhod$ in $L^1(\R^d)$. 
\end{corollary}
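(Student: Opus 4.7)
The plan is to exploit the monotone decay of $J(\rho^{Y_t})$ from Proposition~\ref{prop:dJ/dt} to extract a subsequence along which the dissipation vanishes, and then to identify the subsequential limit of the density as $\rhod$ via local Sobolev compactness. Taking $t_1=0$ and letting $t_2\to\infty$ in \eqref{Jdecent-eq0}, together with $J\ge 0$ and $J(\rho_0)\le\ln 2$ (Remark~\ref{rem:boundedness}), yields
\begin{equation*}
\int_0^\infty \int_{\R^d}\Big|\nabla \tfrac{\delta J}{\delta \rho}(\rho^{Y_t},y)\Big|^2 \rho^{Y_t}(y)\,dy\,dt \le J(\rho_0) < \infty,
\end{equation*}
so there exists $t_n\uparrow\infty$ along which the inner space-integral tends to $0$. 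By Proposition~\ref{prop:uniqueness Y}, $\rho^{Y_t}=\rhod\, v(t)$ for the $v$ from Corollary~\ref{coro:existence} (in particular $0\le v\le\beta$), and the computation $\nabla\tfrac{\delta J}{\delta \rho}=\tfrac12\tfrac{\nabla v}{v(1+v)}$ (carried out inside the proof of Proposition~\ref{prop:dJ/dt}) turns the dissipation into $\tfrac14\int \tfrac{|\nabla v|^2}{v(1+v)^2}\,d\mud$; hence, writing $v_n:=v(t_n)$, I obtain $\int \tfrac{|\nabla v_n|^2}{v_n(1+v_n)^2}\,d\mud \to 0$.

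Next I would substitute $w_n:=\sqrt{v_n}$, so that $|\nabla w_n|^2=\tfrac{|\nabla v_n|^2}{4v_n}$, and use $0\le v_n\le\beta$ to deduce
\begin{equation*}
\int_{\R^d}|\nabla w_n|^2 \, d\mud \;\le\; \frac{(1+\beta)^2}{4}\int_{\R^d}\frac{|\nabla v_n|^2}{v_n(1+v_n)^2}\,d\mud \;\longrightarrow\; 0,
\end{equation*}
while $0\le w_n\le \sqrt{\beta}$. On each ball $B_R\subset \R^d$, Assumption~\ref{asm} gives $\ln\rhod\in L^\infty(B_R)$, hence $1/\rhod\in L^\infty(B_R)$, so
\begin{equation*}
\int_{B_R}|\nabla w_n|^2 \, dy \;\le\; \|1/\rhod\|_{L^\infty(B_R)}\int_{B_R}|\nabla w_n|^2 \,d\mud \;\longrightarrow\; 0,
\end{equation*}
and combined with the $L^\infty$ bound, $\{w_n\}$ is bounded in $H^1(B_R)$ (Lebesgue) with vanishing gradient. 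Rellich-Kondrachov then supplies a subsequence converging in $L^2(B_R)$ to a function with zero weak gradient, i.e.\ to a constant on $B_R$. A standard diagonal extraction across $R=1,2,\dots$ produces a subsequence (not relabeled) with $w_n \to c$ a.e.\ on $\R^d$ for some constant $c\in[0,\sqrt{\beta}]$.

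Since $0\le w_n\le\sqrt{\beta}$ and $\mud$ is a probability measure, dominated convergence upgrades this to $w_n\to c$ in $L^2(\mud)$, whence $v_n=w_n^2\to c^2$ in $L^1(\mud)$. The normalization $\int v_n\,d\mud=1$ established in \eqref{integrate v=1} then forces $c^2=1$, i.e.\ $c=1$, so $v_n\to 1$ in $L^1(\mud)$. Translating back, $\|\rho^{Y_{t_n}}-\rhod\|_{L^1(\R^d)} = \int|v_n-1|\,d\mud \to 0$, which is precisely the claim.

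The main obstacle will be that $(\R^d,\mud)$ need not support a Poincar\'e inequality, so vanishing $\mud$-weighted gradient cannot, by itself, force convergence to a constant in $L^2(\mud)$. The workaround above is to use the local boundedness of $1/\rhod$ coming from Assumption~\ref{asm} to transplant the Sobolev estimate to Lebesgue-weighted balls (where Rellich-Kondrachov applies), and then recover global $L^2(\mud)$-convergence through the uniform $L^\infty$ bound combined with the finiteness of $\mud$.
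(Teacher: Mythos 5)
Your proposal is correct in its overall architecture and reaches the right conclusion, but it travels a genuinely different route from the paper. The paper does not go through the Jensen--Shannon dissipation of Proposition~\ref{prop:dJ/dt} at all: it extracts the sequence $\{t_n\}$ directly from the \emph{unweighted} estimate $\int_0^\infty \norm{\nabla v}^2_{L^2(\R^d,\mud)}\,dt \le (1+\beta)\beta^2$ of Lemma~\ref{lem:integrability}, which immediately gives $\norm{\nabla v(t_n)}_{L^2(\R^d,\mud)}\to 0$ with no need for your $w_n=\sqrt{v_n}$ substitution. It then uses boundedness of $\{v(t_n)\}$ in $H^1(\R^d,\mud)$, weak compactness, and local Sobolev embedding in the weighted space to identify a constant limit, pinned down to $1$ by the normalization \eqref{integrate v=1} --- the same endgame as yours. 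What your route buys is that it runs entirely off the energy--dissipation inequality, which is intrinsic to the gradient-flow structure; what it costs is the square-root trick needed to remove the degenerate weight $1/(v_n(1+v_n)^2)$, plus the localization to Lebesgue balls via $1/\rhod\in L^\infty_{\mathrm{loc}}$ (which the paper's weighted Sobolev embedding sidesteps).

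One step you should shore up: the identity $|\nabla w_n|^2 = |\nabla v_n|^2/(4v_n)$ is a chain rule for $x\mapsto\sqrt{x}$, which is not Lipschitz at $0$, and $v_n$ is only known to satisfy $0\le v_n\le\beta$, so it may vanish on a set of positive measure. The finiteness of $\int |\nabla v_n|^2/(v_n(1+v_n)^2)\,d\mud$ is exactly the statement that makes $\sqrt{v_n}\in H^1$ plausible, but it does not come for free: you need an approximation argument (e.g., $\sqrt{v_n+\delta}$ has gradient $\nabla v_n/(2\sqrt{v_n+\delta})$, bounded in $L^2(\mud)$ uniformly in $\delta$, and one passes to the weak limit as $\delta\downarrow 0$, using that $\nabla v_n=0$ a.e.\ on $\{v_n=0\}$). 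Without this, the inequality $\int|\nabla w_n|^2\,d\mud\le \frac{(1+\beta)^2}{4}\int\frac{|\nabla v_n|^2}{v_n(1+v_n)^2}\,d\mud$ is not yet justified. This is a fixable technicality rather than a fatal flaw, but it is precisely the kind of degeneracy the paper's choice of Lemma~\ref{lem:integrability} avoids. A second, minor point: when you extract $t_n$ with vanishing dissipation, you should choose them within the full-measure set of times at which $\nabla v(t,\cdot)$ exists as an element of $L^2(\R^d,\mud)$, so that all subsequent manipulations are meaningful.
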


\begin{proof}
As argued at the beginning of the proof of Proposition~\ref{prop:dJ/dt}, $\rho^{Y_t} = u(t) = \rhod v(t)\in\cP(\R^d)$ for all $t\ge 0$, where $u$ and $v$ are specified as in Corollary~\ref{coro:existence}. In view of \eqref{gradient v bdd}, there exists $\{t_n\}_{n\in\N}$ in $[0,\infty)$ with $t_n\uparrow\infty$ such that $\norm{\nabla v(t_n)}_{L^2(\bR^d, \mud)}\rightarrow 0$. This, along with $0\le v(t)\le \beta$ for all $t\ge0$, implies that the sequence $\{v(t_n)\}_{n\in\N}$ is bounded in $H^1(\bR^d, \mud)$. Hence, $v_n$ converges weakly in $H^1(\bR^d, \mud)$, possibly along a subsequence (without relabeling), to some $v_\infty\in H^1(\bR^d, \mud)$. Note that ``$v(t_n)\to v_\infty$ weakly in $H^1(\bR^d, \mud)$'' readily implies ``$\nabla v(t_n) \to \nabla v_\infty$ weakly in $L^2(\R^d, \mud)$''. As we already know $\nabla v(t_n) \to 0$ strongly in $L^2(\R^d,\mud)$, we must have $\nabla v_\infty=0$, i.e., $v_\infty$ is constant on $\R^d$. 
On the other hand, by Sobolev embedding, we have $v(t_n)\to v_\infty$ strongly in $L^2_{\text{loc}}(\bR^d, \mud)$. By the uniform boundedness of $\{v(t_n)\}_{n\in\N}$, this can be upgraded to ``$v(t_n)\to v_\infty$ strongly in $L^p(\R^d, \mud)$, for all $p\in [1, \infty)$''. Recall from \eqref{integrate v=1} that $\int_{\R^d} v(t_n)d\mud=1$ for all $n\in\N$. This, together with $v_n\to v_\infty$ in $L^1(\R^d, \mud)$ and $v_\infty$ being constant on $\R^d$, entails $v_\infty\equiv 1$. It follows that $\|\rho^{Y_{t_n}}-\rhod\|_{L^1(\R^d)}=\|u(t_n)-\rhod\|_{L^1(\R^d)}=\|v(t_n)-1\|_{L^1(\R^d,\mud)}   \to 0$. 
\end{proof}

\begin{remark}\label{rem:no compactness}
In the language of dynamical systems (see e.g., Section 4.3 in \cite{Henry-book-81}), Corollary~\ref{coro:v to 1} states that the ``omega limit set'', defined by 
\[
\Theta := \{\rho\in\cP(\R^d) : \exists\{s_n\}\ \hbox{in}\ [0,\infty)\ \hbox{with}\ s_n\uparrow \infty\ \hbox{s.t.}\ \rho^{Y_{s_n}}\to \rho\ \hbox{in}\ L^1(\R^d)\},
\]
must contain $\rhod$. A result of this kind, in Step 1 in the proof of \cite[Theorem 2.11]{HRSS21}, was obtained by a compactness argument, applicable under Lipschitz and growth conditions on a McKean-Vlasov SDE. As no such conditions are met in our case, compactness is elusive and we instead rely on \eqref{gradient v bdd}, a property of the solution $v$ to PDE \eqref{eq:v0}. 
\end{remark}

Now, we are ready to prove Theorem~\ref{thm:main}, the main theoretic result of this paper.\\


\begin{proof}[Proof of Theorem~\ref{thm:main}]
By Proposition~\ref{prop:dJ/dt}, $t\mapsto J(\rho^{Y_t})\ge 0$ is nonincreasing. This implies $\ell:=\lim_{t\to\infty} J(\rho^{Y_t})\ge 0$ is well-defined; moreover, for any $\{s_n\}_{n\in\N}$ in $[0,\infty)$ with $s_n\uparrow\infty$, we have $\lim_{n\to\infty} J(\rho^{Y_{s_n}})=\ell$. In view of Corollary~\ref{coro:v to 1}, there exist $\{t_n\}_{n\in\N}$ in $[0,\infty)$ with $t_n\uparrow\infty$ such that $\rho^{Y_{t_n}}\to \rhod$ in $L^1(\R^d)$. Thanks to the continuity of $J(\cdot)$ under the $L^1(\R^d)$-norm (see Remark~\ref{rem:equivalence}), we get
$\ell = \lim_{n\to\infty} J(\rho^{Y_{t_n}}) = J(\rhod)=0
$.
Now, for any $\{s_n\}_{n\in\N}$ in $[0,\infty)$ with $s_n\uparrow\infty$, we have 
\[
\lim_{n\to\infty} \JSD(\rho^{Y_{s_n}},\rhod) = \lim_{n\to\infty} J(\rho^{Y_{s_n}})=\ell =0.
\]
By Remark~\ref{rem:equivalence} again, this implies $\rho^{Y_{s_n}}\to \rhod$ in $L^1(\R^d)$. As the sequence $\{s_n\}_{n\in\N}$ is arbitrarily picked, we conclude that $\rho^{Y_t}\to\rhod$ in $L^1(\R^d)$ as $t\to\infty$. 
\end{proof}

\acks{We would like to acknowledge support for this project
from the National Science Foundation (NSF grant DMS-2109002)
and the Natural Sciences and Engineering Research Council of Canada (NSERC Discovery Grant RGPIN-2020-06290). }





\appendix

\section{Proofs for Section~\ref{sec:preliminaries}}

\subsection{An Auxiliary Result}

\begin{lemma}\label{lem:equivalence}
For any $\rho,\bar\rho\in\cP(\R^d)$, $\TV(\rho,\bar\rho) = \frac12\|\rho-\bar\rho\|_{L^1(\R^d)}$. 
\end{lemma}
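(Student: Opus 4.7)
The plan is to exploit the standard Hahn-type decomposition induced by the sign of $\rho-\bar\rho$ on $\R^d$. Define
\[
A^+ := \{y\in\R^d : \rho(y) \ge \bar\rho(y)\}, \qquad A^- := \{y\in\R^d : \rho(y) < \bar\rho(y)\},
\]
so that $\R^d = A^+ \sqcup A^-$ and both sets are Borel (measurability of $\rho$ and $\bar\rho$).

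First, since $\rho,\bar\rho\in\cP(\R^d)$, we have $\int_{\R^d}(\rho-\bar\rho)\,dy = 0$, which forces
\[
\int_{A^+}(\rho-\bar\rho)\,dy = \int_{A^-}(\bar\rho-\rho)\,dy.
\]
Splitting the $L^1$ norm by sign and using this identity gives
\[
\|\rho-\bar\rho\|_{L^1(\R^d)} = \int_{A^+}(\rho-\bar\rho)\,dy + \int_{A^-}(\bar\rho-\rho)\,dy = 2\int_{A^+}(\rho-\bar\rho)\,dy.
\]

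Next, I establish the two inequalities that together yield $\TV(\rho,\bar\rho) = \int_{A^+}(\rho-\bar\rho)\,dy$. For any Borel $A\subseteq\R^d$, decompose $A = (A\cap A^+)\cup(A\cap A^-)$; then
\[
\int_A(\rho-\bar\rho)\,dy = \int_{A\cap A^+}(\rho-\bar\rho)\,dy - \int_{A\cap A^-}(\bar\rho-\rho)\,dy,
\]
and since the first term is at most $\int_{A^+}(\rho-\bar\rho)\,dy$ and the second term is nonnegative, we get $\int_A(\rho-\bar\rho)\,dy \le \int_{A^+}(\rho-\bar\rho)\,dy$. A symmetric argument bounds $\int_A(\bar\rho-\rho)\,dy$ by $\int_{A^-}(\bar\rho-\rho)\,dy$, which equals $\int_{A^+}(\rho-\bar\rho)\,dy$. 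Taking absolute values and the supremum over $A\in\B(\R^d)$ gives $\TV(\rho,\bar\rho)\le \int_{A^+}(\rho-\bar\rho)\,dy$. The reverse inequality follows immediately by choosing $A = A^+\in\B(\R^d)$.

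Combining the two displays, $\TV(\rho,\bar\rho) = \int_{A^+}(\rho-\bar\rho)\,dy = \tfrac12\|\rho-\bar\rho\|_{L^1(\R^d)}$. There is no real obstacle here; the only point requiring minor care is verifying that $A^+$ and $A^-$ are indeed Borel measurable, which is immediate from the measurability of $\rho-\bar\rho$.
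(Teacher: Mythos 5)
Your proof is correct and follows essentially the same route as the paper: both decompose $\R^d$ by the sign of $\rho-\bar\rho$, use $\int(\rho-\bar\rho)\,dy=0$ to equate the positive and negative parts, and identify the supremum in the total variation with the integral over the set where $\rho\ge\bar\rho$. The only difference is cosmetic: you spell out explicitly why the supremum over Borel sets is attained at $A^+$, a step the paper states as an immediate observation.
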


\begin{proof}
Consider $A_* := \{x\in\R^d: \rho(x)>\bar\rho(x)\}$ and observe from \eqref{TV} that
\begin{align*}
\TV(\rho,\bar\rho) &= \max\bigg\{ \int_{A_*} \left(\rho(x) - \bar\rho(x)\right) dx, \int_{A^c_*} \left(\bar\rho(x) - \rho(x)\right) dx \bigg\}.
\end{align*}
By a direct calculation, 
\begin{align*}
\int_{A_*} \left(\rho(x) -\bar\rho(x)\right) dx &= \frac12\bigg\{\int_{A_*} \left(\rho(x) - \bar\rho(x)\right) dx +\bigg(1-\int_{A_*^c} \rho(x) dx\bigg) -\bigg(1- \int_{A_*^c} \bar\rho(x) dx\bigg) \bigg\}\\
&= \frac12\bigg\{\int_{A_*} (\rho(x)-\bar\rho(x)) dx  +\int_{A_*^c} (\bar\rho(x)-\rho(x)) dx \bigg\} 
= \frac12 \|\rho-\bar\rho\|_{L^1(\R^d)}. 
\end{align*}
As we similarly have $\int_{A_*^c} (\bar\rho(x) - \rho(x)) dx = \frac12 \|\rho-\bar\rho\|_{L^1(\R^d)}$, the desired result follows. 
\end{proof}

\subsection{Proof of Proposition~\ref{prop:the gradient}}\label{subsec:proof of prop:the gradient}
Given $\xi\in C^2_c(\R^d;\R^d)$, there exists $\theta\in C^3_c(\R^d;\R)$ such that $\nabla\theta =\xi$. For any $\eps>0$, consider $\psi:\R^d\to\R$ defined by $\psi(x) = \frac12|x|^2+\eps\theta(x)$ for all $x\in\R^d$. As $\nabla\xi$ is bounded (by $\xi\in C^2_c(\R^d;\R^d)$), we can take $\eps>0$ small enough such that $\psi$ is strictly convex, or equivalently, $det(I+\eps\nabla\xi)>0$. This in turn implies that $\nabla\psi = I+\eps\xi$ is one-to-one, such that the inverse $(I+\eps\xi)^{-1}$ is well-defined. Hence, by Lemma 5.5.3  in \cite{Ambrosio-book-05}, the measure $\mu^\xi_\eps$ in \eqref{pushforward} has a density $\rho^\xi_\eps\in\cP(\R^d)$, which is given by
\begin{equation}\label{rho^xi_eps}
\rho^\xi_\eps(y) := {\rho\left((I+\eps\xi)^{-1}(y)\right)}/{det(\operatorname{Id}+\eps\nabla\xi)},\quad \forall y\in\R^d,
\end{equation}
where $\operatorname{Id}$ denotes the $d\times d$ identity matrix. It can be shown that $(y, \eps)\mapsto \rho^{\xi}_\eps(y)$ is $C^1$ on $\R^d\times [0,1]$ and satisfies $\rho^\xi_\eps(y)\mid_{\eps=0}=\rho(y)$ and $\frac{\partial\rho^\xi_\eps(y)}{\partial\eps}\mid_{\eps=0} = -\nabla\cdot (\rho(y)\xi(y))$ (see equation (10.4.7) in \cite{Ambrosio-book-05}). Now, by \eqref{delta G},
\begin{align}
\lim_{\eps\to 0} \frac{G(\rho^\xi_\eps) - G(\rho)}{\eps} &= \lim_{\eps\to 0} \int_0^1 \int_{\R^d} \frac{\delta G}{\delta \rho}\big((1-\lambda)\rho+\lambda \rho^\xi_\eps, y\big) \frac{\rho^\xi_\eps-\rho}{\eps}(y)dy d\lambda\notag\\
& =  \lim_{\eps\to 0} \int_0^1 \int_{K} \frac{\delta G}{\delta \rho}\big((1-\lambda)\rho+\lambda \rho^\xi_\eps, y\big)\frac{\partial\rho^\xi_\eps(y)}{\partial\eps}\mid_{\eps = \eps_0(y)} dy d\lambda,\label{uniform converg.}
\end{align}
where $\eps_0(y) \in [0,\eps]$ is obtained from the mean value theorem, and $K \subseteq \R^d$ is the compact support of $\xi$. The continuity of $\frac{\partial\rho^\xi_\eps(y)}{\partial\eps}$ in $(y, \eps)$ implies that $\frac{\partial\rho^\xi_\eps(y)}{\partial\eps}\mid_{\eps = \eps_0(y)}$ 
is bounded on $K$.
Since $\rho_\eps^\xi \rightarrow \rho$ uniformly, $\frac{\delta G}{\delta \rho}((1-\lambda)\rho+\lambda \rho^\xi_\eps, y)$, as a function of $(\lambda, y)$, converges uniformly to $\frac{\delta G}{\delta \rho}(\rho, y)$ on $[0,1]\times K$ as $\eps\rightarrow 0$ by assumption. 
Dominated convergence theorem then allows us to 
exchange the limit with the integral, yielding
\begin{align*}
\lim_{\eps\to 0} \frac{G(\rho^\xi_\eps) - G(\rho)}{\eps} &= -\int_0^1 \int_{\R^d} \frac{\delta G}{\delta \rho}\big(\rho, y\big) \nabla\cdot (\rho(y)\xi(y))dy d\lambda= \int_{\R^d} \nabla \frac{\delta G}{\delta \rho}\big(\rho,y\big)\cdot \xi(y)d\mu(y),
\end{align*}
where the last equality is due to integration by parts.


\subsection{Proof of Lemma~\ref{lem:delta J}}\label{subsec:proof of lem:delta J}
Recall from \eqref{JSD f} that 
we can write 
$J(\rho)=\JSD(\rho,\rhod)=\int_{\R^d} f\big(\frac{\rho(y)}{\rhod(y)}\big)\rhod(y) dy$,
where $f(x):= \frac12\big[ (x+1)\ln(\frac{2}{x+1})+x\ln x\big]$,
$\forall x\in [0,\infty)$, is a convex function satisfying $0\le f(x)\le \frac{\ln 2}{2}(x+1)$. Fix $\rho,\bar\rho\in\cP(\R^d)$. For any $\lambda\in [0,1]$, set $\rho_\lambda := (1-\lambda)\rho +\lambda \bar \rho \in \cP(\R^d)$, and define
\[g_\lambda := f'\left(\frac{\rho_{\lambda}}{\rhod}\right) (\bar\rho - \rho).\]
Using the convexity of $f$, one can show that $g_\lambda$ is increasing in $\lambda \in [0,1]$ and integrable on $\R^d$ if $\lambda \in (0,1)$. Indeed, the monotonicity is implied from $\frac{d}{d\lambda} g_\lambda =  f''\left(\frac{\rho_{\lambda}}{\rhod}\right) \frac{(\bar\rho - \rho)^2}{\rhod} \ge 0$, and the integrability is a consequence of the relation
\[-\frac{\ln 2}{2\lambda} (\rho+\rhod) \le \frac{f\left(\frac{\rho_\lambda}{\rhod}\right)-f\left(\frac{\rho}{\rhod}\right)}{\lambda} \rhod \le g_\lambda \le \frac{f\left(\frac{\bar\rho}{\rhod}\right)-f\left(\frac{\rho_{\lambda}}{\rhod}\right)}{1-\lambda} \rhod \le \frac{\ln 2}{2(1-\lambda)}(\bar \rho+\rhod).\]
Now, let $0<\lambda_1 <\lambda_2 < 1$. By the fundamental theorem of calculus and Fubini's theorem, 
\begin{align*}
J(\rho_{\lambda_2})-J(\rho_{\lambda_1})
& = \int_{\R^d}\left[f\left(\frac{\rho_{\lambda_2}(y)}{\rhod(y)}\right) - f\left(\frac{\rho_{\lambda_1}(y)}{\rhod(y)}\right)\right] \rhod(y) dy\\
& =  \int_{\R^d} \int_{\lambda_1}^{\lambda_2} g_\lambda(y) d\lambda dy =   \int_{\lambda_1}^{\lambda_2} \int_{\R^d} g_\lambda(y) dy d\lambda.
\end{align*}
Letting $\lambda_1\rightarrow 0$ and $\lambda_2\rightarrow 1$ yields
\[J(\bar\rho)-J(\rho) = \int_0^1 \int_{\R^d} g_\lambda(y) dy d\lambda = \int_0^1 \int_{\R^d}  f'\left(\frac{\rho_{\lambda}(y)}{\rhod(y)}\right) (\bar\rho(y) - \rho(y)) dy d\lambda ,\]
where the integral over $\lambda$ may be interpreted as an improper integral if either $\int_{\R^d} g_0(y) dy=-\infty$ or $\int_{\R^d} g_1(y) dy=\infty$. This readily implies  
$\frac{\delta J}{\delta \rho}(\rho, y) =  f'\big(\frac{\rho(y)}{\rhod(y)}\big)= \frac{1}{2} \ln\big( \frac{2\rho(y)}{\rho(y)+\rhod(y)}\big).$

\section{Proofs for Section~\ref{sec:FP}}

\subsection{Derivation of Lemma~\ref{lem:accretive}}\label{subsec:proof of lem:accretive}

\begin{lemma}\label{lemma:accretive}
Let $E\subseteq\R$ be convex and $g: E\to \R$ be a concave, strictly increasing function such that for any $v:\R^d\to\R$ with $v(\R^d)\subseteq E$, the following hold: 
\begin{itemize}
\item[(i)] if $v\in H^1_0(\bR^d, \mu_d)$, then $g(v)\in H^1_0(\bR^d, \mu_d)$;
\item[(ii)] if $v\in L^1(\bR^d, \mu_d)$, then $1/(g'_+)(v)\in L^1(\bR^d, \mud)$ (where $g'_+$ is the right derivative of $g$). 
\end{itemize}
Then, the operator $-\Delta_{\mud}g:D(-\Delta_{\mud}g)\to L^1(\bR^d, \mu_d)$,  with
\[
D(-\Delta_{\mud}g) := \{v\in L^1(\bR^d, \mud) \cap H^1_0(\bR^d, \mu_d): v(\R^d)\subseteq E,\  -\Delta_{\mud}g(v)\in L^1(\bR^d, \mu_d) \},
\]
is accretive in $L^1(\bR^d, \mu_d)$. That is, for any $v_1, v_2 \in D(-\Delta_{\mud}g)$ and $\lambda>0$, 
\[
\norm{v_1-v_2}_{L^1(\bR^d, \mud)}\le \norm{(v_1-\lambda \Delta_{\mud}g(v_1))-(v_2-\lambda \Delta_{\mud}g(v_2))}_{L^1(\bR^d, \mud)}.
\]
\end{lemma}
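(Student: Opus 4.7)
The plan is to run the classical Brezis--Strauss sign-approximation argument, adapted to the weighted setting $L^1(\R^d,\mud)$. Fix $v_1,v_2\in D(-\Delta_{\mud}g)$ and $\lambda>0$, and set $w:=v_1-v_2\in L^1(\R^d,\mud)$, $\phi:=g(v_1)-g(v_2)$, and $h:=w-\lambda\Delta_{\mud}\phi$. By assumption (i) applied to $v_1$ and $v_2$, together with the linearity of $\Delta_{\mud}$, one has $\phi\in H^1_0(\R^d,\mud)$ and $\Delta_{\mud}\phi\in L^1(\R^d,\mud)$, so $h\in L^1(\R^d,\mud)$. The target inequality is $\|w\|_{L^1(\mud)}\le\|h\|_{L^1(\mud)}$. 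The key structural observation is that strict monotonicity of $g$ forces $\operatorname{sgn}\phi(y)=\operatorname{sgn}w(y)$ a.e., so pairing $w-\lambda\Delta_{\mud}\phi=h$ against $\operatorname{sgn}\phi$ should recover $\int|w|\,d\mud$ on the left, while the Laplacian contribution gives a sign we can discard and the right-hand side is dominated by $\int|h|\,d\mud$.

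To implement this rigorously, take a smooth nondecreasing approximation $s_\eps:\R\to[-1,1]$ of $\operatorname{sgn}$ with $s_\eps(0)=0$ and $s_\eps'\ge0$ (e.g.\ $s_\eps(r)=\tanh(r/\eps)$). Since $\phi\in H^1_0(\R^d,\mud)$ and $s_\eps$ is Lipschitz with $s_\eps(0)=0$, the chain rule for Sobolev functions gives $s_\eps(\phi)\in L^\infty\cap H^1_0(\R^d,\mud)$ with $\nabla s_\eps(\phi)=s_\eps'(\phi)\nabla\phi\in L^2(\R^d,\mud)$. Pairing the identity $w-\lambda\Delta_{\mud}\phi=h$ against $s_\eps(\phi)$ and invoking Remark~\ref{rem:Laplacian}, extended from $C^\infty_c$ to the test function $s_\eps(\phi)$ by $H^1(\mud)$-density and the uniform $L^\infty$-bound on $s_\eps(\phi)$, yields
\[
\int_{\R^d} w\, s_\eps(\phi)\,d\mud\;+\;\lambda\int_{\R^d} s_\eps'(\phi)\,|\nabla\phi|^2\,d\mud\;=\;\int_{\R^d} h\, s_\eps(\phi)\,d\mud.
\]
The second integral on the left is nonnegative and is discarded.

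To finish, let $\eps\downarrow 0$. Because $|s_\eps(\phi)|\le 1$ and $s_\eps(\phi)\to\operatorname{sgn}(\phi)$ pointwise, while $w,h\in L^1(\mud)$, dominated convergence gives
\[
\int_{\R^d} w\,\operatorname{sgn}(\phi)\,d\mud\;\le\;\int_{\R^d} h\,\operatorname{sgn}(\phi)\,d\mud.
\]
Since $g$ is strictly increasing, $\operatorname{sgn}\phi=\operatorname{sgn}w$ on $\{w\ne 0\}$ and both sides are zero on $\{w=0\}$; hence the left-hand side equals $\|w\|_{L^1(\mud)}$, while the right-hand side is bounded above by $\|h\|_{L^1(\mud)}$. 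Recognizing $h=(v_1-\lambda\Delta_{\mud}g(v_1))-(v_2-\lambda\Delta_{\mud}g(v_2))$ yields the desired accretiveness.

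The main obstacle is the extension of the integration-by-parts formula of Remark~\ref{rem:Laplacian}: it is stated only for $C^\infty_c$ test functions, whereas $s_\eps(\phi)$ lies merely in $L^\infty\cap H^1_0(\R^d,\mud)$. Carrying this out requires approximating $s_\eps(\phi)$ by $C^\infty_c$ functions in the $H^1(\mud)$-norm while retaining a uniform $L^\infty$ bound, and then passing to the limit by pairing $\Delta_{\mud}\phi\in L^1(\mud)$ against the uniformly bounded approximants and $\nabla\phi\in L^2(\mud)$ against the converging gradients. I expect hypothesis~(ii) on the $\mud$-integrability of $1/g'_{+}(v)$ to enter at this technical step, either through a truncation of $\phi$ controlled by $1/g'(v_i)$ or in bounding the transition region $\{|\phi|\le\delta\}$ where $s_\eps'$ is large---rather than at the conceptual level of the argument, which is entirely driven by the $\operatorname{sgn}\phi=\operatorname{sgn}w$ identity.
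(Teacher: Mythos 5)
Your proof is correct and follows essentially the same route as the paper's: test the difference of the two equations against a monotone approximation of $\operatorname{sgn}\bigl(g(v_1)-g(v_2)\bigr)$, drop the nonnegative Dirichlet term, and use that strict monotonicity of $g$ forces $\operatorname{sgn}(g(v_1)-g(v_2))=\operatorname{sgn}(v_1-v_2)$. The paper uses the piecewise-linear $\chi_\eps(x)=1_{\{|x|\le\eps\}}x/\eps+1_{\{|x|>\eps\}}\operatorname{sgn}(x)$ in place of your $\tanh(\cdot/\eps)$, which is immaterial. The one substantive difference is the limit passage: you send $\eps\downarrow 0$ by dominated convergence on both sides, which never invokes hypothesis~(ii); the paper instead keeps $\norm{v_1-v_2}_{L^1(\mud)}$ exact from the outset and must control the residual $\int_{\{|h|\le\eps\}}(vh/\eps-|v|)\,d\mud$ on the transition region, which it does via the concavity inequality $|g(v_1)-g(v_2)|\ge|v_1-v_2|\,g'_+(\max(v_1,v_2))$ and the $L^1(\mud)$-integrability of $1/g'_+$ from hypothesis~(ii). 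So your closing guess about where (ii) enters is right for the paper's version of the limit argument, but your own dominated-convergence passage legitimately bypasses it (the sets $\{|h|\le\eps\}$ shrink to $\{h=0\}=\{v_1=v_2\}$, so the residual vanishes without any rate). The remaining technical point you flagged---extending the integration by parts of Remark~\ref{rem:Laplacian} to test functions in $H^1(\bR^d,\mud)\cap L^\infty(\bR^d,\mud)$, with the boundary term killed because $g(v_i)\in H^1_0(\bR^d,\mud)$ by hypothesis~(i)---is exactly how the paper handles it.
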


\begin{proof}
Fix $v_1, v_2 \in D(-\Delta_{\mud}g)$ and $\lambda>0$. Set $f_i := v_i-\lambda \Delta_{\mud}g(v_i)$ for $i=1,2$. For any $\varphi \in H^1(\bR^d, \mud) \cap L^\infty(\bR^d, \mud)$, by multiplying the equation $ v_i-\lambda \Delta_{\mud}g(v_i) = f_i$ by $\varphi$ and using integration by parts as in Remark~\ref{rem:Laplacian}, we get
\[
\int_{\bR^d} v_i \varphi d\mud + \lambda \int_{\bR^d} \nabla g(v_i) \cdot \nabla \varphi d\mud=\int_{\bR^d} f_i \varphi d\mud,\quad i=1,2,
\]
Note that under integration by parts, the boundary term for the second integral above vanishes because $g(v_i)\in H^1_0(\bR^d, \mud)$ by condition (i). Let us write $v:=v_1-v_2$, $h:=g(v_1)-g(v_2)$ and $f:=f_1-f_2$. We obtain from the above equation (by subtraction) that
\begin{equation}\label{eq:accretive}
\int_{\bR^d} v\varphi d\mud + \lambda \int_{\bR^d} \nabla h \cdot \nabla \varphi d\mud=\int_{\bR^d} f \varphi d\mud.
\end{equation}
Now, for each $\eps>0$, consider the function
\[
\chi_{\eps}(x) := 1_{\{|x|\le \eps\}}x/\eps+1_{\{|x|>\eps\}}\text{sgn}(x)\quad \hbox{for}\ x\in\bR,
\] 
which is a bounded, continuous, and weakly differentiable approximation of the sgn($\cdot$) function. 
Taking $\varphi=\chi_\eps(h)\in H^1(\bR^d, \mud) \cap L^\infty(\bR^d, \mud)$ in \eqref{eq:accretive} yields
\begin{align*}
&\int_{\bR^d} v\chi_\eps\left(h\right)d\mud+\lambda \int_{\bR^d} \left|\nabla h \right|^2\chi'_\eps\left(h\right)d\mud=\int_{\bR^d} f \chi_\eps\left(h\right) d\mud.
\end{align*}
As $\chi_\eps$ is nondecreasing, the middle integral above is nonnegative. Consequently, we have
\begin{align*}
\norm{v}_{L^1(\bR^d,\mud)}+\int_{\left\{\left| h\right|\le \eps\right\}} \left(v\frac{h}{\eps} -|v| \right)d\mud \le \int_{\bR^d} f \chi_\eps\left(h\right)d\mu\le \norm{f}_{L^1(\bR^d,\mud)},
\end{align*}
where we used the fact $\text{sgn}(v)=\text{sgn}(h)$. Finally, observe that the integral on the left-hand side above converges to zero as $\eps\rightarrow 0$. Indeed, 
\begin{align*}
\int_{\left\{\left|h \right|\le \eps\right\}} \left|v\frac{h}{\eps} -|v| \right|d\mud
&\le \int_{\left\{\left|h \right|\le \eps\right\}} 2|v| d\mud \le 2\eps \int_{\bR^d}  \frac{1}{g'_+(\max(v_1, v_2))} d\mud\rightarrow 0,
\end{align*}
where the second inequality follows from $|h| = |g(v_1)-g(v_2)|\ge |v_1-v_2| g'_+(\max(v_1, v_2))$, thanks to the concavity of $g$, and the convergence follows from condition (ii). We therefore conclude that $\norm{v}_{L^1(\bR^d,\mud)}\le \norm{f}_{L^1(\bR^d,\mud)}$, as desired.
\end{proof}

\begin{proof}[Proof of Lemma~\ref{lem:accretive}]
Applying Lemma~\ref{lemma:accretive} with $E=[0,\beta]$ and $g(x) = \frac{1}{2}\ln (1+x)$ (resp.\ $E=\R$ and $g(x)=x$) yields (i) (resp.\ (ii)). 
\end{proof}


\subsection{Derivation of Lemma~\ref{lem:domain condition}}\label{subsec:proof of lem:domain condition}
Let $\ab{\cdot, \cdot}$ denote the inner product in $L^2(\bR^d, \mud)$. For any $\lambda>0$, consider the bilinear form 
\[
\mathcal{B}[w,\varphi]:=\frac{\lambda}{2}\int_{\R^d} \nabla w \cdot \nabla \varphi d\mud = -\frac{\lambda}{2}\int_{\R^d} (\Delta_{\mud} w) \varphi d\mud,\quad \forall\ w,\varphi\in H^1_0(\bR^d, \mud),
\]
where the last equality follows from Remark~\ref{rem:Laplacian}.

\begin{definition}
We say $w\in H_0^1(\bR^d, \mud)\cap L^\infty(\bR^d, \mud)$ is  a
\begin{itemize}
\item [(i)] weak solution to \eqref{eq:nonlinear-w} w.r.t.\ $\mud$ if $\mathcal{B}[w, \varphi] = \ab{f+1-e^w, \varphi}$ for all $\varphi\in H^1_0(\bR^d, \mud)$;
\item[(ii)] weak subsolution to \eqref{eq:nonlinear-w} w.r.t.\ $\mud$ if  
$
\mathcal{B}[w, \varphi] \le \ab{f+1-e^w, \varphi}$ $\forall\varphi\in H^1_0(\bR^d, \mud)$, $\varphi\ge 0;
$
\item[(iii)] weak supersolution to \eqref{eq:nonlinear-w} w.r.t.\ $\mud$ if  
$\mathcal{B}[w, \varphi] \ge \ab{f+1-e^w, \varphi}$ $\forall \varphi\in H^1_0(\bR^d, \mud)$, $\varphi\ge 0.$
\end{itemize}
\end{definition}


The proof below relies on the construction of a weak solution $w^*$ to \eqref{eq:nonlinear-w} through a sequence of weak sub- and supersolutions.

\begin{proof}[Proof of Lemma~\ref{lem:domain condition}]
As $f\in \ubar{D(A)}$, we have $0\le f\le \beta$. It can then be verified directly that $\lbar{w}_0\equiv 0$ is a weak subsolution and $\ubar{w}_0 \equiv \ln (1+\beta)$ is a weak supersolution to \eqref{eq:nonlinear-w} w.r.t.\ $\mud$. Starting from $\lbar{w}_0$ and $\ubar{w}_0$, we will construct recursively a sequence $\{\lbar{w}_n, \ubar{w}_n\}_{n\in\N}$ such that $\lbar{w}_n$ and $\ubar{w}_n$, for each $n\in\N$, are the unique weak solutions in $H^1_0(\bR^d, \mud)$ w.r.t.\ $\mud$ to the linear PDEs
\begin{equation}\label{eq:subsolution}
\begin{aligned}
\alpha \lbar{w}_n-\frac{\lambda}{2} \Delta \lbar{w}_n - \frac{\lambda}{2} \nabla \ln \rhod \cdot \nabla \lbar{w}_n 
&= \alpha \lbar{w}_{n-1}+1-e^{\lbar{w}_{n-1}}+f,
\end{aligned}
\end{equation}
\begin{equation}\label{eq:supersolution}
\begin{aligned}
\alpha \ubar{w}_n-\frac{\lambda}{2} \Delta \ubar{w}_n - \frac{\lambda}{2} \nabla \ln \rhod \cdot \nabla \ubar{w}_n
&= \alpha \ubar{w}_{n-1}+1-e^{\ubar{w}_{n-1}} +f,
\end{aligned}
\end{equation}
where the constant $\alpha$ is chosen to satisfy $\alpha\ge 1+\beta$, so that $x\mapsto g(x):=\alpha x-e^x$ is strictly increasing on $(-\infty, \ln(1+\beta)]$.

First, given that $\lbar{w}_{0}, \ubar{w}_{0}\in L^\infty(\bR^d, \mud)$, the existence of unique solutions $\lbar{w}_1, \ubar{w}_{1}\in H^1_0(\bR^d, \mud)$ to \eqref{eq:subsolution}-\eqref{eq:supersolution} w.r.t.\ $\mud$ follows directly from the Lax-Milgram theorem applied to  the bilinear form $\alpha\ab{w, \varphi}+\mathcal{B}[w, \varphi]$. 
Moreover, we claim that $\lbar{w}_{1}, \ubar{w}_{1}$ again belong to $L^\infty(\bR^d, \mud)$ under the relation $\lbar{w}_{0} \le \lbar{w}_{1}\le \ubar{w}_{1}\le \ubar{w}_{0}$ $\mud$-a.e.
As $\ubar{w}_0$ is a weak supersolution to \eqref{eq:nonlinear-w} w.r.t. $\mud$, we have
\[
\alpha \ubar{w}_1-\frac{\lambda}{2} \Delta \ubar{w}_1 - \frac{\lambda}{2} \nabla \ln \rhod \cdot \nabla \ubar{w}_1
=\alpha \ubar{w}_{0}+1-e^{\ubar{w}_{0}}+f \le  
\alpha \ubar{w}_{0}-\frac{\lambda}{2} \Delta \ubar{w}_0 - \frac{\lambda}{2} \nabla \ln \rhod \cdot \nabla \ubar{w}_0
\]
in the weak form. Setting $w:=\ubar{w}_1-\ubar{w}_0\in H^1_0(\bR^d, \mud)$, we get $\alpha w-\frac{\lambda}{2} \Delta w- \frac{\lambda}{2} \nabla \ln \rhod \cdot \nabla w\le 0$ in the weak form. That is,
$
\alpha\ab{w, \varphi}+\mathcal{B}[w, \varphi]\le 0$, for all $\varphi\in H^1_0 (\bR^d, \mud)$, $\varphi\ge 0.
$
Taking $\varphi=w^+\in H^1_0(\bR^d, \mud)$ in the ineqaulity leads to the following implication:
\begin{equation}\label{MP}
\int_{\{w>0\}} \left(\alpha w^2 +\frac{\lambda}{2} |\nabla w|^2\right) d\mud \le 0\quad  \implies\quad   w\le 0\ \mud\hbox{-a.e.} 
\end{equation}
We then conclude $\ubar{w}_1\le \ubar{w}_0$ $\mud$-a.e.
Similarly, by the weak subsolution property of $\lbar{w}_0$, we may again use \eqref{MP} (with $w:=\lbar{w}_0- \lbar{w}_1$) to get $\lbar{w}_1\ge \lbar{w}_0$ $\mud$-a.e. Finally, thanks to $\lbar{w}_{0}\le \ubar{w}_{0}$, 
\begin{align*}
\alpha \lbar{w}_{1}-\frac{\lambda}{2} \Delta \lbar{w}_{1} - \frac{\lambda}{2} \nabla \ln \rhod \cdot \nabla \lbar{w}_{1}&=\alpha \lbar{w}_{0}+1-e^{\lbar{w}_{0}}+f\\
&\le \alpha \ubar{w}_{0}+1-e^{\ubar{w}_{0}}+f = \alpha \ubar{w}_{1}-\frac{\lambda}{2} \Delta \ubar{w}_1 - \frac{\lambda}{2} \nabla \ln \rhod \cdot \nabla \ubar{w}_1.
\end{align*}
Applying \eqref{MP} again (with $w:= \lbar{w}_1 - \ubar{w}_1$) gives $\lbar{w}_1\le \ubar{w}_1$ $\mud$-a.e. This finishes the proof of our claim that $\lbar{w}_{0} \le \lbar{w}_{1}\le \ubar{w}_{1}\le \ubar{w}_{0}$ $\mud$-a.e. As the same arguments above hold true for each iterate $n\in\N$, we obtain $\lbar{w}_{n-1}\le \lbar{w}_{n}\le \ubar{w}_{n}\le \ubar{w}_{n-1}$ $\mud$-a.e. for all $n\in\N$.

Now, for each $n\in\N$, thanks to $\lbar{w}_{n-1}\le \lbar{w}_{n}$,
\begin{align*}
\alpha \lbar{w}_{n+1}-\frac{\lambda}{2} \Delta \lbar{w}_{n+1} - \frac{\lambda}{2} \nabla \ln \rhod \cdot \nabla \lbar{w}_{n+1}&=\alpha \lbar{w}_{n}+1-e^{\lbar{w}_{n}}+f\\
&\hspace{-0.8in}\ge \alpha \lbar{w}_{n-1}+1-e^{\lbar{w}_{n-1}}+f=  \alpha \lbar{w}_{n}-\frac{\lambda}{2} \Delta \lbar{w}_{n} - \frac{\lambda}{2} \nabla \ln \rhod \cdot \nabla \lbar{w}_n.
\end{align*}
Hence, applying \eqref{MP} (with $w:= \lbar{w}_n - \lbar{w}_{n+1}$) gives $\lbar{w}_{n}\le \lbar{w}_{n+1}$ $\mud$-a.e. A similar argument shows $\ubar{w}_{n+1}\le \ubar{w}_{n}$ $\mud$-a.e. That is, $\{\lbar{w}_n\}_{n\in\N}$ is monotonically increasing, $\{\ubar{w}_n\}_{n\in\N}$ is monotonically decreasing, and all the functions are $[0, \ln(1+\beta)]$-valued. Thus, ${w}^*:=\lim_{n\to\infty} \ubar{w}_n$ $\mud$-a.e. is well-defined. We claim that ${w}^*$ is a weak solution to \eqref{eq:nonlinear-w} w.r.t.\ $\mud$.\footnote{One could alternatively show that $\lim_{n\to\infty} \lbar{w}_n$ is a weak solution to \eqref{eq:nonlinear-w} w.r.t.\ $\mud$.}

Note that the operator $L:=(\alpha I -\frac{\lambda}{2}(\Delta+\nabla \ln \rhod \cdot \nabla))^{-1}$ is continuous from $L^2(\bR^d, \mud)$ to $H^1_0(\bR^d, \mud)$. Indeed, given $f\in L^2(\R^d,\mud)$, by taking the test function $\varphi=w\in H^1_0(\bR^d, \mud)$ in the weak form of $\alpha w -\frac{\lambda}{2}(\Delta w+\nabla \ln \rhod \cdot \nabla w)=f$, we deduce from Remark~\ref{rem:Laplacian} that
\[
\alpha \norm{w}^2_{L^2(\bR^d, \mud)}+\frac{\lambda}{2}\norm{\nabla w}^2_{L^2(\bR^d, \mud)}=\int_{\R^d} f wd\mud\le \frac{1}{2\alpha}\norm{f}^2_{L^2(\bR^d, \mud)}+\frac{\alpha}{2}\norm{w}^2_{L^2(\bR^d, \mud)},
\]
where the inequality follows from Young's inequality. 
It follows that
\begin{equation}\label{eq:H1-bound}
\min(\alpha, \lambda) \norm{w}^2_{H^1_0(\bR^d, \mud)}\le \frac{1}{\alpha}\norm{f}^2_{L^2(\bR^d, \mud)},
\end{equation}
which implies the desired continuity of $L$. Now, by rewriting \eqref{eq:supersolution} as $\ubar{w}_n=L(\alpha \ubar{w}_{n-1}+1-e^{\ubar{w}_{n-1}}+f)$, we conclude from ${w}^*=\lim_{n\to\infty} \ubar{w}_n$ $\mud$-a.e., the bounded convergence theorem, and the continuity of $L$ that $\ubar{w}_n\to \ubar{w}:=L(\alpha {w}^*+1-e^{{w}^*}+f)$ in $H^1_0(\bR^d, \mud)$. By passing to a subsequence (without relabeling), we have $\ubar{w}_n\to \ubar{w}$ $\mud$-a.e., which entails ${w}^*=\ubar{w}$ $\mud$-a.e. Hence, we have $w^*\in H^1_0(\bR^d, \mud)$ and ${w}^*=L(\alpha {w}^*+1-e^{{w}^*}+f)$, i.e.\ $w^*$ is a weak solution to \eqref{eq:nonlinear-w} w.r.t.\ $\mud$. By a direct calculation, $v:=e^{w^*}-1\in H^1_0(\bR^d, \mud)$ is a weak solution to $(I+\lambda A)v=f$ w.r.t.\ $\mud$. Finally, as ${w}^*=\lim_{n\to\infty} \ubar{w}_n$ by construction satisfies $0\le {w}^*\le \ln(1+\beta)$, $v$ fulfills $0\le v\le \beta$ and thus lies in $D(A)$. 
\end{proof}


\subsection{Proof of Proposition~\ref{prop:uniqueness}}\label{subsec:proof of prop:uniqueness}
Fix $\eps>0$. By Remark~\ref{rem:Laplacian}, we may conclude from Lax-Milgram's theorem that for any $f\in L^2(\bR^d, \mud)$, there is a unique weak solution $w\in H^1_0(\bR^d, \mud)$ to $(\eps I-\Delta_{\mud})w=f$ w.r.t.\ $\mud$. That is, the operator $\Gamma_\eps :=(\eps I-\Delta_{\mud})^{-1}: L^2(\bR^d, \mud) \to H^1_0(\bR^d, \mud)$ is well-defined. As $\ln \rhod \in H^1(\bR^d, \mud)$ (by Assumption~\ref{asm}), let us take $\{b_n\} $ in $C^\infty_c(\bR^d)$ such that $b_n\to \nabla \ln \rhod$ in $L^2(\bR^d, \mud)$. Then, for each $f\in L^2(\bR^d, \mud)$, we consider the smoothed problem
\begin{equation}\label{A^n}
(\eps I-\Delta^n_{\mud}) w =f \quad \text{with}\ \ \Delta^n_{\mud}:= \Delta + b_n\cdot \nabla,
\end{equation}
and denote by $\Gamma^n_\eps f :=(\eps I-\Delta^n_{\mud})^{-1}f$ the unique weak solution $w\in H^1_0(\bR^d, \mud)$ w.r.t. $\mud$. 

{\bf Step 1:} We show that for any $f\in L^\infty(\bR^d, \mud)$, 
\begin{equation}\label{eq:Gamma-n-conv}
\Gamma^n_\eps f\rightarrow \Gamma_\eps f \quad \text{in}\ \ L^1(\bR^d, \mud).
\end{equation}
Take $\{f_k\}$ in $C^\infty_c(\R^d)$ such that $f_k\to f$ in $L^1(\bR^d,\mud)$. Without loss of generality, we may assume $|f_k| \le \norm{f}_{L^\infty(\R^d,\mud)}$. For each $k\in\N$, by noting  $\Gamma^n_\eps f_k\in C^\infty(\R^d)$, we conclude from the maximum principle \cite[Theorem 2.9.2]{Krylov-book-96} that 
\begin{equation}\label{Gamma^n bdd}
\norm{\Gamma^n_\eps f_k}_{L^\infty(\R^d,\mud)}\le \frac{1}{\eps}\norm{f_k}_{L^\infty(\R^d,\mud)} \le \frac{1}{\eps}\norm{f}_{L^\infty(\R^d,\mud)}. 
\end{equation}
Now,  in view of \eqref{A^n} and \eqref{Laplacian}, the equation $(\eps I-\Delta^n_{\mud})\Gamma^n_\eps f_k=f_k$ can be written as
\[
\big(\eps I-\Delta_{\mud} - (b_n-\nabla\ln\rhod)\cdot\nabla\big)\Gamma^n_\eps f_k=f_k. 
\]
Let us multiply this equation by $\Gamma^n_\eps f_k$ and integrate it w.r.t.\ $\mud$. With the aid of Remark~\ref{rem:Laplacian}, Young's inequality, \eqref{Gamma^n bdd}, and H\"{o}lder's inequality, we get
\begin{align}\label{gradient esti.}
&\eps \norm{\Gamma^n_\eps f_k}^2_{L^2(\mud)}+\norm{\nabla \Gamma^n_\eps f_k}^2_{L^2(\mud)}=\int_{\R^d} f_k \Gamma^n_\eps f_k d\mud + \int_{\R^d} \Gamma^n_\eps f_k (b_n-\nabla \ln \rhod)\cdot \nabla \Gamma^n_\eps f_k d\mud\notag\\
&\le \frac{1}{2\eps}\norm{f_k}^2_{L^2(\mud)}+\frac{\eps}{2} \norm{\Gamma^n_\eps f_k}^2_{L^2(\mud)}+\frac{1}{\eps}\norm{f}_{L^\infty(\mud)}\norm{b_n-\nabla \ln \rhod}_{L^2(\mud)}\norm{\nabla \Gamma^n_\eps f_k}_{L^2(\mud)}\\
&\le \frac{1}{2\eps}\norm{f}^2_{L^\infty(\mud)}+\frac{\eps}{2} \norm{\Gamma^n_\eps f_k}^2_{L^2(\mud)}+\frac{1}{2\eps^2}\norm{f}^2_{L^\infty(\mud)}\norm{b_n-\nabla \ln \rhod}^2_{L^2(\mud)}+\frac{1}{2}\norm{\nabla \Gamma^n_\eps f_k}^2_{L^2(\mud)}.\notag
\end{align}
It follows that
\begin{equation*}
\eps \norm{\Gamma_\eps^n f_k}^2_{L^2(\mud)}+ \norm{\nabla \Gamma^n_\eps f_k}^2_{L^2(\mud)}\le \frac{1}{\eps}\norm{f}^2_{L^\infty(\mud)} +\frac{1}{\eps^2}\norm{f}^2_{L^\infty(\mud)}\norm{b_n-\nabla \ln \rhod}^2_{L^2(\mud)}<\infty.
\end{equation*}
This shows that $\{\Gamma^n_\eps f_k\}_{k\in\N}$ is bounded in $H^1_0(\bR^d, \mud)$. Hence, by passing to a subsequence (without relabeling), $\Gamma^n_\eps f_k$ converges to some $\hat w$ weakly in $H^1_0(\bR^d, \mud)$ as $k\to\infty$.
This implies that if we let $k\rightarrow \infty$ in the weak form of $(\eps I-\Delta^n_{\mud})\Gamma^n_\eps f_k=f_k$, we will get $(\eps I-\Delta^n_{\mud})\hat w=f$ in the weak form, which indicates $\hat w=\Gamma^n_\eps f$. That is, we have $\Gamma^n_\eps f_k\to \Gamma^n_\eps f$ weakly in $H^1_0(\bR^d, \mud)$ as $k\to\infty$. By taking the test function $\varphi = \Gamma^n_\eps f\in H^1_0(\bR^d, \mud)$ in the weak form of $(\eps I-\Delta^n_{\mud}) \Gamma^n_\eps f =f$, we can repeat the gradient estimate \eqref{gradient esti.}, with $f_k$ replaced by $f$, thereby obtaining
\begin{equation}\label{eq:Gamma-n-2}
\eps \norm{\Gamma_\eps^n f}^2_{L^2(\mud)}+ \norm{\nabla \Gamma^n_\eps f}^2_{L^2(\mud)}\le \frac{1}{\eps}\norm{f}^2_{L^\infty(\mud)} +\frac{1}{\eps^2}\norm{f}^2_{L^\infty(\mud)}\norm{b_n-\nabla \ln \rhod}^2_{L^2(\mud)}.
\end{equation} 
By subtracting the equation $(\eps I-\Delta_{\mud})\Gamma_\eps f=f$ from $(\eps I-\Delta^n_{\mud})\Gamma^n_\eps f=f$, we get
\[
\eps(\Gamma^n_\eps f-\Gamma_\eps f)-\Delta (\Gamma^n_\eps f-\Gamma_\eps f) - \nabla \ln \rhod\cdot \nabla (\Gamma^n_\eps f-\Gamma_\eps f) + (\nabla \ln \rhod-b_n) \cdot \nabla \Gamma^n_\eps f=0.
\]
This implies $\left(\eps I-\Delta_{\mud}\right)(\Gamma^n_\eps f-\Gamma_\eps f)=(b_n-\nabla \ln \rhod) \cdot \nabla \Gamma^n_\eps f$, i.e.\ $\Gamma_\eps((b_n-\nabla \ln \rhod) \cdot \nabla \Gamma^n_\eps f) = \Gamma^n_\eps f-\Gamma_\eps f$. As $\Gamma_\eps 0 = 0$ trivially, we deduce from Lemma~\ref{lem:accretive} (ii) that
\begin{align}
\eps\norm{\Gamma^n_\eps f-\Gamma_\eps f}_{L^1(\bR^d, \mud)}& \le \norm{(b_n-\nabla \ln \rhod) \cdot \nabla \Gamma^n_\eps f}_{L^1(\bR^d, \mud)}\notag\\
& \le \norm{b_n-\nabla \ln \rhod}_{L^2(\bR^d, \mud)}\norm{\nabla\Gamma^n_\eps f}_{L^2(\bR^d, \mud)}\rightarrow 0  \quad \text{as } n\rightarrow \infty,\label{conv}
\end{align}
where the convergence follows from $b_n\to \nabla \ln \rhod$ in $L^2(\bR^d, \mud)$ and the $L^2(\bR^d, \mud)$-boundedness of $\{\nabla\Gamma^n_\eps f\}_{n\in\N}$ (thanks to \eqref{eq:Gamma-n-2}). This readily gives \eqref{eq:Gamma-n-conv}.

{\bf Step 2:} Let $v_1, v_2\in C([0,\infty); L^1(\bR^d, \mud))\cap L_+^\infty([0,\infty)\times \bR^d,\mud)$ be two weak solutions to \eqref{eq:v0} w.r.t.\ $\mud$. By setting $v:=v_1-v_2$ and $h:=\ln \frac{1+v_1}{1+v_2}$, we aim to show that $(\Gamma_\eps v)_t=\frac{1}{2}(\eps \Gamma_\eps h- h)$ in the weak form, i.e.
\begin{equation}\label{step 2 to show}
\int_0^\infty \int_{\bR^d}  \left(\psi_t \Gamma_\eps v+\frac{1}{2} \psi (\eps \Gamma_\eps h- h)\right) d\mud dt=0  \quad \forall\, \psi\in C^{\infty}_c((0,\infty)\times \bR^d).
\end{equation}
By construction, we have $v(0)=0$ and $v_t=\frac{1}{2}\Delta_{\mud} h$ in the weak form, i.e.
\begin{equation}\label{hehe}
\int_0^\infty \int_{\bR^d} \left(v\varphi_t +\frac{1}{2} h \Delta_{\mud} \varphi\right) d\mud dt=0,\quad \forall\varphi\in C^{\infty}_c((0,\infty)\times \bR^d),
\end{equation}
where we use the calculation in Remark~\ref{rem:Laplacian} twice. For any $\psi \in C^\infty_c((0,\infty)\times \bR^d)$, define $\varphi(t,\cdot):=\Gamma^n_\eps \psi(t,\cdot)$ for all $t\ge 0$. That is, $\varphi(t,\cdot)$ is the solution to the (linear) smoothed problem \eqref{A^n} (with $f(\cdot)=\psi(t,\cdot)\in C^\infty_c(\bR^d)$), which implies $\varphi(t,\cdot)\in C^\infty(\bR^d)$. By differentiating the equation $(\eps I-\Delta^n_{\mud}) \Gamma^n_\eps \psi(t,\cdot) =\psi(t,\cdot)$ with respect to $t$, we observe that
\begin{equation}\label{partial t}
\frac{\partial^m}{\partial t^m} \varphi(t,\cdot) =  \frac{\partial^m}{\partial t^m} \Gamma^n_\eps \psi(t,\cdot) = \Gamma^n_\eps \bigg(\frac{\partial^m}{\partial t^m} \psi(t,\cdot) \bigg),\quad \forall m\in\N. 
\end{equation}
As a result, $\varphi \in C^\infty((0,\infty)\times \bR^d)$. We will further show that $\varphi$ is compactly supported. Let $\text{supp}(\psi)\subseteq (0,\infty)\times\R^d$ be the compact support of $\psi$, $H\subset (0,\infty)$ be a bounded open interval containing the projection of $\text{supp}(\psi)$ onto $(0,\infty)$, and $B\subset \R^d$ be an open ball containing the projection of $\text{supp}(\psi)$ onto $\bR^d$. For each $t\in H$, consider the Dirichlet problem $(\eps I-\Delta^n_{\mud}) w(\cdot)=\psi(t,\cdot)$ in $B$ and $w=0$ on $\partial B$, and denote by $\hat\varphi(t,\cdot)\in C^\infty(B)$ the unique solution. Now, we extend the domain of $\hat\varphi$ to $(0,\infty)\times\bR^d$ by setting $\hat \varphi(t,y):=0$ whenever $t\in(0,\infty)\setminus H$ or $y\in B^c$. Then, $\hat\varphi$ is compactly supported and for any $t>0$, $\hat \varphi(t,\cdot)\in H^1_0(\bR^d, \mud)$ satisfies $(\eps I-\Delta^n_{\mud}) \hat \varphi(t,\cdot)=\psi(t,\cdot)$ on $\bR^d$. By the uniqueness of solutions to \eqref{A^n} in $H^1_0(\bR^d, \mud)$, we must have $\varphi(t,\cdot)=\hat \varphi(t,\cdot)$ for all $t>0$. Consequently, $\varphi=\Gamma^n_\eps\psi$ is compactly supported with $\text{supp}(\varphi)\subseteq H\times B$ for all $n\in\N$. With $\varphi=\Gamma^n_\eps\psi \in C_c^\infty((0,\infty)\times \bR^d)$, we obtain from \eqref{hehe} that
\begin{equation}\label{hehe'}
\int_0^\infty \int_{\bR^d} \bigg(v (\Gamma^n_\eps \psi)_t +\frac{1}{2} h \Delta_{\mud} (\Gamma^n_\eps \psi)\bigg) d\mud dt=0,\quad \forall \psi\in C^{\infty}_c((0,\infty)\times \bR^d).
\end{equation}
In view of \eqref{Laplacian}, \eqref{A^n}, and the equation $(\eps I-\Delta^n_{\mud})\Gamma^n_\eps \psi=\psi$, we have
\[
\Delta_{\mud}(\Gamma^n_\eps\psi) = \Delta^n_{\mud}(\Gamma^n_\eps\psi)-(b_n-\nabla\ln\rhod)\cdot\nabla\Gamma^n_\eps\psi=\eps \Gamma^n_\eps \psi- \psi-(b_n-\nabla\ln\rhod)\cdot\nabla\Gamma^n_\eps\psi.
\]
This, together with $(\Gamma^n_\eps \psi)_t =\Gamma^n_\eps \psi_t$ (by \eqref{partial t} with $m=1$), allows us to rewrite \eqref{hehe'} as
\begin{equation*}
\int_0^\infty \int_{\bR^d} \bigg(v \Gamma^n_\eps \psi_t +\frac{1}{2} h (\eps \Gamma^n_\eps \psi- \psi)\bigg) d\mud dt= \int_0^\infty \int_{\bR^d} \frac{1}{2} h (b_n-\nabla\ln\rhod)\cdot\nabla\Gamma^n_\eps\psi\ d\mud dt.
\end{equation*}
As $h$ is bounded and $\text{supp}(\nabla\Gamma^n_\eps\psi)\subset H\times E$ for all $n\in\N$, we see that the right-hand side above vanishes as $n\to\infty$, by using H\"{o}lder's inequality and arguing as in \eqref{conv} for the convergence. It follows that as $n\to\infty$, the previous equation becomes
\begin{equation}\label{step 2 to show'}
\int_0^\infty \int_{\bR^d} \left(v \Gamma_\eps \psi_t +\frac{1}{2} h (\eps \Gamma_\eps \psi- \psi)\right) d\mud dt=0, 
\end{equation}
where the convergence of the left-hand side follows from the boundedness of $v$ and $h$, $\text{supp}(\Gamma^n_\eps\psi)\subset H\times E$ for all $n\in\N$, and \eqref{eq:Gamma-n-conv}.
Finally, observe that $\Gamma_\eps$ is symmetric in the sense that 
\begin{equation}\label{sym}
\ab{\Gamma_\eps f, g}=\ab{f, \Gamma_\eps g},\quad \forall f,g\in L^2(\R^d,\mud). 
\end{equation}
Indeed, by taking the test function $\varphi\in H^1(\R^d,\mud)$ as $\Gamma_\eps g$ (resp.\ $\Gamma_\eps f$) in the weak form of $f = (\eps I-\Delta_{\mud})\Gamma_\eps f$ (resp.\ $g = (\eps I-\Delta_{\mud})\Gamma_\eps g$), we get 
\begin{equation}\label{sym proof}
\ab{f,\Gamma_\eps g}=\eps \ab{\Gamma_\eps f,\Gamma_\eps g}+\ab{\nabla \Gamma_\eps f,\nabla\Gamma_\eps g}=\ab{g,\Gamma_\eps f}, 
\end{equation}
where we use the calculation in Remark~\ref{rem:Laplacian}. Hence, \eqref{step 2 to show'} is equivalent to \eqref{step 2 to show}, as desired. 

{\bf Step 3:} Now we set out to prove \eqref{uniqueness}. 
Define $g^\eps:[0,\infty)\to\R$ by 
\begin{equation}\label{g^eps}
g^\eps(t):=\ab{\Gamma_\eps v(t), v(t)}=\eps\norm{\Gamma_\eps v(t)}^2_{L^2(\bR^d, \mud)}+\norm{\nabla \Gamma_\eps v(t)}^2_{L^2(\bR^d, \mud)},
\end{equation}
where the last equality follows by taking $f=g=v(t)$ in \eqref{sym proof}. 
Rearranging the equation $(\eps I-\Delta_{\mud})\Gamma_\eps h(t) = h(t)$ yields $\Delta_{\mud}(\Gamma_\eps h(t))=\eps \Gamma_\eps h(t)-h(t)$. By a direct calculation, this in turn implies $(\eps I-\Delta_{\mud})(\eps \Gamma_\eps h-h)=\Delta_{\mud} h$, i.e.\ $\Gamma_\eps (\Delta_{\mud} h)=\eps \Gamma_\eps h-h$. Hence, by recalling $(\Gamma_\eps v)_t=\Gamma_\eps v_t$ and $v_t=\frac{1}{2}\Delta_{\mud} h$, we have 
\begin{align*}
(g^\eps)'=\ab{(\Gamma_\eps v)_t, v}+\ab{\Gamma_\eps v, v_t}&=\left\langle\frac{1}{2}(\eps \Gamma_\eps h- h), v\right\rangle+\left\langle\Gamma_\eps v, \frac{1}{2}\Delta_{\mud} h\right\rangle\\
&=\left\langle\frac{1}{2}(\eps \Gamma_\eps h- h), v\right\rangle+\left\langle v, \frac{1}{2}(\eps \Gamma_\eps h- h)\right\rangle\\
& = \left\langle \eps \Gamma_\eps h- h, v\right\rangle = \ab{\eps h, \Gamma_\eps v} -\ab{h,v}\le \eps\ab{h, \Gamma_\eps v},
\end{align*}
where the third and fifth equalities follow from \eqref{sym} and the inequality is due to $hv = \ln(\frac{1+v_1}{1+v_2}) (v_1-v_2)\ge 0$. This, together with $g^\eps(0)=0$ (as $v(0)=0$), gives
\begin{align*}
g^\eps(t)\le \int_0^t \eps\ab{h(s), \Gamma_\eps v(s)} ds &\le \frac{\eps}{2} \norm{v}_{L^\infty(\R^d,\mud)} \int_0^t \left(1+\norm{\Gamma_\eps v(s)}^2_{L^2(\bR^d, \mud)}\right) ds\\
&\le \frac{1}{2} \norm{v}_{L^\infty(\R^d,\mud)} \left(\eps t + \int_0^t g^\eps(s)ds\right),
\end{align*}
where the second inequality follows from $|h|\le |v|$ (due to concavity of $\ln(\cdot)$ and $v_i\ge 0$) and Young's inequality, and the third inequality stems from \eqref{g^eps}. Then, by \eqref{g^eps} and Gr\"onwall's inequality,
\[
\eps\norm{\Gamma_\eps v}^2_{L^2(\bR^d, \mud)}+\norm{\nabla \Gamma_\eps v}^2_{L^2(\bR^d, \mud)}=g^\eps(t)\le \frac{\eps t}{2} \norm{v}_{L^\infty(\R^d,\mud)}\exp\left( \frac{t}{2} \norm{v}_{L^\infty(\R^d,\mud)}\right)\rightarrow 0\quad \text{as}\ \eps\rightarrow 0. 
\]
This implies $\eps \Gamma_\eps v, \nabla \Gamma_\eps v \rightarrow 0$ in $L^2(\bR^d, \mud)$, uniformly in $t$ on compact intervals. Thus, taking any test function $\varphi\in C^\infty_c ((0,\infty)\times \bR^d)$ in the weak form of $v=(\eps I -\Delta_{\mud}) \Gamma_\eps v$ gives
\[
\int_0^\infty \int_{\bR^d} v \varphi d\mud dt= \int_0^\infty \int_{\bR^d} \left(\eps \Gamma_\eps v \varphi +\nabla \Gamma_\eps v \cdot \nabla \varphi\right) d\mud dt \rightarrow 0\quad \hbox{as}\ \eps\to 0,
\]
which gives \eqref{uniqueness}.

\vskip 0.2in
\bibliography{refs}

\end{document}